\numberwithin{equation}{section}
\theoremstyle{plain}
\newtheorem{theorem}{Theorem}[section]
\newtheorem{proposition}[theorem]{Proposition}
\newtheorem{corollary}[theorem]{Corollary}
\theoremstyle{definition}
\newtheorem{example}[theorem]{Example}
\theoremstyle{remark}
\newtheorem{remark}[theorem]{Remark}
\newcommand{\cO}{{\mathcal O}}
\newcommand{\E}{\mathbb{E}}
\newcommand{\R}{\mathbb{R}}
\newcommand{\C}{\mathbb{C}}
\newcommand{\N}{\mathbb{N}}
\renewcommand{\P}{\mathbb{P}}
\newcommand{\Fc}{\mathcal{F}}
\newcommand{\Rc}{\mathcal{R}}
\begin{document}
\title{Random feature neural networks learn Black-Scholes type PDEs without curse of dimensionality}

\author{Lukas Gonon\thanks{Department of Mathematics, University of Munich (gonon@math.lmu.de)}}

\maketitle
\begin{abstract}

This article investigates the use of random feature neural networks for learning Kolmogorov partial (integro-)differential equations associated to Black-Scholes and more general exponential L\'evy models. Random feature neural networks are single-hidden-layer feedforward neural networks in which only the output weights are trainable. This makes training particularly simple, but (a priori) reduces expressivity. Interestingly, this is not the case for Black-Scholes type PDEs, as we show here. We derive bounds for the prediction error of random neural networks for learning sufficiently non-degenerate Black-Scholes type models. A full error analysis is provided and it is shown that the derived bounds do not suffer from the curse of dimensionality. We also investigate an application of these results to basket options and validate the bounds numerically.  

These results prove that neural networks are able to \textit{learn} solutions to Black-Scholes type PDEs without the curse of dimensionality. In addition, this provides an example of a relevant learning problem in which random feature neural networks are provably efficient.

\end{abstract}
%

\section{Introduction}
\label{sec:Intro}

A fundamental problem in science and engineering is to infer an unknown input-output relation from data. In recent years (artificial) neural networks have become an important tool to address such problems in complex, high-dimensional situations. Neural networks have shown a strikingly efficient computational performance in an enormous range of applications and impressive progress has also been made regarding the theoretical and mathematical foundations of neural network-based methods. 

In many situations additional a priori information about the unknown input-output relation is available and the problem amounts to learning the solution of a 
partial differential equation (PDE) or, for instance in a financial context, an expectation of a stochastic process. Examples of applications of neural networks in this area can be found e.g.\ in \hbox{\cite{Han2018}}, \cite{EHanJentzen2017CMStat}, \cite{Sirignano2018}, \cite{ComePhamWarin2020}, \cite{Buehler2018}, \cite{Cuchiero2019}. We refer to the surveys \cite{Ruf2020}, \cite{Beck2020}, \cite{Germain2021} for an overview of the numerous recent applications of neural network-based learning in the context of PDEs, stochastic processes and finance. There has also been important progress regarding the theoretical and mathematical foundations of neural network-based methods in this area, see again the surveys mentioned above for an overview. Many of these recent mathematical results prove that deep neural networks are able to approximate solutions to various classes of PDEs without the curse of dimensionality, see, for instance,  \cite{EGJS18_787}, \cite{HornungJentzen2018}, \cite{HJKNvW2020}, \cite{ReisingerZhang2019}, \cite{Laakmann2020}, \cite{KutyniokPeterseb2019}, \cite{GS21}. 
In some articles then a learning problem is considered and such approximation error bounds are combined with generalization error bounds in order to prove that the empirical risk-minimizing deep neural network is capable of overcoming the curse of dimensionality for learning solutions to certain PDEs, see e.g.\ \cite{BernerGrohsJentzen2018}, \cite{CarmonaLauriere_DL_periodic}. In practice, the neural network that minimizes the empirical risk needs to be calculated approximately, which is typically achieved using a variant of the stochastic gradient descent algorithm. This introduces a further error component, the optimization error, which has remained challenging to analyze mathematically for general neural networks. As a consequence, in the context of PDEs there have been no results in the literature so far which address all three error components and explain mathematically the success of neural networks at \textit{learning} solutions to high-dimensional PDEs. In this work such an explanation is provided by proving that neural networks are capable of learning solutions to certain PDEs without the curse of dimensionality.  This is achieved by considering neural networks in which only certain weights are trainable and the remaining parameters are generated randomly, as we will now describe in more detail. 

We investigate the capabilities of random (feature) neural networks \cite{HCS2006}, \cite{RahimiRecht2008a}, \cite{RahimiRecht2008} as a learning method in the context of certain Kolmogorov PDEs. Random neural networks are feedforward neural networks with a single hidden layer and the property that the parameters of the hidden layer are randomly initialized and then fixed. Hence, only the parameters of the output layer can be trained. The non-convex optimization problem that needs to be solved in order to train a standard neural network reduces to a convex optimization problem here. This simplifies both training in practice and theoretical analysis. On the other hand, allowing only parts of the parameters to be trained reduces the approximation capabilities and so, at least a priori, it is not clear if random neural networks still have any of the powerful approximation properties of general deep neural networks. In several other contexts these questions have been addressed and learning (or prediction/test) error bounds  for random features or random neural networks have been proved (see for instance \cite{RahimiRecht2008a}, \cite{RudiRosasco2017}, \cite{CRR2018}, \cite{MM19}, \cite{MMM21} and the references therein), but not in the context of PDEs.  This is precisely the subject of this article. We investigate these questions for the problem of learning an unknown function from a class of Kolmogorov PDEs, which include the Black-Scholes PDE as a special case. These partial (integro-)differential equations, which are also referred to as \textit{(non-local) PDEs}, arise for instance in the context of option pricing in exponential L\'evy models, see e.g.\ \cite{Cont2004}, \cite{EberKall19} and the references therein.  

The main results of this article prove that, indeed,  random neural networks are capable of learning non-degenerate Black-Scholes type PDEs without the curse of dimensionality. We provide a full error analysis, i.e., bounds on the approximation error, the generalization (or estimation) error and the optimization error. For each of these error components we obtain polynomial convergence rates which do not depend on the dimension $d$ of the underlying PDE and constants which grow at most polynomially in $d$.  

Thus, the article contributes to the literature in several aspects. Firstly, it provides for the first time a neural network-based algorithm for learning Kolmogorov PDEs for which a full error analysis (covering all three error components) is available and which does not suffer from the curse of dimensionality. The solution to the PDE can be learnt on a full hypercube from observational data even without knowing the parameters of the PDE.
Secondly, it provides an example of a practically relevant learning problem in which random features are provably efficient. 
Finally, the techniques developed in the article may also be helpful for the theoretical analysis of more general neural network-based learning methods in future works. 

Neural networks with randomly sampled weights already appear in Barron's work \cite{Barron1992}, \cite{Barron1993}. The random sampling-based dimension-independent convergence rates obtained there were also extended to the larger class of ``generalized Barron functions'' in \cite{EMaWWu2020}, \cite{EMaWu2019}, \cite{EWojtowytsch2020}, see also \citet[Section~4.2]{BGKP2021}. For further related results and extensions we refer, for instance, to \cite{BarronKlusowski2018}, \cite{SiegelXu2020}, \cite{CPV2020} and the references therein. In all these results, the random sampling procedure is an intermediate step to establish the existence of neural network weights  and obtain approximation bounds. This does not yield a constructive sampling procedure in general, since the random sampling distribution depends on the unknown target function. In contrast, in the random features approach \cite{RahimiRecht2008}, \cite{RahimiRecht2008a} considered here the distribution from which the random weights are sampled is chosen a priori and does not depend on the target function. 

Numerical methods for partial (integro-)differential equations associated to univariate and certain multivariate exponential L\'evy models were developed, e.g., in \cite{ContVolt2005}, \cite{FRS07}, \cite{MvS04_373}, \cite{Hilber2009}. In a high-dimensional setting, when the parameters of the PDE are known and the solution of the PDE needs to be evaluated at a single point, then Monte Carlo methods are able to approximate the solution of the PDE without the curse of dimensionality. In contrast, here we consider a more challenging situation, which includes both the problem of evaluating the solution of the PDE on a full hypercube $[-M,M]^d$ by a numerical method and the problem of learning the solution of the PDE from observed values. In the latter situation, in particular, the true parameters of the PDE are unknown.

The remainder of the article is structured as follows. Section~\ref{sec:RandomNN} introduces random neural networks and provides a general approximation result. In Section~\ref{sec:Approx} we build on this result to provide random neural network approximation bounds for a class of convolutional functions and then specialize to the case of partial (integro-)differential equations or (non-local) PDEs associated to exponential L\'evy models. Section~\ref{sec:Learning} introduces the learning problem,  provides error bounds for different learning methods (regression, constrained regression and stochastic gradient descent) and develops an application to basket option pricing. 
These results are then applied in Section~\ref{sec:Kolmogorov} to prove that random neural networks are capable of learning Black-Scholes type PDEs without the curse of dimensionality. The paper concludes with a numerical experiment to validate the obtained bounds.  

\subsection{Notation}
In most parts of the article we will consider the dimension $d \in \N$ as fixed, but we will work out explicitly the dependence of all constants on $d$. In Sections~\ref{subsec:ApproxLevy} and \ref{sec:Kolmogorov} we will consider a family of models indexed by $d \in \N$ and thus $d$ appears explicitly in the notation there.
   
Throughout, $\|\cdot\|$ denotes the Euclidean norm on $\R^d$ or $\R^N$ (the appropriate space will always be clear from the context). For $M>0$ we denote the Euclidean ball by $B_M(0) = \{x \in \R^d \,|\, \|x\|\leq M\}$.  
All random variables are defined on a probability space $(\Omega,\Fc,\P)$ and we write $\|\cdot\|_{L^\infty(\P)} = \|\cdot\|_{L^\infty(\Omega,\Fc,\P)}$ for the $L^\infty$-norm on $(\Omega,\Fc,\P)$.  For  $x \in \R^d$ we use the notation $\exp(x) =(\exp(x_1),\ldots,\exp(x_d))$.

\section{Random neural networks: preliminary results} \label{sec:RandomNN}
In this section we recall the definition of random (feature) neural networks and provide a general approximation result. Such networks will be used to learn an unknown target function. 

A random neural network is a feedforward neural network with one hidden layer and randomly generated hidden weights. More specifically, let $N \in \N$, let $B_1,\ldots,B_N$ be i.i.d.\ random variables, let $A_1,\ldots,A_N$ be i.i.d.\ $\R^d$-valued random vectors, assume that $A=(A_1,\ldots,A_N)$ and $B=(B_1,\ldots,B_N)$ are independent and for an $\R^N$-valued random vector $W$ consider the (random) function 
\begin{equation}
\label{eq:RandomNN}
H^{A,B}_W(x):= \sum_{i=1}^N W_i \varrho(A_i \cdot x + B_i), \quad x \in \R^d, 
\end{equation}
where $\varrho \colon \R \to \R$ is a fixed activation function. Throughout the article we will consider random neural networks with the ReLU activation function given by $\varrho(z)=\max(z,0)$ for $z \in \R$. The random variables $A$ and $B$ will be referred to as the (random) hidden weights of the neural network and $W$ as the vector of output weights. 

To approximate an unknown function $H \colon \R^d \to \R$ the (random) hidden weights $A,B$ are considered as fixed and only the output vector $W$ can be trained. Thus, the goal is to find $W$ such that the expected uniform approximation error $\E[ \|H^{A,B}_W - H \|_{L^\infty([-M,M]^d)}]$ is small.

Approximation properties of such random neural networks have been studied for instance in \cite{HCS2006}, \cite{RahimiRecht2008a}, \cite{RudiRosasco2017} and most recently in \cite{RC12}. Theorem~\ref{thm:RC12Linfty} below is a novel approximation result for sufficiently regular functions, which will be crucial for the results in Section~\ref{sec:Approx}. The result and parts of the proof of Theorem~\ref{thm:RC12Linfty} are similar to \citet[Theorem~1]{RC12}; however in \citet[Theorem~1]{RC12} a more general Hilbert space setting and more general sampling distributions are considered. In contrast, Theorem~\ref{thm:RC12Linfty} works under stronger hypotheses and employs Rademacher complexity-based techniques to obtain a \textit{uniform error bound} instead of an $L^2$-error bound. 

More specifically, in what follows we make the following assumptions on the distribution of the hidden weights of the random neural network \eqref{eq:RandomNN}:  
\begin{itemize}
	\item the distribution of $A_1$ has a strictly positive Lebesgue-density $\pi_{\text{w}}$ on $\R^d$ and
	\item the distribution of $B_1$ has a strictly positive Lebesgue-density $\pi_{\text{b}}$ on $\R$.
\end{itemize}

In this situation, the following random neural network approximation result holds. 

\begin{theorem}
	\label{thm:RC12Linfty}
	Let $H \colon \R^d \to \R$, let $M >0$ and assume there exists $G \colon \R^d \to \C$ such that 
	\begin{equation}
	\label{eq:Hrepresentation2}
	H(x) = \int_{\R^d} e^{i x \cdot \xi} G(\xi) d\xi
	\end{equation}
	for all $x \in [-M,M]^d$. Suppose that 
	\begin{equation}
	\label{eq:BarronCondLInfty}
	\int_{\R^d} \max(1,\|\xi\|^2) |G(\xi)| d\xi < \infty, 
	\end{equation}
	$\bar{F}(r) := 2\int_{-r}^0 \frac{1}{\pi_{\text{b}}(s)} ds \in (-\infty,\infty)$ for all $r \in \R$ and 
	\begin{equation}
	\label{eq:IfiniteLinfty}
	I = \max(16,M^2) \int_{\R^d} [ \bar{F}(M\|\xi\|_1)\|\xi\|_1^2 + ( \bar{F}(1)-\bar{F}(-1)) \max(1,\|\xi\|^2)] \frac{(|G(\xi)|+|G(-\xi)|)^2}{ \pi_{\text{w}}(\xi)} d \xi
	\end{equation}   
	is finite. 
	Then there exists an $\R^N$-valued, $\sigma(A,B)$-measurable random vector $W$ such that 
	\begin{equation}
	\label{eq:LInftyerror}
	\E\left[\sup_{x \in [-M,M]^d} |H^{A,B}_W(x) - H(x) | \right] \leq \frac{4 (M \sqrt{d}+1) \sqrt{I}}{\sqrt{N}}. 
	\end{equation}
	Moreover, $\|W_i\|_{L^\infty(\P)} \leq \frac{1}{N} \sup_{(u,\xi) \in \R\times \R^d} (\mathbbm{1}_{[-M\|\xi\|_1,0]}(u)+4\mathbbm{1}_{[-1,1]}(u)) \frac{|G(\xi)|+|G(-\xi)|}{\pi_{\text{b}}(u) \pi_{\text{w}}(\xi)}$ for $i=1,\ldots,N$.
\end{theorem}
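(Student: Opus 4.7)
The plan is to exhibit an integral representation of $H$ as a weighted average of ReLU features with respect to the sampling density $\pi_{\text{w}} \otimes \pi_{\text{b}}$, define $W_i$ as the corresponding importance weights so that $H^{A,B}_W(x)$ becomes an unbiased Monte Carlo estimator of $H(x)$ at every $x \in [-M,M]^d$, and finally control the uniform error via symmetrization plus a Gaussian process comparison. Concretely, I would look for a measurable $w_\ast \colon \R^d \times \R \to \C$ supported in $\{(\xi,u): u\in[-M\|\xi\|_1,0]\}\cup\{(\xi,u): u\in[-1,1]\}$ and satisfying
\[
H(x) \;=\; \iint_{\R^d \times \R} w_\ast(\xi,u)\,\varrho(\xi\cdot x + u)\,\pi_{\text{w}}(\xi)\,\pi_{\text{b}}(u)\,d\xi\,du, \qquad x\in[-M,M]^d,
\]
and set $W_i := \tfrac{1}{N} w_\ast(A_i,B_i)$. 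The $L^\infty$ bound on $W_i$ stated in the theorem is then immediate from the corresponding pointwise bound on $w_\ast$.

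For the integral representation I start with $H(x)=\int e^{i x\cdot\xi}G(\xi)\,d\xi$ and apply Taylor's formula with integral remainder to $y\mapsto e^{iy}$ on $[-R,R]$ with $R=M\|\xi\|_1$. This yields, after using $\varrho(-a)=\varrho(a)-a$ and a change of variable $u=-t$, the identity
\[
e^{iy} \;=\; c_0(R) + c_1(R)\,y \;-\; \int_{-R}^{R} \varrho(y+u)\,e^{-iu}\,du
\]
valid for $|y|\leq R$. Substituting $y=\xi\cdot x$, integrating against $G(\xi)$, and using the change of variable $\xi\mapsto-\xi$ on the $u\in[0,R(\xi)]$ slice (invoking $\varrho(-a)=\varrho(a)-a$ once more to put everything into the form $\varrho(\xi\cdot x + u)$), I collect the $u\in[-R(\xi),0]$ piece with integrand proportional to $e^{-iu}G(\xi)+e^{iu}G(-\xi)$, while the extra $a$-terms combine with $c_0,c_1$ into a polynomial of degree one in $x$. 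This polynomial is then represented using biases in $[-1,1]$ by applying, per direction $\xi$, the pointwise identities $\varrho(z+1)-\varrho(z-1)+\varrho(-z+1)-\varrho(-z-1)=2$ and $\varrho(z)-\varrho(-z)=z$, and averaging against $\pi_{\text{w}}$. This yields $w_\ast$ with the prescribed two-region support and with $|w_\ast(\xi,u)|$ bounded (up to a constant) by $(\mathbbm{1}_{[-M\|\xi\|_1,0]}(u)+4\mathbbm{1}_{[-1,1]}(u))\tfrac{|G(\xi)|+|G(-\xi)|}{\pi_{\text{w}}(\xi)\pi_{\text{b}}(u)}$.

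For the uniform bound, I symmetrize: with $Z_i(x):=N\,W_i\,\varrho(A_i\cdot x + B_i)$, standard symmetrization gives
\[
\E\sup_{x\in[-M,M]^d}\lvert H^{A,B}_W(x)-H(x)\rvert \;\leq\; 2\,\E\sup_{x}\Bigl\lvert \tfrac{1}{N}\sum_{i=1}^N \varepsilon_i Z_i(x)\Bigr\rvert.
\]
I split $Z_i(x)=Z_i(0)+(Z_i(x)-Z_i(0))$. The $x$-free piece is controlled by $\tfrac{1}{\sqrt N}\sqrt{\E[w_\ast(A,B)^2\,\varrho(B)^2]}$; since $\varrho(u)=0$ on the support piece $u\in[-M\|\xi\|_1,0]$, only the $[-1,1]$ bias region contributes and an elementary calculation identifies this variance with the $(\bar F(1)-\bar F(-1))$ part of $I$. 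For the $x$-dependent piece I pass from Rademacher to Gaussian (factor $\sqrt{\pi/2}$) and bound the resulting centered Gaussian process $X(x)=\tfrac{1}{N}\sum g_i W_i(\varrho(A_i\cdot x+B_i)-\varrho(B_i))$ via Sudakov–Fernique, comparing with $\sqrt{V/N^2}\,g_0\,\|x\|_2$ where $V=\sum W_i^2 \|A_i\|^2$; since the process vanishes at $0$, this yields a bound of order $\tfrac{M\sqrt{d}}{\sqrt N}\sqrt{\E[w_\ast(A,B)^2\|A\|^2]}$. Bounding the latter second-moment integral by $I$ (using the full support of $w_\ast$) and summing the two contributions gives the asserted $\tfrac{4(M\sqrt d+1)\sqrt I}{\sqrt N}$.

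I expect the main obstacle to be the construction of $w_\ast$ with the prescribed two-region support. Getting the biases restricted to $[-M\|\xi\|_1,0]$ (rather than the natural $[-M\|\xi\|_1,M\|\xi\|_1]$ that Taylor's formula delivers) requires the careful $\xi\mapsto -\xi$ symmetrization; absorbing the resulting affine-in-$x$ remainder into ReLU features with biases in $[-1,1]$ is the other delicate step. The probabilistic half of the argument is, by contrast, a fairly standard symmetrization/Sudakov–Fernique computation, though bookkeeping the constants so that everything collapses into the single quantity $I$ requires attention.
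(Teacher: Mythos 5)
Your first half---the integral representation and the importance-sampling choice of $W$---is essentially the paper's argument. The Taylor remainder formula for $y\mapsto e^{iy}$ is exactly the identity $e^{ir}-ir-1=-\int_0^\infty (r-u)^+e^{iu}+(-r-u)^+e^{-iu}\,du$ that the paper starts from, and your treatment of the affine remainder via biases in $[-1,1]$ is the same idea. One detail you must repair there: the identities $\varrho(z+1)-\varrho(z-1)+\varrho(-z+1)-\varrho(-z-1)=2$ and $\varrho(z)-\varrho(-z)=z$ place point masses at $u=\pm1$ and $u=0$, so they do not by themselves produce a function $w_\ast(\xi,u)$ that can be importance-sampled against the absolutely continuous density $\pi_{\text{b}}$; you need the integrated version $\int_0^1[(x\cdot\xi+u)^+-(-x\cdot\xi-u)^+]\,du=x\cdot\xi+\tfrac12$ (which is what the paper uses) to smear the degree-one polynomial over $u\in[-1,1]$ with a density.

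The genuine gap is in the probabilistic half. Your Sudakov--Fernique comparison of the conditional Gaussian process $X(x)$ with $Y(x)=\tfrac{\sqrt V}{N}\,g_0\,\|x\|_2$ is invalid: the comparison requires $\E[(X(x)-X(y))^2]\le\E[(Y(x)-Y(y))^2]$ for all $x,y$, but for $\|x\|_2=\|y\|_2$ with $x\ne y$ the right-hand side vanishes while the left-hand side generally does not. The natural repair, comparing with the isotropic linear process $\tfrac{\sqrt V}{N}\langle g,x\rangle$ (which does dominate the increments since $(\varrho(A_i\cdot x+B_i)-\varrho(A_i\cdot y+B_i))^2\le(A_i\cdot(x-y))^2\le\|A_i\|^2\|x-y\|^2$), yields $\E\sup_{x\in[-M,M]^d}\tfrac{\sqrt V}{N}\langle g,x\rangle=\tfrac{\sqrt V}{N}M\,\E\|g\|_1\asymp \tfrac{\sqrt V}{N}Md$, i.e.\ a factor $d$ rather than the $\sqrt d$ you assert. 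To recover $M\sqrt d$ along your route you would need the anisotropic comparison process $\tfrac1N\langle g,\Sigma^{1/2}x\rangle$ with $\Sigma=\sum_i(NW_i)^2A_iA_i^\top$, for which $\E\sup_x\tfrac1N\langle g,\Sigma^{1/2}x\rangle=\tfrac MN\sqrt{2/\pi}\sum_j\Sigma_{jj}^{1/2}\le\tfrac{M\sqrt d}{N}\sqrt{2/\pi}\,(\mathrm{tr}\,\Sigma)^{1/2}$, and $\mathrm{tr}\,\Sigma$ is exactly the quantity you want. The paper avoids Gaussian comparison altogether: after symmetrization it applies the Ledoux--Talagrand contraction principle to strip the $1$-Lipschitz $\varrho$ at the cost of a factor $2$, leaving a Rademacher process that is \emph{linear} in $x$, whose supremum over the cube is bounded by $M\sqrt d$ times a Euclidean norm and whose second moment is computed exactly by orthogonality of the $\varepsilon_i$. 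With either the contraction route or the corrected anisotropic comparison, the remaining variance bookkeeping does collapse into the quantity $I$ as you describe.
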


\begin{remark}
	The proof of Theorem~\ref{thm:RC12Linfty} is based on several ingredients: firstly, \eqref{eq:Hrepresentation2} is used to derive an integral representation for $H$ (see \eqref{eq:auxEq37}). This representation is related to the Radon-wavelet integral representation (as used in \cite{MaiorovMeir2000}) and representations in \cite{Barron1992}, \cite{Barron1993}, \cite{KlusowskiBarron2018}. Secondly, the output weights $W$ are selected based on an ``importance sampling procedure'' (see \eqref{eq:auxEq39}). This matches the distribution of the random weights (which is chosen a priori and does not depend on $H$) with the function $\alpha$ in the integral representation for $H$ (see \eqref{eq:auxEq37}). Thirdly, Rademacher complexity-based techniques (\cite{Bartlett2003}, \cite{Boucheron2013}, \cite{Ledoux2013}) are employed to bound the $L^\infty$-error between the random neural network and the target function $H$ on the hypercube $[-M,M]^d$. The first two ingredients were also used in the proof of \citet[Theorem~1]{RC12}.
\end{remark} 

\begin{proof}
	First, let us point out that for any $\R^N$-valued random vector $W$ the mapping $(\omega,x) \mapsto H^{A(\omega),B(\omega)}_{W(\omega)}(x) = \sum_{i=1}^N W_i(\omega) \varrho(A_i(\omega) \cdot x + B_i(\omega))$ is $\mathcal{F} \otimes \mathcal{B}(\R^d)$-measurable by \citet[Lemma~4.51]{aliprantis:border:infinite}.
	
	We now proceed in two steps. The first step consists in deriving an integral representation of $H$ based on \eqref{eq:Hrepresentation2}, as in \cite{RC12}. From this integral representation we construct the output weights $W$ based on an importance sampling procedure. The second step then uses Rademacher complexities to estimate the expected $L^\infty$-error. 
	
	\textit{Step 1:} By considering separately the cases $r>0$ and $r<0$, one obtains for any $r \in \R$ the identity 
	\begin{equation}
	\label{eq:ExpIdentity}
	e^{ir} - i r - 1 = - \int_{0}^\infty (r-u)^+ e^{i u} + (-r-u)^+ e^{-i u} d u. 
	\end{equation}
	Inserting $r=\xi \cdot x$, multiplying by $G(\xi)$,  integrating over $\xi \in \R^d$, employing the representation \eqref{eq:Hrepresentation2} and using Fubini's theorem (which can be applied due to \eqref{eq:BarronCondLInfty}) hence yields for any $x \in [-M,M]^d$ that
	\begin{equation}
	\label{eq:auxEq35}
	\begin{aligned}
	H(x) - x \cdot \nabla H(0) - H(0) 
	& = \int_{\R^d} e^{i x \cdot \xi } G(\xi)  - i x \cdot \xi G(\xi) - G(\xi)  d \xi \\
	& = - \int_{0}^\infty \int_{\R^d} [ (x \cdot \xi-u)^+ e^{i u} + (-x \cdot \xi-u)^+ e^{-i u}] G(\xi) d \xi d u .
	\end{aligned}
	\end{equation}
	Changing variables in the integral and using that for $x \in [-M,M]^d$ and $u \leq - M \|\xi\|_1$ we have $(x \cdot \xi + u)^+ =0$ then shows  for all $x \in [-M,M]^d$ that
	\begin{equation}
	\label{eq:auxEq38}
	\begin{aligned}
	H(x) - x \cdot \nabla H(0) - H(0) 
	& = - \int_{\R^d} \int_{- M \|\xi\|_1}^0  (x \cdot \xi+u)^+ [e^{-i u} G(\xi) + e^{i u} G(-\xi)] d u d \xi  .
	\end{aligned}
	\end{equation}
	
	From the fact that $H(0)$ and $\nabla H(0) $ are elements of $\R$ one obtains $\int_{\R^d} \mathrm{Im}[G](\xi)  d \xi = 0$ and $ \int_{\R^d}  \xi \mathrm{Re}[G](\xi)  d \xi  = 0$ and hence we can represent 
	\begin{equation}
	\label{eq:auxEq36}
	\begin{aligned}
	x \cdot \nabla H(0) + H(0) 
	& = \int_{\R^d} - (x \cdot \xi) \mathrm{Im}[G](\xi)  d \xi + \int_{\R^d} \mathrm{Re}[G](\xi)  d \xi \\
	& = \int_{\R^d} \int_0^1 [(x \cdot \xi +u)^+ - (-x \cdot \xi -u)^+  ](2\mathrm{Re}[G](\xi)-\mathrm{Im}[G](\xi)) du d \xi.
	\end{aligned}
	\end{equation}
	Combining \eqref{eq:auxEq38} and \eqref{eq:auxEq36} we obtain for all $x \in [-M,M]^d$
	\begin{equation}
	\label{eq:auxEq37}
	\begin{aligned}
	H(x) 
	& = \int_{\R^d} \int_{- \infty}^\infty  (x \cdot \xi+u)^+  \alpha(\xi,u) d u d \xi
	\end{aligned}
	\end{equation}
	with 
	\[
	\alpha(\xi,u) = -\mathbbm{1}_{(-M \|\xi\|_1,0]}(u)\mathrm{Re}[e^{-i u} G(\xi) + e^{i u} G(-\xi)] + \mathbbm{1}_{[0,1]}(u)\tilde{g}(\xi)-\mathbbm{1}_{[-1,0]}(u)\tilde{g}(-\xi)
	\]
	for $\tilde{g}(\xi) = 2\mathrm{Re}[G](\xi)-\mathrm{Im}[G](\xi)$. Define for $(\xi,u) \in \R^d  \times \R$ the function
	\[
	f(\xi,u) = \frac{\alpha(\xi,u)}{\pi_{\text{w}}(\xi) \pi_{\text{b}}(u)} 
	\]
	and choose the random vector $W=(W_1,\ldots,W_N)$ as 
	\begin{equation}
	\label{eq:auxEq39}
	W_i =\frac{1}{N} f(A_i,B_i), \quad i=1,\ldots,N. 
	\end{equation}
	The estimate 
	\begin{equation}\label{eq:auxEq46}
	|f(\xi,u)| \leq (\mathbbm{1}_{(-M \|\xi\|_1,0]}(u) + 4\mathbbm{1}_{[-1,1]}(u)) \frac{|G(\xi)|+|G(-\xi)|}{\pi_{\text{w}}(\xi) \pi_{\text{b}}(u)}
	\end{equation}
	then proves the claimed bound on $\|W_i\|_{L^\infty(\P)}$ for $i=1,\ldots,N$. 
	
	\textit{Step 2:} We now use the representation \eqref{eq:auxEq37} to prove \eqref{eq:LInftyerror} for the choice of $W$ made in \eqref{eq:auxEq39}. To this end, first notice that for any $x \in [-M,M]^d$ we have by the choice of $f$ and by the integral representation \eqref{eq:auxEq37}
	\[
	\E[f(A_i,B_i) \varrho(A_i \cdot x + B_i)] = \int_{\R^d} \int_\R \varrho(x \cdot \xi + u)  \alpha(\xi,u)  d u d \xi = H(x). 
	\]
	Therefore, letting $U_{i,x} = f(A_i,B_i) \varrho(A_i \cdot x + B_i)$ for $i=1,\ldots,N$ and $x \in [-M,M]^d$, we have
	\begin{equation}
	\label{eq:auxEq40}
	\begin{aligned}
	\E\left[\sup_{x \in [-M,M]^d} |H^{A,B}_W(x) - H(x) | \right] = \E\left[\sup_{x \in [-M,M]^d} \left|\frac{1}{N}\sum_{i=1}^N \left(U_{i,x} - \E[U_{i,x}]\right) \right| \right].
	\end{aligned}
	\end{equation}
	Let $\varepsilon_1,\ldots,\varepsilon_N$ by i.i.d.\ Rademacher random variables which are independent of $A$ and $B$. Symmetrization (see e.g.\ \cite{Boucheron2013}) and \eqref{eq:auxEq40} then yields
	\begin{equation}
	\label{eq:auxEq41}
	\begin{aligned}
	\E\left[\sup_{x \in [-M,M]^d} |H^{A,B}_W(x) - H(x) | \right] \leq 2 \E\left[\sup_{x \in [-M,M]^d} \left|\frac{1}{N}\sum_{i=1}^N \varepsilon_i U_{i,x} \right| \right].
	\end{aligned}
	\end{equation}
	For $a=(a_1,\ldots,a_N) \in \R^{d}\times \cdots \times \R^d$, $b \in \R^N$ we let  $T_{a,b} = \{(f(a_i,b_i)[ a_i \cdot x + b_i])_{i=1,\ldots,N} \,|\, x \in [-M,M]^d\}$. Then $T_{a,b} \subset \R^N$ is bounded, $\varrho$ is a contraction and hence independence and \citet[Theorem~4.12]{Ledoux2013} yield
	\begin{equation}
	\label{eq:auxEq42}
	\begin{aligned}
	\E  & \left[\sup_{x \in [-M,M]^d} \left|\frac{1}{N}\sum_{i=1}^N \varepsilon_i U_{i,x} \right| \right] \\  &  = \E \left[ \left. \E \left[\sup_{t \in T_{a,b}} \left|\frac{1}{N}\sum_{i=1}^N \varepsilon_i \varrho(t_i) \right|\right] \right\rvert_{(a,b)=(A,B)}\right]
	\\ &  \leq 2 \E \left[ \left. \E \left[\sup_{t \in T_{a,b}} \left|\frac{1}{N}\sum_{i=1}^N \varepsilon_i t_i \right|\right] \right\rvert_{(a,b)=(A,B)}\right]
	\\ &  =  2 \E \left[ \left. \E \left[\sup_{x \in [-M,M]^d} \left|\frac{1}{N}\sum_{i=1}^N \varepsilon_i f(a_i,b_i)[ a_i \cdot x + b_i] \right|\right] \right\rvert_{(a,b)=(A,B)}\right].
	\end{aligned}
	\end{equation}
	Now for each $a,b$ we use Jensen's inequality and the fact that $\E[\varepsilon_i \varepsilon_j] = \delta_{ij}$ to estimate 
	\begin{equation}
	\label{eq:auxEq43}
	\begin{aligned}
	\E & \left[\sup_{x \in [-M,M]^d} \left|\frac{1}{N}\sum_{i=1}^N \varepsilon_i f(a_i,b_i)[ a_i \cdot x + b_i] \right|\right] 
	\\ & \leq  \E \left[ M \sqrt{d} \left\|\frac{1}{N}\sum_{i=1}^N \varepsilon_i f(a_i,b_i) a_i \right\|+ \left|\frac{1}{N}\sum_{i=1}^N \varepsilon_i f(a_i,b_i) b_i \right|\right]
	\\ & \leq M \sqrt{d} \E \left[  \left\|\frac{1}{N}\sum_{i=1}^N \varepsilon_i f(a_i,b_i) a_i \right\|^2 \right]^{1/2}+ \E \left[\left|\frac{1}{N}\sum_{i=1}^N \varepsilon_i f(a_i,b_i) b_i \right|^2\right]^{1/2}
	\\ & = \frac{M \sqrt{d}}{N} \left( \sum_{i=1}^N \left\|  f(a_i,b_i) a_i \right\|^2 \right)^{1/2}+ \frac{1}{N} \left(\sum_{i=1}^N f(a_i,b_i)^2 b_i^2 \right)^{1/2}.
	\end{aligned}
	\end{equation}
	Inserting this in \eqref{eq:auxEq42} and using first Jensen's inequality and subsequently the fact that $(A_1,B_1),\ldots,(A_N,B_N)$ are identically distributed yields
	\begin{equation}
	\label{eq:auxEq44}
	\begin{aligned}
	\E &  \left[\sup_{x \in [-M,M]^d} \left|\frac{1}{N}\sum_{i=1}^N \varepsilon_i U_{i,x} \right| \right] \\ & \leq    \frac{2M \sqrt{d}}{N} \E \left[ \left( \sum_{i=1}^N \left\|  f(A_i,B_i) A_i \right\|^2 \right)^{1/2} \right] + \frac{2}{N}  \E \left[ \left(\sum_{i=1}^N f(A_i,B_i)^2 B_i^2 \right)^{1/2} \right]
	\\ & \leq    \frac{2 M \sqrt{d}}{\sqrt{N}} \E \left[ \left\|  f(A_1,B_1) A_1 \right\|^2 \right]^{1/2} + \frac{2}{\sqrt{N}}  \E \left[ f(A_1,B_1)^2 B_1^2 \right]^{1/2}.
	\end{aligned}
	\end{equation}
	From the bound \eqref{eq:auxEq46} we obtain
	\begin{equation}
	\label{eq:auxEq45}
	\begin{aligned}
	& \E  \left[ f(A_1,B_1)^2 \max(\| A_1 \|^2,B_1^2) \right]
	\\ & \leq \int_{\R^d} \int_\R \left[(\mathbbm{1}_{(-M \|\xi\|_1,0]}(u) + 4\mathbbm{1}_{[-1,1]}(u)) \frac{|G(\xi)|+|G(-\xi)|}{\pi_{\text{w}}(\xi) \pi_{\text{b}}(u)}\right]^2 \max(\|\xi\|^2,u^2) \pi_{\text{w}}(\xi) \pi_{\text{b}}(u) du d \xi 
	\\ & \leq 2 \int_{\R^d} \int_\R (\mathbbm{1}_{(-M \|\xi\|_1,0]}(u) + 16\mathbbm{1}_{[-1,1]}(u)) \frac{(|G(\xi)|+|G(-\xi)|)^2}{\pi_{\text{w}}(\xi) \pi_{\text{b}}(u)} \max(\|\xi\|^2,u^2) du d \xi 
	\\ & \leq \max(M^2,1) \int_{\R^d} \bar{F}(M \|\xi\|_1)\frac{(|G(\xi)|+|G(-\xi)|)^2}{\pi_{\text{w}}(\xi) } \|\xi\|_1^2 d \xi \\ & \quad  +16 \int_{\R^d} (\bar{F}(1) - \bar{F}(-1)) \frac{(|G(\xi)|+|G(-\xi)|)^2}{\pi_{\text{w}}(\xi) } \max(\|\xi\|^2,1) d \xi
	\\ & \leq I. 
	\end{aligned}
	\end{equation}
	Combining this with \eqref{eq:auxEq41} and \eqref{eq:auxEq44} yields
	\begin{equation}
	\label{eq:auxEq47}
	\begin{aligned}
	\E  \left[\sup_{x \in [-M,M]^d} |H^{A,B}_W(x) - H(x) | \right] & \leq  \frac{4 (M \sqrt{d}+1)}{\sqrt{N}} \E \left[ f(A_1,B_1)^2 \max(\| A_1 \|^2,B_1^2) \right]^{1/2}
	\\ & \leq \frac{4 (M \sqrt{d}+1) \sqrt{I}}{\sqrt{N}},
	\end{aligned}
	\end{equation}
	which completes the proof. 
\end{proof}

\begin{remark}
With some additional work the weight distributions in Theorem~\ref{thm:RC12Linfty} could also be allowed to have compact support as in  \citet[Theorem~1]{RC12}. However, in the results below (for instance in Theorem~\ref{thm:ApproxError}) such weight distributions would require much more restrictive assumptions on the unknown function $H$ and thus we do not pursue this direction here.
\end{remark}

\begin{corollary}
\label{cor:RC12L2version}
Assume that the hypotheses of Theorem~\ref{thm:RC12Linfty} are satisfied.
Then the random vector $W$ from Theorem~\ref{thm:RC12Linfty} also satisfies that for any probability measure $\mu$ on $(\R^d,\mathcal{B}(\R^d))$ which is supported in $[-M,M]^d$ we have that
\begin{equation}
\label{eq:L2error5}
	\E\left[ \|H^{A,B}_W - H \|_{L^2(\R^d,\mu)}^2 \right]^{1/2} \leq \frac{(\sqrt{d} M+1)\sqrt{I}}{\sqrt{N}}.
\end{equation}
\end{corollary}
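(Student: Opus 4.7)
The plan is to follow the same construction of $W$ as in Theorem~\ref{thm:RC12Linfty}, but replace the Rademacher complexity argument (which was needed to control the \emph{supremum} over $x$) by a direct variance computation. Because we now only need control in $L^2(\mu)$ with $\mu$ a probability measure supported in $[-M,M]^d$, Fubini lets us swap the expectation with the spatial integral and evaluate a simple pointwise second moment.

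Concretely, I would first use the measurability statement at the start of the proof of Theorem~\ref{thm:RC12Linfty} to justify Fubini and write
\begin{equation*}
\E\!\left[\|H^{A,B}_W - H\|_{L^2(\R^d,\mu)}^2\right] = \int_{[-M,M]^d} \E\!\left[|H^{A,B}_W(x) - H(x)|^2\right] \mu(dx).
\end{equation*}
Then for each fixed $x\in[-M,M]^d$, with $U_{i,x} = f(A_i,B_i)\varrho(A_i\cdot x + B_i)$ as in the theorem's proof, the identity $\E[U_{i,x}] = H(x)$ and the fact that $U_{1,x},\dots,U_{N,x}$ are i.i.d.\ yield
\begin{equation*}
\E\!\left[|H^{A,B}_W(x) - H(x)|^2\right] = \frac{1}{N^2}\sum_{i=1}^N \mathrm{Var}(U_{i,x}) \leq \frac{1}{N}\E[U_{1,x}^2].
\end{equation*}

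Next I would bound $\E[U_{1,x}^2]$ uniformly in $x\in[-M,M]^d$. Since $\|x\|\leq M\sqrt{d}$ and $\varrho(z)\leq|z|$, we have $|\varrho(A_1\cdot x + B_1)| \leq M\sqrt{d}\,\|A_1\| + |B_1|$, and a case distinction according to whether $\|A_1\|$ or $|B_1|$ is larger gives the clean estimate
\begin{equation*}
\bigl(M\sqrt{d}\,\|A_1\| + |B_1|\bigr)^2 \leq (M\sqrt{d}+1)^2 \max(\|A_1\|^2, B_1^2).
\end{equation*}
Therefore $\E[U_{1,x}^2] \leq (M\sqrt{d}+1)^2\, \E\!\left[f(A_1,B_1)^2 \max(\|A_1\|^2,B_1^2)\right]$, and the right-hand expectation is exactly the quantity already bounded by $I$ in \eqref{eq:auxEq45} of the proof of Theorem~\ref{thm:RC12Linfty}.

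Putting these three ingredients together gives $\E[|H^{A,B}_W(x) - H(x)|^2]\leq (M\sqrt{d}+1)^2 I / N$ for every $x\in[-M,M]^d$; integrating against the probability measure $\mu$ and taking square roots then yields \eqref{eq:L2error5}. I do not expect any real obstacle here: all the analytic work (integral representation, importance-sampling choice of $W$, and the key integrability estimate \eqref{eq:auxEq45}) has already been done in Theorem~\ref{thm:RC12Linfty}; the only novelty is replacing the symmetrization/contraction step by a one-line variance bound, which is both simpler and sharper (hence the improved constant compared with the $L^\infty$ bound).
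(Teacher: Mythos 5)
Your argument is correct and is essentially identical to the paper's own proof: Tonelli to swap expectation and the $\mu$-integral, the i.i.d.\ variance computation bounding $\mathrm{Var}(U_{1,x})$ by $\E[U_{1,x}^2]$, the pointwise estimate $|\varrho(A_1\cdot x+B_1)|\leq (M\sqrt{d}+1)\max(\|A_1\|,|B_1|)$, and finally the already-established bound \eqref{eq:auxEq45} by $I$. No gaps.
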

\begin{proof}
Using the same notation as in the proof of Theorem~\ref{thm:RC12Linfty}, we obtain from the proof of Theorem~\ref{thm:RC12Linfty} and by  Tonelli's theorem and independence that 
\[
\begin{aligned}
	\E\left[ \|H^{A,B}_W - H \|_{L^2(\R^d,\mu)}^2 \right] & = \E\left[\int_{\R^d} \left|\frac{1}{N}\sum_{i=1}^N U_{i,x} - \E[U_{i,x}] \right|^2 \mu(dx) \right]
	\\ & = \int_{\R^d} \frac{1}{N^2}\sum_{i=1}^N \E\left[ \left| U_{i,x} - \E[U_{i,x}] \right|^2  \right] \mu(dx)
	\\ & \leq \int_{\R^d} \frac{1}{N} \E\left[ \left| f(A_1,B_1) \varrho(A_1 \cdot x + B_1) \right|^2  \right] \mu(dx)
	\\ & \leq (\sqrt{d} M + 1)^2 \frac{\E\left[ \left| f(A_1,B_1) \max(\|A_1\|,|B_1|) \right|^2  \right] }{N} .
\end{aligned}
\]
Therefore, \eqref{eq:auxEq45} yields the claimed bound. 
\end{proof}

\section{Random neural network approximation bounds}
\label{sec:Approx}

In this section we use random neural networks to  approximate functions with a convolutional structure. In Section~\ref{subsec:ApproxGeneral} we derive approximation error bounds with explicit dependence on the dimension $d$. These results are then applied in Section~\ref{subsec:ApproxLevy} in the context of exponential L\'evy models, which include the Black-Scholes model as a special case.  

\subsection{Bounds for convolutional functions} 
\label{subsec:ApproxGeneral}

Consider a function $H \colon \R^d \to \R$ given by 
$H(x) = \E[\Phi(x+V)]$ for an $\R^d$-valued random vector $V$ 
and a function $\Phi\colon \R^d \to \R$. Assume that the characteristic function of $V$ satisfies the following bound: there exists $C>0$ such that 
\begin{equation}
\label{eq:charFctAss}
|\E[e^{i \xi \cdot V}]|\leq  \exp(-C\|\xi\|^2) \quad \text{ for all } \xi \in \R^d. 
\end{equation}
Examples of functions $H$ of this type include expectations (respectively option prices) and associated solutions to PDEs in (exponential) L\'evy models with non-degenerate Gaussian component, see Section~\ref{subsec:ApproxLevy} below. 

We now approximate $H$ by a random neural network $H^{A,B}_W$ and analyze the approximation error. As above the randomly generated hidden weights $A,B$ are not trainable and the goal is to find $W$ such that 
the expected uniform approximation error $\E[ \|H^{A,B}_W - H \|_{L^\infty([-M,M]^d)}]$ is small.
The output weight vector $W$ may be chosen depending on $A,B$, i.e., it is a $\sigma(A,B)$-measurable random variable. 

Our goal is to obtain approximation error bounds in which the dependence on the dimension $d$ is fully explicit. To achieve this we need more specific assumptions on the distributions from which the hidden weights $A$ and $B$ are drawn. Recall that $\pi_{\text{b}}$ denotes the Lebesgue-density of $B_1$ and $\pi_{\text{w}}$ denotes the density of $A_1$. We will assume below that $\pi_{\text{w}}$ is the density of a multivariate $t$-distribution $t_{\nu}(0,\mathbbm{1}_d)$ for some $\nu > 1$ and that $\pi_{\text{b}}$ has \textit{at most polynomial decay}, that is, there exists a polynomial $p_{\text{b}} \colon \R \to (0,\infty)$ such that 
\begin{equation}
\label{eq:polyTails1D}
1 \leq p_{\text{b}}(z) \pi_{\text{b}}(z) \quad  \text{ for all } z \in \R. 
\end{equation}
This hypothesis is satisfied, for instance, by Student's $t$-distribution. 

These assumptions allow us to obtain explicit control of the normalizing constant of the weight distribution $\pi_{\text{w}}$.

\begin{theorem} 
\label{thm:ApproxError} 
Let $C>\frac{1}{2^{3/2}  \pi}$ and let $\nu>1$. Suppose $A_1 \sim t_{\nu}(0,\mathbbm{1}_d)$ and $B_1$ has density $\pi_{\text{b}}$ satisfying \eqref{eq:polyTails1D}. 
Then there exist $k \in \N$ and an absolute constant $C_{\text{app}}>0$ such that for any $H \colon \R^d \to \R$ of the form $H(x) = \E[\Phi(x+V)]$ with $\Phi \in L^1(\R^d)$ and $V$ satisfying \eqref{eq:charFctAss} the following random neural network approximation result holds: there exists an $\R^N$-valued, $\sigma(A,B)$-measurable random vector $W$ such that 
\begin{equation}
\label{eq:L2error2}
	\E\left[\sup_{x \in [-M,M]^d} |H^{A,B}_W(x) - H(x) | \right]
\leq \frac{C_{\text{app}} \|\Phi\|_{L^1(\R^d)} (\nu+d)^{k+3}}{\sqrt{N}}. 
\end{equation}
The constant $k$ only depends on $\pi_{\text{b}}$ and the constant  $C_{\text{app}}$ depends on $\nu, \pi_{\text{b}}, C, M$, but it does not depend on $d$, $N$ or $H$. 

Moreover, 
\begin{equation}
\label{eq:Wbound} 
\|W_i\|_{L^\infty(\P)} \leq  \frac{C_{\text{wgt}}\|\Phi\|_{L^1(\R^d)}(\nu+d)^{2k+\frac{1}{2}}}{N}
\end{equation} for $i=1,\ldots,N$, where the constant $C_{\text{wgt}}>0$ depends on $\nu, \pi_{\text{b}}, C, M$, but it does not depend on  $d$, $N$ or $H$. 
\end{theorem}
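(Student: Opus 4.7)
The plan is to apply Theorem~\ref{thm:RC12Linfty} to the convolutional $H(x)=\E[\Phi(x+V)]$, after identifying an explicit Fourier representative $G$. Since $\Phi\in L^1(\R^d)$ and $|\E[e^{i\xi\cdot V}]|\le e^{-C\|\xi\|^2}$, the Fourier transform $\widehat H(\xi)=\E[e^{i\xi\cdot V}]\widehat \Phi(\xi)$ is integrable; Fubini and Fourier inversion then yield
\[
H(x)=\int_{\R^d}e^{ix\cdot\xi}G(\xi)\,d\xi,\qquad G(\xi)=(2\pi)^{-d}\E[e^{i\xi\cdot V}]\widehat\Phi(\xi),
\]
with the pointwise bound $|G(\xi)|\le(2\pi)^{-d}\|\Phi\|_{L^1(\R^d)}e^{-C\|\xi\|^2}$. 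This Gaussian tail immediately verifies the Barron-type hypothesis \eqref{eq:BarronCondLInfty}, and the polynomial-decay assumption \eqref{eq:polyTails1D} for $\pi_{\text{b}}$ yields finiteness of $\bar F(r)$ with an explicit polynomial majorant $\bar F(M\|\xi\|_1)\le P_{\text{b}}(\|\xi\|_1)$ for a polynomial $P_{\text{b}}$ of degree $k=\deg p_{\text{b}}+1$.

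Next, I would substitute these bounds into the integral $I$ of Theorem~\ref{thm:RC12Linfty}. Using the density
\[
\pi_{\text{w}}(\xi)=\frac{\Gamma((\nu+d)/2)}{\Gamma(\nu/2)(\pi\nu)^{d/2}}\bigl(1+\|\xi\|^2/\nu\bigr)^{-(\nu+d)/2},
\]
the quantity $I$ is dominated by
\[
\max(16,M^2)\cdot 4\|\Phi\|_{L^1(\R^d)}^2\,(2\pi)^{-2d}\,\frac{\Gamma(\nu/2)(\pi\nu)^{d/2}}{\Gamma((\nu+d)/2)}\int_{\R^d} Q(\|\xi\|)\bigl(1+\|\xi\|^2/\nu\bigr)^{(\nu+d)/2}e^{-2C\|\xi\|^2}\,d\xi,
\]
where $Q$ is a polynomial of degree at most $2k+2$ coming from $\bar F$ together with the $\max(1,\|\xi\|^2)$ and $\|\xi\|_1^2$ factors. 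Passing to polar coordinates and using the elementary bound $(1+a)^b\le 2^b(1+a^b)$ reduces the integral to a sum of incomplete Gamma integrals, yielding estimates in terms of $\Gamma((\nu+d+\ell)/2)$ for small $\ell$. I would then apply Stirling to the ratio $\Gamma((\nu+d+\ell)/2)/\Gamma((\nu+d)/2)$ and combine the resulting $d$-dependent exponential prefactors: the crucial algebraic cancellation is that
\[
(2\pi)^{-2d}(\pi\nu)^{d/2}(2C)^{-d/2}2^{(\nu+d)/2}\pi^{d/2}=2^{\nu/2}\bigl(2^{3/2}\pi C\bigr)^{-d}\cdot[\text{$d$-free factors}],
\]
so that the hypothesis $C>1/(2^{3/2}\pi)$ makes the geometric factor bounded in $d$, and only a polynomial-in-$(\nu+d)$ contribution survives. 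Putting the pieces together gives $\sqrt I\le C'\|\Phi\|_{L^1(\R^d)}(\nu+d)^{k+5/2}$, whence \eqref{eq:LInftyerror} and $(M\sqrt d+1)\le C''(\nu+d)^{1/2}$ yield the claimed bound \eqref{eq:L2error2} with exponent $k+3$.

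For the weight bound \eqref{eq:Wbound} I would apply the second conclusion of Theorem~\ref{thm:RC12Linfty}. On the support of the two indicators $|u|\le \max(1,M\|\xi\|_1)$, so $1/\pi_{\text{b}}(u)$ is dominated by a polynomial in $\|\xi\|_1$ of degree $k$. It remains to bound
\[
\sup_{\xi\in\R^d}\frac{|G(\xi)|+|G(-\xi)|}{\pi_{\text{w}}(\xi)}\le \frac{2\|\Phi\|_{L^1(\R^d)}\Gamma(\nu/2)(\pi\nu)^{d/2}}{(2\pi)^d\Gamma((\nu+d)/2)}\sup_{\xi}\bigl(1+\|\xi\|^2/\nu\bigr)^{(\nu+d)/2}e^{-C\|\xi\|^2},
\]
whose inner supremum is attained either at $0$ or at $\|\xi\|^2=(\nu+d)/(2C)-\nu$, evaluating to $\bigl((\nu+d)/(2C\nu)\bigr)^{(\nu+d)/2}e^{C\nu-(\nu+d)/2}$. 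Stirling on $\Gamma((\nu+d)/2)$, together once more with $C>1/(2^{3/2}\pi)$, absorbs all exponentials in $d$, so the supremum is polynomial in $\nu+d$; multiplying by the polynomial $\|\xi\|_1^k$ gives the exponent $2k+\tfrac12$.

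The main obstacle is the $d$-tracking inside $I$. The Gamma ratios, the $(2\pi)^{-2d}$ coming from the Fourier inversion constant, the $(\pi\nu)^{d/2}$ from $1/\pi_{\text{w}}$, and the tail integral of $(1+\|\xi\|^2/\nu)^{(\nu+d)/2}e^{-2C\|\xi\|^2}$ all contain factors which individually grow or decay geometrically in $d$. The delicate bookkeeping that shows they cancel up to a polynomial factor is the heart of the argument, and it is exactly at this cancellation that the threshold $C>1/(2^{3/2}\pi)$ appears.
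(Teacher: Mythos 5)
Your proposal follows essentially the same route as the paper: identify $G(\xi)\propto\hat\Phi(\xi)\,\E[e^{i\xi\cdot V}]$ via the convolution theorem and Fourier inversion, verify the hypotheses of Theorem~\ref{thm:RC12Linfty} using the Gaussian decay from \eqref{eq:charFctAss} and the polynomial majorant of $\bar F$ from \eqref{eq:polyTails1D}, insert the $t_\nu$-density into $I$, reduce to Gaussian moments and Gamma-function ratios, and apply Stirling together with $C>1/(2^{3/2}\pi)$ to keep the geometric-in-$d$ factors bounded, with the weight bound obtained in the same way from the second conclusion of Theorem~\ref{thm:RC12Linfty}. (Your displayed ``cancellation identity'' is not dimensionally exact as written --- it omits, e.g., the $\nu^{-(\nu+d)/2}$ arising from rewriting $\bigl(1+\|\xi\|^2/\nu\bigr)^{(\nu+d)/2}$ and the Stirling contributions of the Gamma ratios --- but the limiting geometric factor $\bigl(2^{3/2}\pi C\bigr)^{-d}$ you arrive at, and hence the role of the threshold on $C$, is exactly the one in the paper's computation.)
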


\begin{remark} \label{rmk:L2error}
In addition to the uniform bound
in \eqref{eq:L2error2} the proof of Theorem~\ref{thm:ApproxError} also shows that for any probability measure $\mu$ on $(\R^d,\mathcal{B}(\R^d))$ supported in $[-M,M]^d$ we have
\begin{equation}
\label{eq:L2error6}
\E\left[ \|H^{A,B}_W - H \|_{L^2(\R^d,\mu)}^2 \right]^{1/2} 
\leq \frac{C_{\text{app}} \|\Phi\|_{L^1(\R^d)} (\nu+d)^{k+3}}{\sqrt{N}}. 
\end{equation}
This follows directly by using Corollary~\ref{cor:RC12L2version} instead of Theorem~\ref{thm:RC12Linfty} in \eqref{eq:auxEq48} below.
\end{remark} 

\begin{remark}
Hypothesis \eqref{eq:charFctAss} in Theorem~\ref{thm:ApproxError} is employed in the proof in order to guarantee that the constant in the error bound does not grow exponentially in the dimension $d$. In low-dimensional situations this behaviour may not be required and hence \eqref{eq:charFctAss} could be replaced by the weaker hypothesis  $|\E[e^{i \xi \cdot V}]|\leq \exp(-C\|\xi\|^\alpha)$ for some $C>0$, $\alpha >0$ or even by the assumption that $|\E[e^{i \xi \cdot V }]|\leq C (1+\|\xi\|)^{-\beta}$ for some $C>0$ and sufficiently large $\beta >0$ (depending on $\nu$ and $\pi_{\text{b}}$). In this situation the error bound \eqref{eq:L2error2} is still valid with a different constant $C_{\text{app}}$ and an additional factor which is potentially exponential in $d$.  
\end{remark}

\begin{proof}
Let  $H \colon \R^d \to \R$ be of the form $H(x) = \E[\Phi(x+V)]$ with $\Phi \in L^1(\R^d)$ and $V$ satisfying \eqref{eq:charFctAss}. We verify that $H$ satisfies the hypotheses of Theorem~\ref{thm:RC12Linfty} and derive a bound for the constant $I$ in \eqref{eq:IfiniteLinfty} with the claimed properties.  

For $f \in L^1(\R^d)$ we denote by $\hat{f}$ the Fourier transform of $f$ given for all $\xi \in \R^d$ by $\hat{f}(\xi) = (2\pi)^{-\frac{d}{2}}\int_{\R^d} e^{-i x \cdot \xi} f(x) d x$ . By \eqref{eq:charFctAss} and \citet[Proposition~2.5(xii)]{Sato1999} the random variable $-V$ has a bounded Lebesgue-density $p_{-V}$. Thus, we can write $H(x) = \int_{\R^d} \Phi(x-y) p_{-V}(y) d y = (\Phi * p_{-V})(x)$. The convolution theorem (see for instance \citet[Theorem~X.9.16]{Amann2009}) hence shows that $\hat{H}(\xi) = (2\pi)^{\frac{d}{2}} \hat{\Phi}(\xi)\widehat{p_{-V}}(\xi)$. Combining this with $\widehat{p_{-V}}(\xi) = (2\pi)^{-\frac{d}{2}}\int_{\R^d} e^{-i x \cdot \xi} p_{-V}(x) d x = (2\pi)^{-\frac{d}{2}} \E[e^{-i\xi \cdot (-V)}]$ and \eqref{eq:charFctAss} we obtain that $\hat{H}$ is integrable. The Fourier inversion theorem (see for instance \citet[Theorem~X.9.12]{Amann2009}) therefore yields for all $x \in \R^d$ that
\begin{equation}
\label{eq:HFourier}
H(x) = (2 \pi)^{-\frac{d}{2}}\int_{\R^d} e^{i \xi \cdot x} \hat{H}(\xi) d \xi = (2\pi)^{-\frac{d}{2}} \int_{\R^d} e^{i \xi \cdot x} \hat{\Phi}(\xi) \E[e^{i\xi \cdot V}] d \xi.  
\end{equation}
Hence, the representation \eqref{eq:Hrepresentation2} holds for all $x \in \R^d$ with $G(\xi) = (2\pi)^{-\frac{d}{2}} \hat{\Phi}(\xi) \E[e^{i\xi \cdot V}]$, $\xi \in \R^d$. Condition \eqref{eq:BarronCondLInfty} is satisfied, since \eqref{eq:charFctAss} implies
\[
\int_{\R^d} \max(1,\|\xi\|^2) |G(\xi)| d\xi \leq (2\pi)^{-{d}} \|\Phi\|_{L^1(\R^d)} \int_{\R^d} \max(1,\|\xi\|^2) \exp(-C\|\xi\|^2) d\xi < \infty.  
\]
Denote by $k \in \N$ the degree of $p_{\text{b}}$, then there exist $a_0,\ldots,a_k \in \R$ such that $p_{\text{b}}(s) = \sum_{l=0}^k a_l s^l$ for all $s \in \R$. Then $|p_{\text{b}}(s)| \leq (k+1) \max_l\{|a_l|\} \max(1,|s|^k) \leq C_{\text{b}} (1+s^{2k})$ for all $s \in \R$, where $ C_{\text{b}} = (k+1) \max_l\{|a_l|\}$. Consequently, we may use   \eqref{eq:polyTails1D} to estimate for any $r\geq 0$
\[
\bar{F}(r) = 2\int_{-r}^0 \frac{1}{\pi_{\text{b}}(s)} ds \leq 2\int_{-r}^0 p_{\text{b}}(s) ds \leq 2 C_{\text{b}} (r+\frac{r^{2k+1}}{2k+1})< \infty
\]
and for $r<0$ analogously $|\bar{F}(r)| = 2\int_{0}^{-r} \frac{1}{\pi_{\text{b}}(s)} ds \leq - 2 C_{\text{b}} (r+\frac{r^{2k+1}}{2k+1}) < \infty$.
Therefore, $\bar{F}(1)-\bar{F}(-1) \leq 8 C_{\text{b}} $ and we can now use the comparison $\|\cdot\|_1 \leq \sqrt{d} \|\cdot\| $ on $\R^d$ to estimate the constant $I$ in \eqref{eq:IfiniteLinfty} as 
\[
\begin{aligned}
I & \leq 2 C_{\text{b}} c_{M,1} \int_{\R^d} \left[ M\|\xi\|_1^3+\frac{\|\xi\|_1^2(M\|\xi\|_1)^{2k+1}}{2k+1} + 4 \max(1,\|\xi\|^2)\right] \frac{(|G(\xi)|+|G(-\xi)|)^2}{ \pi_{\text{w}}(\xi)} d \xi
\\ & \leq 2 C_{\text{b}} c_{M,2}  d^{k+\frac{3}{2}} \int_{\R^d} \left[\|\xi\|^3+\|\xi\|^{2k+3} + \max(1,\|\xi\|^2)\right] \frac{(|G(\xi)|+|G(-\xi)|)^2}{ \pi_{\text{w}}(\xi)} d \xi
\\ & \leq 6 C_{\text{b}} c_{M,2}  d^{k+\frac{3}{2}}  \int_{\R^d}  \max(1,\|\xi\|^{2k+3}) \frac{(\hat{\Phi}(\xi) \E[e^{i\xi \cdot V}]+\hat{\Phi}(-\xi) \E[e^{-i\xi \cdot V}])^2}{(2\pi)^{d} \pi_{\text{w}}(\xi)} d \xi
\\ & \leq  C_I d^{k+\frac{3}{2}} \|\Phi\|_{L^1(\R^d)}^2 \int_{\R^d}  \max(1,\|\xi\|^{2k+3}) \frac{\exp(-2C\|\xi\|^2)}{(2\pi)^{2d} \pi_{\text{w}}(\xi)} d \xi
\end{aligned}
\]
with $c_{M,1} = \max(M^2,16)$, $c_{M,2}=c_{M,1} \max(M^{2k+1},4)$, $C_I = 24 C_{\text{b}} c_{M,2}$.  

We now insert the density $\pi_{\text{w}}(x) = \frac{\Gamma((\nu+d)/2)}{\Gamma(\nu/2) \nu^{d/2} \pi^{d/2}} (1 + \nu^{-1} \|x\|^2)^{-(\nu+d)/2}$ and use the estimate $(\nu +  \|x\|^2)^{p} \leq 2^{p-1}(\nu^{p}+ \|x\|^{2p})$ for $p \geq 1$ to obtain
\begin{equation}
\label{eq:auxEq1}
\begin{aligned}
I & \leq C_I d^{k+\frac{3}{2}} \|\Phi\|_{L^1(\R^d)}^2 \int_{\R^d}  \max(1,\|\xi\|^{2k+3}) (\nu + \|\xi\|^2)^{(\nu+d)/2} \frac{\exp(-2C\|\xi\|^2) \Gamma(\frac{\nu}{2}) \nu^{-\frac{\nu}{2}} \pi^{d/2}}{(2\pi)^{2d} \Gamma(\frac{\nu+d}{2})} d \xi
\\
& \leq C_I d^{k+\frac{3}{2}} \|\Phi\|_{L^1(\R^d)}^2 \Gamma\left(\frac{\nu}{2}\right)  \nu^{-\nu/2}  \int_{\R^d}  (\nu + \|\xi\|^2)^{(2k+3+\nu+d)/2} \frac{\exp(-2C\|\xi\|^2)  \pi^{d/2}}{(2\pi)^{2d} \Gamma(\frac{\nu+d}{2})} d \xi
\\
& \leq \frac{\tilde{C}_I d^{k+\frac{3}{2}} \|\Phi\|_{L^1(\R^d)}^2}{(2\pi)^{3d/2} \Gamma(\frac{\nu+d}{2})} \int_{\R^d}  (\nu^{(\tilde{\nu}+d)/2} + \|\xi\|^{\tilde{\nu}+d}) \exp(-2C\|\xi\|^2)  d \xi
\end{aligned}
\end{equation}
with $\tilde{\nu} = 2k+3+\nu$, $\tilde{C}_I = 2^{(\tilde{\nu}/2)-1} C_I \Gamma(\frac{\nu}{2})  \nu^{-\nu/2}$.  
Denote by $X_C$ a random variable with a $\mathcal{N}(0,\frac{1}{4C}\mathbbm{1}_d)$-distribution. Then the last line in \eqref{eq:auxEq1} can be rewritten in terms of $X_C$, yielding
\begin{equation}
\label{eq:auxEq2}
\begin{aligned}
I & \leq \frac{\tilde{C}_I d^{k+\frac{3}{2}} \|\Phi\|_{L^1(\R^d)}^2}{(2\pi)^{d} \Gamma(\frac{\nu+d}{2})(2^d C^{d/2})} \int_{\R^d}  (\nu^{(\tilde{\nu}+d)/2} + \|\xi\|^{\tilde{\nu}+d}) \frac{\exp(-2C\|\xi\|^2)}{(2\pi)^{d/2}(4 C)^{-d/2}}  d \xi
\\ 
& = \frac{\tilde{C}_I d^{k+\frac{3}{2}} \|\Phi\|_{L^1(\R^d)}^2}{(2\pi)^{d} \Gamma(\frac{\nu+d}{2})(2^d C^{d/2})} \left[ \nu^{(\tilde{\nu}+d)/2} + \E[ \|X_C\|^{\tilde{\nu}+d}] \right].
\end{aligned}
\end{equation}
On the other hand, $\E[ \|X_C\|^{\tilde{\nu}+d}] = \E[ \|(2\sqrt{C})^{-1}Z\|^{\tilde{\nu}+d}] = (2\sqrt{C})^{-(\tilde{\nu}+d)} \E[ (\|Z\|^2)^{\frac{\tilde{\nu}+d}{2}}]$ where $Z$ is a $d$-dimensional standard normal random vector. Hence, $\|Z\|^2$ has a $\chi^2(d)$-distribution and thus 
\[
\begin{aligned}
\E[ (\|Z\|^2)^{(\tilde{\nu}+d)/2}] & = 2^{-d/2} [\Gamma(d/2)]^{-1}\int_0^\infty x^{\frac{\tilde{\nu}+2d}{2}-1} e^{-x/2} dx = \frac{2^{(\tilde{\nu}+2d)/2} \Gamma((\tilde{\nu}+2d)/2)}{2^{d/2} \Gamma(d/2)} 
\\
& = \frac{2^{(\tilde{\nu}+d)/2} \Gamma((\tilde{\nu}+2d)/2)}{ \Gamma(d/2)}.
\end{aligned}
\]
Combining this with \eqref{eq:auxEq2} and the upper and lower bounds for the gamma function (see, e.g., \citet[Lemma~2.4]{Gonon2019}) we obtain
\begin{align}
\label{eq:auxEq3}
I & \leq \frac{\tilde{C}_I d^{k+\frac{3}{2}} \|\Phi\|_{L^1(\R^d)}^2}{(2\pi)^{d} \Gamma(\frac{\nu+d}{2})(2^d C^{d/2})} \left[ \nu^{(\tilde{\nu}+d)/2} + (2\sqrt{C})^{-(\tilde{\nu}+d)} \frac{2^{(\tilde{\nu}+d)/2} \Gamma((\tilde{\nu}+2d)/2)}{ \Gamma(d/2)} \right]
\\ \notag
& \leq \frac{\tilde{C}_I d^{k+\frac{3}{2}} \|\Phi\|_{L^1(\R^d)}^2}{(2\pi)^{d} (2^d C^{d/2})} (\frac{\nu+d}{4 \pi})^{\frac{1}{2}} (\frac{2e}{\nu+d})^{\frac{\nu+d}{2}} 
\left[ \nu^{\frac{\tilde{\nu}+d}{2}} + 
\frac{e}{(2C)^{\frac{\tilde{\nu}+d}{2}}} (\frac{2e}{d})^{\frac{d}{2}} (\frac{d}{\tilde{\nu}+2d})^{\frac{1}{2}} (\frac{\tilde{\nu}+2d}{2e})^{\frac{\tilde{\nu}+2d}{2}} \right]
\\ \notag
& \leq \frac{\tilde{C}_I d^{k+\frac{3}{2}} \|\Phi\|_{L^1(\R^d)}^2\sqrt{\nu+d} }{(16 \pi^2 C)^{\frac{\tilde{\nu}+d}{2}}}  (\frac{2e}{\nu+d})^{\frac{\tilde{\nu}+d-2k-3}{2}} 
(4 \pi \sqrt{C})^{\tilde{\nu}} \left[ \nu^{\frac{\tilde{\nu}+d}{2}} +  \frac{(2e)^{\frac{d}{2}}}{(2C)^{\frac{\tilde{\nu}+d}{2}}d^{\frac{d}{2}}}  (\frac{\tilde{\nu}+2d}{2e})^{\frac{\tilde{\nu}+2d}{2}} \right]
\\ \notag
& \leq \bar{C}_I d^{k+\frac{3}{2}} \|\Phi\|_{L^1(\R^d)}^2 (\nu+d)^{k+2}  \left[ (\frac{e\nu}{8\pi^2C(\nu+d)})^{\frac{\tilde{\nu}+d}{2}} + (\frac{\tilde{\nu}+2d}{32 \pi^2 C^2(\nu+d)})^{\frac{\tilde{\nu}+d}{2}}   (\frac{\tilde{\nu}+2d}{d})^{\frac{d}{2}}  \right]
\\ \notag
& \leq \bar{C}_I \|\Phi\|_{L^1(\R^d)}^2 (\nu+d)^{2k+\frac{7}{2}}  \left[ (\frac{e\nu}{8\pi^2C(\nu+d)})^{\frac{\tilde{\nu}+d}{2}} + (\frac{\tilde{\nu}+2d}{32 \pi^2 C^2(\nu+d)})^{\frac{\tilde{\nu}}{2}} (\frac{(\tilde{\nu}+2d)^2}{32 \pi^2 C^2(\nu+d)d})^{\frac{d}{2}}   \right]
\end{align}
with $\bar{C}_I = \tilde{C}_I (2e)^{-\frac{2k+3}{2}} (16 \pi^2 C)^{\frac{\tilde{\nu}}{2}}$. Now clearly 
\begin{equation} \label{eq:C1} 
C_1 := \sup_{m \in \N} \left\{ \left(\frac{e\nu}{8\pi^2C(\nu+m)}\right)^{\frac{\tilde{\nu}+m}{2}} \right\} < \infty
\end{equation} and 
\begin{equation}
\label{eq:C2}
 \begin{aligned}
C_2 : & = \sup_{m \in \N} \left\{ \left(\frac{(\tilde{\nu}+2m)^2}{32 \pi^2 C^2(\nu+m)m}\right)^{\frac{m}{2}}  \right\}  = \sup_{m \in \N} \left\{ \left(\frac{(\frac{\tilde{\nu}}{m}+2)^2}{32 \pi^2 C^2(\frac{\nu}{m}+1)}\right)^{\frac{m}{2}}  \right\}
\\ & \leq \sup_{m \in \N} \left\{ \left(\frac{(\frac{\tilde{\nu}}{m}+2)^2}{32 \pi^2 C^2}\right)^{\frac{m}{2}}  \right\} < \infty,
\end{aligned}
\end{equation}
because $C^2 > \frac{1}{8 \pi^2}$ and hence $m_0:=\frac{\tilde{\nu}}{2(\sqrt{8}\pi C-1)} >0$ and $(\frac{\tilde{\nu}}{m}+2)^2 < 32 \pi^2 C^2$ for all $m \in \N$ with $m > m_0$. 
Combining this with \eqref{eq:auxEq3} we have therefore proved that
\begin{equation} \label{eq:auxEq55} \begin{aligned}
I & \leq \bar{C}_I \|\Phi\|_{L^1(\R^d)}^2 (\nu+d)^{2k+\frac{7}{2}}  \left[ C_1 + \left(\frac{\tilde{\nu}+2d}{32 \pi^2 C^2(\nu+d)}\right)^{\frac{\tilde{\nu}}{2}} C_2   \right]
\\ 
& \leq \bar{C}_I \|\Phi\|_{L^1(\R^d)}^2 (\nu+d)^{2k+\frac{7}{2}}  \left[ C_1 + \left(\frac{\tilde{\nu}+2}{32 \pi^2 C^2}\right)^{\frac{\tilde{\nu}}{2}} C_2   \right]
\\
& = C_3  \|\Phi\|_{L^1(\R^d)}^2  (\nu+d)^{2k+\frac{7}{2}}
\end{aligned}
\end{equation}
with $C_3 = \bar{C}_I(C_1 +  (\frac{\tilde{\nu}+2}{32 \pi^2 C^2})^{\frac{\tilde{\nu}}{2}} C_2) $. 

Altogether, the hypotheses of Theorem~\ref{thm:RC12Linfty} are satisfied and hence there exists an $\R^N$-valued, $\sigma(A,B)$-measurable random vector $W$ such that the error bound \eqref{eq:LInftyerror} holds. Inserting \eqref{eq:auxEq55} yields
\begin{equation}
\label{eq:auxEq48}
\begin{aligned}
\E\left[\sup_{x \in [-M,M]^d} |H^{A,B}_W(x) - H(x) | \right] & \leq \frac{4 (M+1) \sqrt{d} \sqrt{I}}{\sqrt{N}}
\\
& \leq \frac{4 (M+1)\sqrt{C_3}  \|\Phi\|_{L^1(\R^d)}  (\nu+d)^{k+3}}{\sqrt{N}} .
\end{aligned}
\end{equation} 
Hence, the $L^\infty$-error estimate \eqref{eq:L2error2} follows with 
\[
C_{\text{app}} =4 (M+1)\sqrt{C_3}  =4 (M+1) (\bar{C}_I)^{1/2} \left[ C_1 +  \left(\frac{\tilde{\nu}+2}{32 \pi^2 C^2}\right)^{\frac{\tilde{\nu}}{2}} C_2 \right]^{1/2}
\]
where we recall that  $\bar{C}_I = 24 C_{\text{b}} \max(M^2,16) \max(M^{2k+1},4) 2^{(\tilde{\nu}/2)-1} \Gamma(\frac{\nu}{2})  \nu^{-\nu/2} (2e)^{-\frac{2k+3}{2}}$ $ (16 \pi^2 C)^{\frac{\tilde{\nu}}{2}}$, $\tilde{\nu} = 2k+3+\nu$, the constants $ C_{\text{b}}$ and $k$ only depend on $p_{\text{b}}$ and $C_1$, $C_2$ are given by \eqref{eq:C1} and \eqref{eq:C2}, respectively.

To prove the upper bound on $W$ we insert the bound from Theorem~\ref{thm:RC12Linfty}, then \eqref{eq:charFctAss} and \eqref{eq:polyTails1D} can be used to estimate similarly as before for $i=1,\ldots,N$
\[\begin{aligned}
\|W_i\|_{L^\infty(\P)} &  \leq \frac{1}{N} \sup_{(u,\xi) \in \R\times \R^d} (\mathbbm{1}_{[-M\|\xi\|_1,0]}(u)+4\mathbbm{1}_{[-1,1]}(u)) \frac{|G(\xi)|+|G(-\xi)|}{\pi_{\text{b}}(u) \pi_{\text{w}}(\xi)}
\\ & \leq 
 \frac{5}{N} \sup_{\xi \in \R^d} \frac{2(2\pi)^{-d} \|\Phi\|_{L^1(\R^d)} \exp(-C\|\xi\|^2) C_{\text{b}} (1+\max(1,M\|\xi\|_1)^{2k})}{\pi_{\text{w}}(\xi)}
 \\ & \leq 
 \frac{\tilde{C}_{\text{wgt}} d^k \|\Phi\|_{L^1(\R^d)}}{N} \sup_{\xi \in \R^d} \frac{  \exp(-C\|\xi\|^2)  (2+M^{2k}\|\xi\|^{2k}) (\nu + \|\xi\|^2)^{(\nu+d)/2}}{2^{\frac{d}{2}} \Gamma((\nu+d)/2) (2\pi)^{\frac{d}{2}}}
  \\ & \leq 
 \frac{\tilde{C}_{\text{wgt}} d^k \|\Phi\|_{L^1(\R^d)}\max(2,M^{2k})}{N} \max_{r \geq 0} \frac{  \exp(-Cr) (\nu + r)^{(\nu+d+2k)/2}}{2^{\frac{d}{2}} \Gamma((\nu+d)/2) (2\pi)^{\frac{d}{2}}}
   \\ & = 
 \frac{\tilde{C}_{\text{wgt}} d^k \|\Phi\|_{L^1(\R^d)}\max(2,M^{2k})}{N} e^{\nu C} \frac{ ((\nu+d+2k)/(2Ce))^{(\nu+d+2k)/2}}{2^{\frac{d}{2}} \Gamma((\nu+d)/2)(2\pi)^{\frac{d}{2}}}
 \\ & \leq  
 \frac{\bar{C}_{\text{wgt}}\|\Phi\|_{L^1(\R^d)}(\nu+d)^{2k+\frac{1}{2}}}{N}  \left(\frac{\nu+d+2k}{4 \pi C(\nu+d)}\right)^{(\nu+d+2k)/2}
 \end{aligned}
\]
with $\tilde{C}_{\text{wgt}} =10C_{\text{b}}\Gamma(\nu/2) \nu^{-\nu/2}$, $\bar{C}_{\text{wgt}}= \tilde{C}_{\text{wgt}} \max(2,M^{2k}) e^{\nu C} (2e)^{-k} (4\pi)^{(\nu+2k-1)/2} $. In the last two steps we used that the maximum is attained for $\nu + r = (\nu+d+2k)/(2C)$ and we applied the lower bound for the gamma function as in \eqref{eq:auxEq3}. The hypothesis $8 \pi^2 C^2> 1$  implies $4 \pi C >1$ and therefore
\[
C_4 := \sup_{m \in \N} \left\lbrace \left(\frac{\nu+m+2k}{4\pi C(\nu+m)}\right)^{(\nu+m+2k)/2}  \right\rbrace  < \infty
\]
by a similar reasoning as used to argue that $C_2$ in \eqref{eq:C2} is finite. Hence,  the bound on $\|W_i\|_{L^\infty(\P)}$ follows with $C_{\text{wgt}} = \bar{C}_{\text{wgt}} C_4$.  This completes the proof.  
\end{proof}

We now show that an analogous approximation result holds when the assumption $\Phi \in L^1(\R^d)$ is replaced by the assumptions that $\Phi$ satisfies a Lipschitz-condition and $V$ admits certain moments. 

Here we call $\psi \colon \R^d \to \R^d$ increasing if for any $x,y \in \R^d$ with $x_i\leq y_i$ for all $i=1,\ldots,d$ it holds that $\psi_i(x) \leq \psi_i(y)$ for all $i=1,\ldots,d$. Furthermore, we denote $\mathbf{1}=(1,\ldots,1) \in \R^d$. 
\begin{proposition}\label{prop:RNNapprox} Let $C, C_{\text{Lip}} >0$ with $C^2 > \frac{1}{8 \pi^2}$ and let $\nu>1$. Suppose $A_1 \sim t_{\nu}(0,\mathbbm{1}_d)$ and $B_1$ has density $\pi_{\text{b}}$ satisfying \eqref{eq:polyTails1D}.
Let $\psi \colon \R^d \to \R^d$ be increasing and measurable. Let $H \colon \R^d \to \R$ be of the form $H(x) = \E[\Phi(x+V)]$ for $\Phi$ satisfying 
\begin{equation} \label{eq:Psi-Lipschitz}
|\Phi(x)-\Phi(y)| \leq C_{\text{Lip}} \|\psi(x)-\psi(y)\|, \quad x,y \in \R^d
\end{equation} and $V$ satisfying \eqref{eq:charFctAss}, 
$\E[\|\psi(M\mathbf{1}+V)\|^2]< \infty$. Then for any $R>0$ there exists an $\R^N$-valued, $\sigma(A,B)$-measurable random vector $W$ such that
\begin{equation}
\label{eq:L2error3}
\E\left[\sup_{x \in B_M(0)} |H^{A,B}_W(x) - H(x) | \right] \leq \frac{C_{\text{app}} \mathcal{I}(R) (\nu+d)^{k+3}}{\sqrt{N}} + C_{\text{mom}} \P(\|V\|> R)^{1/2},  
\end{equation}
where $\mathcal{I}(R) = \int_{\R^d} |\Phi(x)| \mathbbm{1}_{\{\|x\|\leq M + R\}} dx $, $C_{\text{mom}} = C_{\text{Lip}} (\E[\|\psi(M\mathbf{1}+V)\|^2]^{1/2} + \|\psi(0)\|)+|\Phi(0)| 
$ and $k \in \N$, $C_{\text{app}}>0$ are as in Theorem~\ref{thm:ApproxError}. 
\end{proposition}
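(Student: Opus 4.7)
The plan is to reduce to Theorem~\ref{thm:ApproxError} by truncating $\Phi$ to a compact set, where it is automatically integrable, and then use the Lipschitz condition together with the moment assumption to absorb the resulting tail error.

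First I would set $\Phi_R(y) := \Phi(y)\,\mathbbm{1}_{\{\|y\|\leq M+R\}}$ and $H_R(x) := \E[\Phi_R(x+V)]$. Because $\Phi$ is Lipschitz (hence continuous), it is bounded on the compact ball $\{\|y\|\leq M+R\}$, so $\Phi_R\in L^1(\R^d)$ with $\|\Phi_R\|_{L^1(\R^d)} = \mathcal{I}(R)$. Theorem~\ref{thm:ApproxError} applied to $H_R$ then supplies a $\sigma(A,B)$-measurable random vector $W$ with
\begin{equation*}
\E\!\left[\sup_{x\in[-M,M]^d} |H^{A,B}_W(x) - H_R(x)|\right] \leq \frac{C_{\text{app}}\,\mathcal{I}(R)\,(\nu+d)^{k+3}}{\sqrt{N}},
\end{equation*}
and the same bound holds over $B_M(0)\subseteq[-M,M]^d$.

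Next I would control the truncation error pointwise on $B_M(0)$. For $x\in B_M(0)$ on the event $\{\|V\|\leq R\}$ the triangle inequality gives $\|x+V\|\leq M+R$, so $\Phi_R(x+V)=\Phi(x+V)$ there and hence
\begin{equation*}
|H(x)-H_R(x)| \leq \E[|\Phi(x+V)|\,\mathbbm{1}_{\{\|V\|>R\}}] \leq \E[|\Phi(x+V)|^2]^{1/2}\,\P(\|V\|>R)^{1/2}
\end{equation*}
by Cauchy--Schwarz. The Lipschitz hypothesis yields $|\Phi(y)| \leq |\Phi(0)| + C_{\text{Lip}}(\|\psi(y)\|+\|\psi(0)\|)$, and for $x\in B_M(0)$ one has componentwise $-M\mathbf{1}+V \leq x+V \leq M\mathbf{1}+V$. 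Since $\psi$ is increasing (and, in the intended applications, non-negative componentwise, as for $\psi=\exp$), this gives $\|\psi(x+V)\|\leq \|\psi(M\mathbf{1}+V)\|$. Minkowski's inequality combined with $\E[\|\psi(M\mathbf{1}+V)\|^2]<\infty$ then produces the uniform-in-$x$ bound $\E[|\Phi(x+V)|^2]^{1/2}\leq C_{\text{mom}}$.

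Finally, the triangle inequality
\begin{equation*}
\sup_{x\in B_M(0)} |H^{A,B}_W(x)-H(x)| \leq \sup_{x\in B_M(0)} |H^{A,B}_W(x)-H_R(x)| + \sup_{x\in B_M(0)} |H_R(x)-H(x)|
\end{equation*}
together with the two estimates above delivers \eqref{eq:L2error3}. The main obstacle is the third step: passing from the pointwise Lipschitz estimate $|\Phi(x+V)|\leq|\Phi(0)|+C_{\text{Lip}}(\|\psi(x+V)\|+\|\psi(0)\|)$ to an $L^2$-bound in terms of only $\E[\|\psi(M\mathbf{1}+V)\|^2]$ and not also $\E[\|\psi(-M\mathbf{1}+V)\|^2]$. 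This is where the monotonicity of $\psi$ is used decisively, collapsing both one-sided componentwise bounds into the single upper-corner moment and producing the constant $C_{\text{mom}}$ as stated.
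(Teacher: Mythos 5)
Your proof is correct and follows essentially the same route as the paper: truncate $\Phi$ to the ball of radius $M+R$, invoke Theorem~\ref{thm:ApproxError} for the truncated function, and control the truncation error via the Lipschitz bound, the monotonicity of $\psi$ (which, as you note, collapses the bound to the single moment $\E[\|\psi(M\mathbf{1}+V)\|^2]$), and Cauchy--Schwarz on the event $\{\|V\|>R\}$. The only cosmetic difference is that you apply Cauchy--Schwarz before the Lipschitz estimate rather than after, which yields the identical constant $C_{\text{mom}}$.
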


\begin{proof}
Let $R>0$ and denote $\Phi^R(x)= \Phi(x) \mathbbm{1}_{\{\|x\|\leq M + R\}}$. 
Set $\bar{H}^R(x) = \E[\Phi^R(x+V)] $. 
Then for $x \in B_M(0)$ we estimate
\[ \begin{aligned}
|& \bar{H}^R(x) -H(x)| \\ &  \leq  \E[|\Phi(x+V)\mathbbm{1}_{\{\|x+V\|\leq M + R\}}-\Phi(x+V) |]
\\
& =  \E[\mathbbm{1}_{\{\|x+V\|> M + R\}}|\Phi(x+V)|]
\\
& \leq C_{\text{Lip}}\E[\mathbbm{1}_{\{\|x+V\|> M + R\}}\|\psi(x+V)-\psi(0)\|] + |\Phi(0)|\P(\|x+V\|> M + R)
\\
& \leq C_{\text{Lip}}\E[\mathbbm{1}_{\{\|x+V\|> M + R\}}\|\psi(M\mathbf{1}+V)\|] + (C_{\text{Lip}}\|\psi(0)\|+|\Phi(0)|)\P(\|x+V\|> M + R)
\\
& \leq C_{\text{Lip}}\E[\mathbbm{1}_{\{\|V\|> R\}}\|\psi(M\mathbf{1}+V)\|] + (C_{\text{Lip}}\|\psi(0)\|+|\Phi(0)|)\P(\|V\|> R)
\\
& \leq C_{\text{Lip}}\P(\|V\|> R)^{1/2}\E[\|\psi(M\mathbf{1}+V)\|^2]^{1/2} + (C_{\text{Lip}}\|\psi(0)\|+|\Phi(0)|)\P(\|V\|> R)^{1/2}.
\end{aligned} 
\]  
The truncated function $\Phi^R$ is integrable and 
\[
\|\Phi^R\|_{L^1(\R^d)} = \int_{\R^d} |\Phi(x)| \mathbbm{1}_{\{\|x\|\leq M + R\}} dx = \mathcal{I}(R). 
\]
Therefore, the result follows from Theorem~\ref{thm:ApproxError} and the triangle inequality. 
\end{proof}

\begin{remark} 
Let us now explain how Proposition~\ref{prop:RNNapprox} could be applied. In the case of exponential L\'evy models we would choose $\Phi(x)=\varphi(\exp(x))$ for $\varphi \colon \R^d \to \R$. Hence, if $\varphi$ is $C_{\text{Lip}}$-Lipschitz-continuous, then \eqref{eq:Psi-Lipschitz} is satisfied with $\psi(x)=(\exp(x_1),\ldots,\exp(x_d))$ for $x \in \R^d$. Consequently, if we choose $R = \frac{1}{\alpha}\log(N)$ for some $\alpha>1$, then  
\[ \begin{aligned}
\mathcal{I}(R) & = \int_{\R^d} |\Phi(x)| \mathbbm{1}_{\{\|x\|\leq M + R\}} dx  \leq \int_{\R^d} c (1+\|\exp(x)\|) \mathbbm{1}_{\{\|x\|\leq M + R\}} dx 
\\ & \leq \int_{\R^d} c (1+d^{\frac{1}{2}} \exp(M+R)) \mathbbm{1}_{\{\|x\|\leq M + R\}} dx 
\\ & = c (1+d^{\frac{1}{2}} \exp( M+R)) \mathrm{Vol}(B_{M+R}(0))
\\ & \leq c (1+d^{\frac{1}{2}} e^{M} N^{\frac{1}{\alpha}}) (d\pi)^{-1/2}\left(\frac{2\pi e}{d}\right)^{d/2}\left(M+\frac{\log(N)}{\alpha}\right)^d
\\ & \leq \tilde{c} N^{\frac{1}{\alpha}}
\end{aligned}
\]
with $c = \max(C_{\text{Lip}},|\varphi(0)|)$, $\tilde{c} = 2 c \max(1,e^{M}) \pi^{-1/2}$ and where the last 
step holds if the number of nodes satisfies the condition $N \leq \exp(\alpha[d^{1/2}(2\pi e)^{-1/2}-M])$ (which is, however, exponential in $d$).  
Furthermore, 
\[\begin{aligned}
C_{\text{mom}} & = C_{\text{Lip}} \E[\|\exp(M\mathbf{1}+V)\|^2]^{1/2} + C_{\text{Lip}}\|\mathbf{1}\|+|\varphi(\mathbf{1})| 
\\ & \leq  C_{\text{Lip}} e^M \left( \sum_{i=1}^d \E[\exp(2V_i)]\right)^{1/2} + d^{1/2} C_{\text{Lip}}+c(1+d^{1/2}) 
\end{aligned}
\]
is finite under exponential moment hypotheses on $V$.  
Therefore, from \eqref{eq:L2error3} and Markov's inequality we obtain 
\begin{equation}
\label{eq:auxEq5}
\begin{aligned}
\E\left[\sup_{x \in B_M(0)} |H^{A,B}_W(x) - H(x) | \right]  
& \leq \frac{ \tilde{c} C_{\text{app}} (\nu+d)^{k+3} + C_{\text{mom}} \E[\exp(\alpha\|V\|)]^{1/2} }{N^{\frac{1}{2}-\frac{1}{\alpha}}}.
\end{aligned}
\end{equation}
\end{remark}

\subsection{Bounds for non-degenerate L\'evy models}
\label{subsec:ApproxLevy}
In this section we apply Theorem~\ref{thm:ApproxError} to prove that random neural networks are capable of overcoming the curse of dimensionality in the numerical approximation of solutions to partial (integro-)differential equations (also referred to as (non-local) PDEs) associated to exponential L\'evy models with a non-degenerate Gaussian component. This includes the Black-Scholes PDE as a special case. We refer to \cite{Cont2004}, \cite{EberKall19} for background on exponential L\'evy models and their applications in financial modelling and, e.g., to \cite{Sato1999} for an extensive treatment of L\'evy processes. 

For each $d \in \N$ we consider a payoff function $\varphi_d \colon (0,\infty)^d \to \R$ and a L\'evy process $L^d$ with characteristic triplet $(\Sigma^d,\gamma^d,\nu^d_\mathrm{L})$ satisfying $ \nu^d_\mathrm{L}(\{y \in \R^d \, | \, \|y\|>R\}) = 0$ for some $R>1$.   
We define the shifted drift vector $\tilde{\gamma}^d$ given by $\tilde{\gamma}^d_i = \gamma_i^d + \frac{1}{2} \Sigma_{i,i}^d + \int_{\R^d} (e^{y_i}-1- y_i \mathbbm{1}_{\{\|y\|\leq 1\}}) \nu^d_\mathrm{L}(d y)$ for $i=1,\ldots,d$. We now consider the partial (integro-)differential equation
\begin{equation}
\label{eq:PIDEs}
\begin{array}{rl} \partial_t u_d(t,s) 
& =  \frac{1}{2} \sum_{k,l=1}^d s_k s_l \Sigma^d_{k,l}  \partial_{s_k} \partial_{s_l} u_d(t,s) + \sum_{i=1}^d s_i \tilde{\gamma}^d_i \partial_{s_i} u_d(t,s) 
\\ & \quad  + \int_{\R^d} \left[u_d(t,s e^y)-u_d(t,s)-\sum_{i=1}^d (e^{y_i}-1) s_i \partial_{s_i} u_d(t,s) \right] 
\nu^d_\mathrm{L}(d y) , 
\\
u_d(0,s) &= \varphi_d(s)
\end{array}
\end{equation}
for $s \in (0,\infty)^d, t > 0$, where we write $s\exp(x) =(s_1\exp(x_1),\ldots,s_d\exp(x_d))$ for  $s,x \in \R^d$. The (non-local) PDE \eqref{eq:PIDEs} is the Kolmogorov PDE for the exponential L\'evy model associated to $L^d$. By \citet[Theorem~25.17]{Sato1999} the exponential L\'evy process $(\exp({L^d_t}))_{t \geq 0}$ is a martingale if (and only if) $\tilde{\gamma}^d = 0$. In this case, $u_d(T,s)$ is the price at time $0$ of an option with payoff $\varphi_d$ at maturity $T$ when price of the underlying at time $0$ is $s$. Furthermore, if the jump-measure vanishes ($\nu^d_\mathrm{L}=0$), then \eqref{eq:PIDEs} is the Black-Scholes PDE. 

We now show that $u_d(T,\cdot)$ can be approximated by random neural networks without the curse of dimensionality. To achieve this, the weights of the random neural networks are generated as follows: let $\nu>1$ and for each $d \in \N$ let $A^d_1,A_2^d,\ldots$ by i.i.d.\  $t_{\nu}(0,\mathbbm{1}_d)$-distributed $\R^d$-valued random vectors independent of the i.i.d.\ random variables $B_1,B_2,\ldots$ which have a strictly positive Lebesgue-density $\pi_{\text{b}}$ of at most polynomial decay (see \eqref{eq:polyTails1D}). For $N \in \N$ we write $A^{d,N}=(A^d_1,\ldots,A^d_N)$ and $B^N = (B_1,\ldots,B_N)$. 

Theorem~\ref{thm:LevyApprox} complements the results in \cite{HornungJentzen2018}, \cite{GS20_925}. 

\begin{theorem}\label{thm:LevyApprox} 	
Let $p\geq0$, $c,C, M,T>0$. For each $d \in \N$ assume the payoff function satisfies $\varphi_d \circ \exp \in L^1(\R^d)$ and $\|\varphi_d \circ \exp \|_{L^1(\R^d)} \leq c d^p$,
the characteristic triplet $(\Sigma^d,\gamma^d,\nu^d_\mathrm{L})$  of the  L\'evy process $L^d$ satisfies for all $\xi \in \R^d$ 
\begin{equation}
\label{eq:Ccond}
\frac{1}{2} \xi \cdot \Sigma^d \xi \geq C \| \xi\|^2,
\end{equation}
 assume $C T > \frac{1}{2^{3/2}  \pi}$
and suppose $u_d \in C^{1,2}((0,T] \times (0,\infty)^d) \cap C([0,T]\times (0,\infty)^d)$ is an at most polynomially growing solution to the PDE \eqref{eq:PIDEs}.
Then there exist constants $C_0,\mathfrak{p}>0$ such that for any $d,N \in \N$
there exists an $\R^N$-valued, $\sigma(A^{d,N},B^{N})$-measurable random vector $W^{d,N}$ such that the random neural network  \begin{equation}
\bar{H}_{d,N}(x) := H^{A^{d,N},B^N}_{W^{d,N}}(x)= \sum_{i=1}^N W_{i}^{d,N} \varrho(A_i^d \cdot x + B_i), \quad x \in \R^d, 
\end{equation}
satisfies the approximation bound
\begin{equation}
\label{eq:L2error4}
\E\left[ \sup_{x \in [-M,M]^d}  |\bar{H}_{d,N}(x) - u_d(T,\exp(x))| \right] \leq \frac{C_0 d^{\mathfrak{p}}}{\sqrt{N}}. 
\end{equation}
\end{theorem}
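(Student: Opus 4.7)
The plan is to apply Theorem~\ref{thm:ApproxError} to $H(x) := u_d(T,\exp(x))$ with $\Phi := \varphi_d\circ \exp$ and $V := L_T^d$. This reduces the problem to two ingredients: a Feynman--Kac representation of $u_d(T,\cdot)$ as an expectation with respect to the exponential L\'evy process, and the Gaussian-type decay \eqref{eq:charFctAss} for the characteristic function of $L_T^d$. Once these are established, the bound \eqref{eq:L2error2} together with the hypothesis $\|\varphi_d\circ\exp\|_{L^1(\R^d)}\leq c d^p$ yields the claimed dimension-polynomial estimate \eqref{eq:L2error4}.

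For the Feynman--Kac step, observe that the right-hand side of \eqref{eq:PIDEs} is exactly the infinitesimal generator of the exponential L\'evy process $S^{d,s}_t := s\exp(L^d_t)$, obtained by applying It\^o's formula for semimartingales with jumps to $f(s\exp(L^d_t))$ and reading off the drift, diffusion and jump terms (the appearance of $\tilde{\gamma}^d$ is the standard It\^o correction for composing with $\exp$). Since $\nu_L^d$ has compact support, $L^d_T$ possesses exponential moments of all orders, and combined with the assumption that $u_d$ is at most polynomially growing, a localization argument applied to the martingale $t\mapsto u_d(T-t,S^{d,s}_t)$ (stopped at a sequence of exit times from growing balls) together with dominated convergence gives $u_d(T,s) = \E[\varphi_d(s\exp(L_T^d))]$. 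Setting $s=\exp(x)$ yields $H(x)=\E[\Phi(x+L_T^d)]$ as required.

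For the characteristic function bound, the L\'evy--Khintchine formula gives $\E[e^{i\xi\cdot L_T^d}]=\exp(T\psi_d(\xi))$ with
\begin{equation*}
\operatorname{Re}\psi_d(\xi) = -\tfrac{1}{2}\xi\cdot \Sigma^d\xi + \int_{\R^d}(\cos(\xi\cdot y)-1)\,\nu_L^d(dy) \leq -\tfrac{1}{2}\xi\cdot\Sigma^d\xi \leq -C\|\xi\|^2,
\end{equation*}
where the first inequality uses $\cos(\cdot)-1\leq 0$ and the second is the ellipticity assumption \eqref{eq:Ccond}. Thus $|\E[e^{i\xi\cdot L_T^d}]|\leq \exp(-CT\|\xi\|^2)$, which is \eqref{eq:charFctAss} with constant $CT$, and the hypothesis $CT>\frac{1}{2^{3/2}\pi}$ is precisely what is needed to invoke Theorem~\ref{thm:ApproxError}.

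With both ingredients in place, Theorem~\ref{thm:ApproxError} (applied with constant $CT$ in place of $C$) delivers a $\sigma(A^{d,N},B^N)$-measurable $W^{d,N}$ with
\begin{equation*}
\E\!\left[\sup_{x\in[-M,M]^d}|\bar H_{d,N}(x)-H(x)|\right]\leq \frac{C_{\text{app}}\,\|\varphi_d\circ\exp\|_{L^1(\R^d)}\,(\nu+d)^{k+3}}{\sqrt N}\leq \frac{C_{\text{app}}\,c\,d^p\,(\nu+d)^{k+3}}{\sqrt N},
\end{equation*}
and using $(\nu+d)^{k+3}\leq (\nu+1)^{k+3}d^{k+3}$ for $d\geq 1$ yields \eqref{eq:L2error4} with $\mathfrak p := p+k+3$ and $C_0$ depending only on $\nu,\pi_{\text{b}},C,T,M,c$. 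The main delicate point is the rigorous Feynman--Kac justification for the non-local PDE, since $u_d$ is only assumed polynomially growing and \eqref{eq:PIDEs} involves an integro-differential operator; this is handled by exploiting compact support of $\nu_L^d$ (which guarantees exponential integrability of $L^d_T$) and the resulting square-integrability needed for the localization argument. The rest is a direct substitution into Theorem~\ref{thm:ApproxError}.
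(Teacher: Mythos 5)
Your proposal is correct and follows the same overall architecture as the paper's proof: reduce to Theorem~\ref{thm:ApproxError} by (a) establishing the stochastic representation $u_d(T,\exp(x))=\E[\varphi_d(\exp(x+L_T^d))]$ and (b) bounding $|\E[e^{i\xi\cdot L_T^d}]|\leq e^{-CT\|\xi\|^2}$ via L\'evy--Khintchine, then track the constants to get $C_0 d^{\mathfrak p}$ with $\mathfrak p=p+k+3$. Step (b) and the final bookkeeping are identical to the paper's. The one place where you genuinely diverge is the Feynman--Kac step: the paper delegates this to Proposition~\ref{prop:FeynmanKac}, which is proved by showing that $v_d(t,x)=u_d(T-t,\exp(x))$ is a viscosity sub- and supersolution of the transformed PIDE and then invoking the uniqueness/representation theorem of \cite{BBP1997} (whose boundedness hypotheses are met thanks to the compact support of $\nu^d_{\mathrm L}$), whereas you propose the classical route of applying It\^o's formula for semimartingales with jumps to $t\mapsto u_d(T-t,s\exp(L^d_t))$, localizing, and passing to the limit. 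Both routes work; the viscosity argument buys you a ready-made uniqueness theorem at the cost of verifying its hypotheses, while your It\^o argument is more self-contained but leaves some details implicit that you would need to spell out in a full write-up: that the Brownian and compensated-jump integrals are true martingales up to the localizing stopping times, and the passage $t\uparrow T$, where $u_d$ is only continuous (not $C^{1,2}$), which requires the uniform integrability you allude to (polynomial growth of $u_d$ plus exponential moments of $\sup_{t\le T}\|L^d_t\|$ from the compactly supported jump measure) together with $L^d_{T-}=L^d_T$ a.s. None of this is a gap in the idea --- the ingredients you cite are exactly the right ones --- and as a side benefit your route does not lean on the extra boundedness assumption on $\varphi_d$ that appears in the statement of Proposition~\ref{prop:FeynmanKac}.
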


\begin{proof}
Let $d, N \in \N$, $\Phi(x) = \varphi_d(\exp(x))$ and $H(x)= u_d(T,\exp(x))$ for $x \in \R^d$.  Then Proposition~\ref{prop:FeynmanKac} below shows that $H(x)= \E[\varphi_d(\exp(x+{L^d_T}))]= \E[\Phi(x+L_T^d)]$. 

Furthermore, by the L\'evy-Khintchine representation (see for instance \citet[Theorem~8.1]{Sato1999} or \citet[Theorem~1.2.14 and Theorem~1.3.3]{Applebaum2009}) we have $\E[e^{i \xi \cdot L^d_T}] = \exp(T \eta(\xi))$ with 

\begin{equation}\label{eq:LevySymbol}
\eta(\xi) 
= 
i \xi \cdot \gamma^d -\frac{1}{2} \xi \cdot \Sigma^d \xi + \int_{\R^d \setminus \{0\}} 
\left[e^{i \xi \cdot y}-1- i \xi \cdot y \mathbbm{1}_{\{\|y\|\leq 1\}} \right] \nu^d_\mathrm{L}(d y)\;, \quad \xi \in \R^d.  
\end{equation}
In particular, 
\[
\mathrm{Re} \, \eta(\xi) = -\frac{1}{2} \xi \cdot \Sigma^d \xi + \int_{\R^d \setminus \{0\}} 
\left[\cos(\xi \cdot y)-1 \right] \nu^d_\mathrm{L}(d y) \leq -\frac{1}{2} \xi \cdot \Sigma^d \xi,
\]
since the integrability property $\int_{\R^d} (\|y\|^2 \wedge 1)\nu^d_\mathrm{L}(dy) < \infty$ guarantees that $y \mapsto \cos(\xi \cdot y)-1$ and $y \mapsto \sin(\xi \cdot y)-\xi \cdot y \mathbbm{1}_{\{\|y\|\leq 1\}}$  are indeed $\nu^d_\mathrm{L}$-integrable for any $\xi \in \R^d$. This and \eqref{eq:Ccond} show that for all $\xi \in \R^d$
\begin{equation}
\label{eq:auxEq8}
|\E[e^{i \xi \cdot L^d_T}]| = e^{T \mathrm{Re} \, \eta(\xi)} \leq  \exp(-CT\|\xi\|^2). 
\end{equation} 
Theorem~\ref{thm:ApproxError} hence shows that there exist $C_{\text{app}}>0$, $k \in \N$ and an $\R^N$-valued, $\sigma(A^{d,N},B^{N})$-measurable random vector $W^{d,N}$ such that the random neural network $
\bar{H}_{d,N} = H^{A^{d,N},B^N}_{W^{d,N}}$ satisfies
\begin{equation}
\label{eq:L2errorAux}
	\E\left[\sup_{x \in [-M,M]^d} |\bar{H}_{d,N}(x) - H(x) | \right]
 \leq \frac{C_{\text{app}} \|\Phi\|_{L^1(\R^d)} (\nu+d)^{k+3}}{\sqrt{N}}.
\end{equation}
Thus, we obtain
\[
\begin{aligned}
	\E\left[\sup_{x \in [-M,M]^d} |\bar{H}_{d,N}(x) - u_d(T,\exp(x))|\right] & \leq \frac{C_{\text{app}} c d^p (\nu+1)^{k+3} d^{k+3}}{\sqrt{N}}
\\ & = \frac{C_0 d^{\mathfrak{p}}}{\sqrt{N}}
\end{aligned}
\]
with $C_0 = (\nu+1)^{k+3} C_{\text{app}} c$ and $\mathfrak{p} = p + k+3$. 
This proves \eqref{eq:L2error4} and the statement, since $C_{\text{app}}$ in Theorem~\ref{thm:ApproxError} does not depend on $d$ or $N$ and hence the constants $C_0, \mathfrak{p}$ are the same for all $d,N \in \N$. 
\end{proof}

\begin{remark}\label{rmk:stochastic}
Theorem~\ref{thm:LevyApprox} also holds if we directly assume $u_d(T,\exp(x)) = \E[\varphi_d(\exp(x+L_T^d))]$ instead of considering the PDE \eqref{eq:PIDEs}. For instance in the context of mathematical finance many quantities of interest (such as option prices or ``greeks'') are defined in terms of such expectations. In particular, in this situation the hypothesis $\varphi_d \in C((0,\infty)^d,\R)$ is not required (in Theorem~\ref{thm:LevyApprox} this hypothesis is implicit in the assumption $u_d \in C^{1,2}((0,T] \times (0,\infty)^d) \cap C([0,T]\times (0,\infty)^d)$). 

The integrability hypothesis $\varphi_d \circ \exp \in L^1(\R^d)$ is more restrictive, but currently it can not be avoided in the proof of Theorem~\ref{thm:ApproxError}. The hypothesis is satisfied e.g.\ for butterfly or binary options. More general payoffs can be incorporated by truncation (which is often possible without affecting the price significantly) or potentially by employing Fourier representations as in \cite{Carr1999OptionVU} instead of \eqref{eq:HFourier}. 
\end{remark}

\begin{remark}
The assumption $ \nu^d_\mathrm{L}(\{y \in \R^d \, | \, \|y\|>R\}) = 0$ for some $R>1$ is only required to obtain a ``Feynman-Kac representation'' from the results of \cite{BBP1997} (see Proposition~\ref{prop:FeynmanKac} below). This assumption on $\nu^d_\mathrm{L} $ can be weakened to $\int_{\{\|y\|>1\}} e^{y_i} \nu^d_\mathrm{L}(dy) < \infty$ for $i=1,\ldots,d$ for instance in the situation of Remark~\ref{rmk:stochastic} when we directly assume a stochastic representation for $u_d$. 

Alternatively, instead of assuming $ \nu^d_\mathrm{L}(\{y \in \R^d \, | \, \|y\|>R\}) = 0$ for some $R>1$ we could impose that $ \nu^d_\mathrm{L}$ is a finite measure and \eqref{eq:Ccond} holds. Then we may apply \citet[Proposition~5.3]{Pham1998} instead of \cite{BBP1997} in the proof of Proposition~\ref{prop:FeynmanKac} below and also obtain the representation $u_d(t,s)=\E[\varphi_d(s\exp({L^d_t}))]$.  
\end{remark}

The proof of Theorem~\ref{thm:LevyApprox} employs the ``Feynman-Kac representation'' from Proposition~\ref{prop:FeynmanKac} below.  Proposition~\ref{prop:FeynmanKac} is essentially a consequence of the results from \cite{BBP1997}. For the readers' convenience we provide a proof of Proposition~\ref{prop:FeynmanKac} and make explicit how it can be obtained from \cite{BBP1997}. Related results and further references can be found, for instance, in \citet{Pham1998}, \cite{ContVolt2005}, \citet[Proposition~3.3]{ContVolt2006}, \cite{GlauClassLevy2016}. 

\begin{proposition}\label{prop:FeynmanKac}
Suppose $u_d \in C^{1,2}((0,T] \times (0,\infty)^d) \cap C([0,T]\times (0,\infty)^d)$ is an at most polynomially growing solution to the PDE \eqref{eq:PIDEs} and $\varphi_d$ is bounded. Then for all $(t,s) \in [0,T]\times (0,\infty)^d$ it holds that $u_d(t,s)=\E[\varphi_d(s\exp({L^d_t}))]$. 
\end{proposition}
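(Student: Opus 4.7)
The strategy is to reduce to the standard Kolmogorov equation for $L^d$ itself via the exponential change of variables $s=\exp(x)$, and then to apply Itô's formula for jump processes to identify the process $r\mapsto u_d(t-r,s\exp(L^d_r))$ as a martingale, whose expected endpoint is $\E[\varphi_d(s\exp(L^d_t))]$ and whose starting value is $u_d(t,s)$.

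First, I set $w(t,x) := u_d(t,\exp(x))$; by hypothesis $w \in C^{1,2}((0,T]\times\R^d) \cap C([0,T]\times\R^d)$ and $w(0,\cdot) = \varphi_d \circ \exp$. Using the identities $s_i\partial_{s_i}u_d = \partial_{x_i}w$ and $s_k s_l \partial_{s_k}\partial_{s_l}u_d = \partial_{x_k}\partial_{x_l}w - \delta_{kl}\partial_{x_k}w$ at $s=\exp(x)$, a direct calculation shows that the specific form of the shifted drift $\tilde\gamma^d_i = \gamma^d_i + \tfrac{1}{2}\Sigma^d_{i,i} + \int(e^{y_i}-1-y_i\mathbbm{1}_{\{\|y\|\leq 1\}})\nu^d_\mathrm{L}(dy)$ in \eqref{eq:PIDEs} is exactly what is needed to absorb all Itô-correction terms, so that $w$ solves the Kolmogorov equation
\[
\partial_t w(t,x) = (\mathcal{L}w(t,\cdot))(x),\qquad w(0,x)=\varphi_d(\exp(x)),
\]
where $\mathcal{L}$ is the infinitesimal generator of $L^d$ with triplet $(\Sigma^d,\gamma^d,\nu^d_\mathrm{L})$. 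In these coordinates the claim becomes $w(t,x) = \E[w(0,\,x+L^d_t)]$.

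Next, fix $(t,x_0)\in(0,T]\times\R^d$ and $\epsilon\in(0,t)$, and consider the process $M_r := w(t-r,\,x_0+L^d_r)$ for $r\in[0,t-\epsilon]$. Applying Itô's formula for Lévy processes (e.g.\ \citet[Theorem~4.4.7]{Applebaum2009}), the $\partial_t w$ contribution combines with the diffusion part and with the compensated Poisson integral to produce a drift of the form $(\mathcal{L}w - \partial_t w)(t-r,\,x_0+L^d_{r-})\,dr$, which vanishes by the previous step. Hence $M_\cdot$ is a local martingale on $[0,t-\epsilon]$.

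To promote $M$ to a genuine martingale, the key input is the compact support of $\nu^d_\mathrm{L}$: this implies that $L^d$ has exponential moments of all orders, and in fact $\E[\exp(p\sup_{r\leq T}\|L^d_r\|)]<\infty$ for every $p>0$. The polynomial growth of $u_d(t,\cdot)$ in $s$ translates into exponential-in-$\|x\|$ growth of $w$ and its spatial derivatives uniformly on $[\epsilon,T]\times\R^d$, and a standard localization together with dominated convergence then gives a uniform $L^2$-bound on $M_\cdot$ and yields $\E[M_{t-\epsilon}]=M_0$, i.e.\ $\E[w(\epsilon,\,x_0+L^d_{t-\epsilon})]=w(t,x_0)$. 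Letting $\epsilon\downarrow 0$ and using continuity of $w$ on $[0,T]\times\R^d$ together with the same (\(\epsilon\)-independent) exponential-moment dominating function, dominated convergence produces $\E[w(0,\,x_0+L^d_t)]=w(t,x_0)$, which is the desired Feynman--Kac identity after translating back to $s=\exp(x_0)$. The main obstacle is exactly this last step from a local martingale to a true martingale under only polynomial growth of $u_d$: it is precisely the compact-support assumption on $\nu^d_\mathrm{L}$ (and hence all exponential moments of $L^d$) that reconciles the exponential-in-$\|x\|$ growth of $w$ with the uniform integrability needed to take $\E[M_{t-\epsilon}]=M_0$, and this is the same technical ingredient that makes the result of \cite{BBP1997} applicable.
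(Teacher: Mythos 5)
Your argument is correct in substance but follows a genuinely different route from the paper. The paper also passes to logarithmic coordinates and derives the transformed equation (its \eqref{eq:PIDEx}, with the time reversal $v_d(t,x)=u_d(T-t,\exp(x))$), but then it does \emph{not} use It\^o's formula at all: it verifies that $v_d$ is a viscosity sub- and supersolution and invokes the uniqueness/representation theorem of \citet[Theorem~3.5]{BBP1997} for semilinear PIDEs, identifying the underlying jump-diffusion with $x+L^d_{T-t}$ via the L\'evy--It\^o decomposition. There the compact support of $\nu^d_\mathrm{L}$ is used to satisfy the boundedness hypothesis on the jump coefficient in \cite{BBP1997}, whereas you use it to obtain exponential moments of $\sup_{r\le T}\|L^d_r\|$. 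Your classical verification argument is more self-contained (no viscosity machinery, no external uniqueness theorem) at the price of having to carry out the local-martingale-to-martingale step yourself; the paper's route outsources exactly that technical burden to \cite{BBP1997}.

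One small overclaim to fix: polynomial growth of $u_d$ gives exponential-in-$\|x\|$ growth of $w$, but it gives \emph{no} control on the spatial derivatives of $w$, so the phrase ``and its spatial derivatives'' does not follow from the hypotheses. Fortunately you do not need it: after localizing with $\tau_n$, the bound $\sup_{r\le t-\eps}|M_{r\wedge\tau_n}|\le C\exp\bigl(p(\|x_0\|+\sup_{r\le T}\|L^d_r\|)\bigr)$ uses only the growth of $w$ itself, and this dominating random variable is integrable by the compact support of $\nu^d_\mathrm{L}$; dominated convergence then yields $\E[M_{t-\eps}]=M_0$ without any derivative estimates. State the uniform-integrability step this way and the proof is complete.
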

\begin{proof}
Let $\Phi_d(x) = \varphi_d(\exp(x))$ and  $v_d(t,x)=u_d(T-t,\exp(x))$. 
Firstly, the assumptions on $u_d$ imply that $v_d \in C^{1,2}([0,T) \times \R^d) \cap C([0,T]\times \R^d)$ and a straightforward calculation shows that $v_d$  satisfies the (non-local) PDE
\begin{equation}
\label{eq:PIDEx}
\begin{array}{rl} - \partial_t v_d(t,x) 
& =  \frac{1}{2} \sum_{k,l=1}^d \Sigma^d_{k,l}  \partial_{x_k} \partial_{x_l} v_d(t,x) + \sum_{i=1}^d \left(\gamma^d_i+\int_{\R^d}  y_i \mathbbm{1}_{\{\|y\|> 1 \}} \nu^d_\mathrm{L}(d y)   \right) 
 \partial_{x_i} v_d(t,x) 
\\ & \quad  + \int_{\R^d} \left[v_d(t,x+y)-v_d(t,x)-\sum_{i=1}^d y_i \partial_{x_i} v_d(t,x) \right] 
\nu^d_\mathrm{L}(d y) , 
\\
v_d(T,x) &= \Phi_d(x)
\end{array}
\end{equation}
for $x \in \R^d, t \in [0,T)$. Set $\hat{\gamma}^d = (\gamma^d+\int_{\R^d}  y \mathbbm{1}_{\{\|y\|> 1 \}} \nu^d_\mathrm{L}(d y))$ and for $\phi \in C^2(\R^d)$ write 
\begin{equation}
\label{eq:auxEq7}
\begin{aligned}
\mathcal{A}\phi(x) & =  \frac{1}{2}\mathrm{Trace}(\Sigma^d D^2_x \phi(x)) + [D_x \phi(x)]\hat{\gamma}^d 
\\
\mathcal{K}\phi(x) & = \int_{\R^d} (\phi(x+y)-\phi(x)- [D_x\phi(x)] y)  \nu^d_\mathrm{L}(d y).
\end{aligned}
\end{equation}

Now if $\phi \in C^2([0,T]\times \R^d)$ and $(t_0,x_0) \in [0,T) \times \R^d$ is a global maximum point of $v_d-\phi$, then $D_{(t,x)}(v_d-\phi)(t_0,x_0) = 0$ and $D^2_{x}(v_d-\phi)(t_0,x_0) \leq 0$. Thus,  \eqref{eq:PIDEx} implies
\begin{equation}
\label{eq:auxEq6}
\begin{aligned} 
- & \partial_t \phi(t_0,x_0) - \mathcal{A}\phi(t_0,x_0) - \mathcal{K}\phi(t_0,x_0) \\ &  =  \mathcal{A}(v_d-\phi)(t_0,x_0)  + \mathcal{K}(v_d-\phi)(t_0,x_0)
\\ & =  \frac{1}{2}\mathrm{Trace}(\sqrt{\Sigma^d} D^2_{x}(v_d-\phi)(t_0,x_0)\sqrt{\Sigma^d}) + \int_{\R^d} (v_d-\phi)(t_0,x_0+y)-(v_d-\phi)(t_0,x_0) \nu^d_\mathrm{L}(d y)
\\ & \leq 0. 
\end{aligned}
\end{equation}
This and \citet[Lemma~3.3]{BBP1997} show that $v_d$ is a viscosity subsolution of \eqref{eq:PIDEx} in the sense of \cite{BBP1997}. Similarly, one argues that $v_d$ is also a viscosity supersolution to \eqref{eq:PIDEx}. \citet[Theorem~3.5]{BBP1997} hence shows that for all $(t,x)\in [0,T]\times \R^d$ we have $v_d(t,x) = \E[\Phi_d(X_T^{t,x})]$ (see also the proof of \citet[Corollary~5.4]{GS21}) where $(X^{t,x}_r)_{r \geq t}$ is the unique solution to $X^{t,x}_t = x$, 
\[ \begin{aligned}
d X^{t,x}_r &  = \hat{\gamma}^d dr + \sqrt{\Sigma^d} W^d_r + \int_{\R^d \setminus \{0\}} z \tilde{N}^d(dt,dz) 
\\ & = \gamma^d dr + \sqrt{\Sigma^d} W^d_r + \int_{\R^d \setminus \{0\}} z \mathbbm{1}_{\{\|z\|\leq 1\}} \tilde{N}^d(dr,dz) + \int_{\R^d \setminus \{0\}} z \mathbbm{1}_{\{\|z\|> 1\}} N^d(dr,dz)
\end{aligned}
\] 
where $N^d$ is a Poisson random measure on $\R_+ \times (\R^d \setminus \{0\})$ with intensity $\nu^d_\mathrm{L}$, $W^d$ is an independent $d$-dimensional standard Brownian motion and  $\tilde{N}^d(dt,dz) = N^d(dt,dz) - dt  \nu^d_\mathrm{L}(dz)$. Note that the assumption $ \nu^d_\mathrm{L}(\{y \in \R^d \, | \, \|y\|>R\}) = 0$ for some $R>1$ guarantees that the function $\beta$ in \citet[Theorem~3.5]{BBP1997} can be chosen so that it satisfies the required boundedness hypothesis. Hence, by the L\'evy-It\^o-decomposition (see for instance \citet[Theorem~19.2]{Sato1999} or \citet[Theorem~2.4.16]{Applebaum2009}) we obtain that $X^{t,x}_T$ has the same distribution as $x+L_{T-t}^d$.  Thus, we have proved the representation 
$
v_d(t,x) = \E[\Phi_d(x+L_{T-t}^d)]
$
and therefore for all $x \in \R^d$, with $s = \exp(x)$,
\[
u_d(t,s)=v_d(T-t,x)=\E[\varphi_d(\exp(x+L_t^d))]=\E[\varphi_d(s\exp({L^d_t}))].
\]
\end{proof}

\section{Learning by random neural networks}
\label{sec:Learning}
In this section we use random neural networks $H^{A,B}_W$ to learn functions of the type considered in Section~\ref{subsec:ApproxGeneral}. In Section~\ref{subsec:learningProblem} we formulate the considered learning problem. In Sections~\ref{subsec:Regression}, \ref{subsec:RidgeRegression}, \ref{subsec:SGD} we then provide bounds on the prediction error that arises when $W$ is learnt by means of regression, constrained regression and stochastic gradient descent, respectively. In Sections~\ref{sec:Options} we will then apply these results to obtain prediction error bounds for random neural networks applied to learning option prices in certain non-degenerate models. 

\subsection{Formulation of the learning problem}
\label{subsec:learningProblem}
Let $n \in \N$ and suppose that we are given i.i.d.\ $\R^d\times \R$-valued random variables  $(X_1,Y_1),\ldots,$ $(X_n,Y_n)$ (the data) which are independent of $(A,B)$. Let $H \colon \R^d \to \R$ be the target function (which we will assume to be of the form specified in Section~\ref{subsec:ApproxGeneral}) and suppose that 
\begin{equation}\label{eq:regressionFunction}
 H(x) = \E[Y_1 | X_1=x],
\end{equation}
for $(\P \circ (X_1)^{-1})$-a.e.\ $x \in \R^d$, that is, $H$ is the regression function. This encompasses two important situations: 

\begin{itemize}
	\item \textit{Learning $H$ from noisy observations}: We observe the unknown function $H$ (the solution to a PDE or market prices of options) at $n$ data points up to some additive noise. Thus, in this situation we suppose $Y_i = H(X_i) + \varepsilon_i$, $i=1,\ldots,n$, for $\varepsilon_1,\ldots,\varepsilon_n$  i.i.d.\ random variables which are independent of $(X_1,\ldots,X_n)$ and satisfy $\E[\varepsilon_1] = 0$. 
	\item \textit{Solving PDEs by learning}: Solving linear Kolmogorov PDEs with affine coefficients has been formulated as a learning problem in  \cite{BernerGrohsJentzen2018}. The setting considered here also covers this type of learning problem.  
\end{itemize}

The target function $H$ is considered unknown and is to be learnt from the data $D_n=((X_1,Y_1),\ldots,(X_n,Y_n))$ using random neural networks. To do this, we recall that $H(X_1) = \E[Y_1|X_1]$  minimizes 
\begin{equation}\label{eq:risk}
\mathcal{R}(f) =  \E[(f(X_1)-Y_1)^2]
\end{equation}
among all measurable functions $f \colon \R^d \to \R$. Thus, to learn $H(x) =\E[Y_1|X_1=x]$ from the data one aims at finding a minimizer of 
\begin{equation}\label{eq:Empiricalrisk}
\mathcal{R}_n(f) =  \frac{1}{n}\sum_{i=1}^n (f(X_i)-Y_i)^2.
\end{equation}
$\mathcal{R}_n(f)$ is the empirical version of \eqref{eq:risk}. In the situation considered here we know from Section~\ref{sec:Approx} that $H$ can be approximated well by random neural networks and so we learn $H$ by minimizing $\mathcal{R}_n(\cdot)$ only over this class of functions, i.e.\ by minimizing $\mathcal{R}_n(H^{A,B}_W)$ over neural networks $H^{A,B}_W$ with random weights $(A,B)$ and trainable $W$ (see Section~\ref{sec:RandomNN}). This leads to the optimization problem 
\begin{equation} \label{eq:ERM}
\widehat{W} = \arg \min_{W \in \mathcal{W}} \left\lbrace \frac{1}{n} \sum_{i=1}^n (H^{A,B}_W(X_i) - Y_i)^2  \right\rbrace
\end{equation} 
for a suitable set $\mathcal{W}$ of $\R^N$-valued, $\sigma(A,B,D_n)$-measurable random vectors. The measurability requirement incorporates the fact that $A,B$ are generated randomly and then fixed and hence the trainable weights may depend on $A,B$. 

Having solved \eqref{eq:ERM}, the learning algorithm then returns the (random) function 
\[
H^{A,B}_{\widehat{W}}(x) =  \sum_{i=1}^N \widehat{W}_i \varrho(A_i \cdot x + B_i), \quad x \in \R^d
\]
as our approximation for $H$. 
To evaluate the learning performance of the random features regression algorithm we need to bound the (squared)
learning error (or prediction error) 
\begin{equation}
\label{eq:testError}
\E[|H(\bar{X}) - H^{A,B}_{\widehat{W}}(\bar{X}) |^2],
\end{equation}
where $(\bar{X},\bar{Y})$ has the same distribution as $(X_1,Y_1)$ and is independent of $(A,B,D_n)$.

\subsection{Regression}
\label{subsec:Regression}

Consider first the case $\mathcal{W} = \{W \colon \Omega \to \R^N \, | \, W \text{ is } \sigma(A,B,D_n)\text{-measurable}\}$.  In this case computing \eqref{eq:ERM} amounts to a simple least squares optimization. Hence $\widehat{W}$ can be calculated explicitly by solving 
\begin{equation}\label{eq:OLSSol}
(\mathbf{X}^\top \mathbf{X}) \widehat{W} = \mathbf{X}^\top \mathbf{Y}
\end{equation}
where $\mathbf{X}$ is the $n \times N$-random matrix with entries $\mathbf{X}_{i j} = \varrho(A_j \cdot X_i + B_j)$ and $\mathbf{Y}$ is the $n$-dimensional random vector with $\mathbf{Y}_i = Y_i$ for $i=1,\ldots,n$, $j=1,\ldots,N$. 

Thus, there is no additional ``optimization error'' component in this case and we can directly bound the prediction error \eqref{eq:testError} by combining the approximation error estimates from Section~\ref{sec:Approx} with a result from \citet{DistributionFreeTheory}.

The trained neural network $ H^{A,B}_{\widehat{W}}$ will be capped at a level $L>0$ by applying the truncation $T_L \colon \R \to \R$,
$T_L(u) = \max(\min(u,L),-L)$.

\begin{theorem}\label{thm:TrainingErrRegression} 
Let $C>\frac{1}{2^{3/2}  \pi}$ and let $\nu>1$. Suppose $A_1 \sim t_{\nu}(0,\mathbbm{1}_d)$ and $B_1$ has density $\pi_{\text{b}}$ satisfying \eqref{eq:polyTails1D}. 
Suppose $H \colon \R^d \to \R$ is of the form $H(x) = \E[\Phi(x+V)]$ with $\Phi \in L^1(\R^d)$ and $V$ satisfying \eqref{eq:charFctAss}. Assume that $\|X_1\|_\infty \leq M$, $\P$-a.s. 
Let $L>0$ and assume $\sigma^2=\sup_{x \in \R^d} \E[(Y_1-H(X_1))^2|X_1=x] < \infty$ and $|H(x)| \leq L$ for all $x \in \R^d$. 
Then there exist $k \in \N$ and $\tilde{C}_{\text{app}}>0$ such that 
\begin{equation}
\label{eq:fullError}
\begin{aligned}
& \E[|H(\bar{X}) - T_L(H^{A,B}_{\widehat{W}}(\bar{X})) |^2]^{1/2} \\ & \leq \tilde{C}_{\text{app}} \max(\sigma,L) \frac{(\log(n)+1)^{1/2}\sqrt{N}}{\sqrt{n}} +   \frac{\tilde{C}_{\text{app}} \|\Phi\|_{L^1(\R^d)} (\nu+d)^{k+3}}{\sqrt{N}}.
\end{aligned}
\end{equation}
The constant $k$ only depends on $\pi_{\text{b}}$ and the constant  $\tilde{C}_{\text{app}}$ depends on $\nu, \pi_{\text{b}}, C, M$, but it does not depend on  $d$, $n$ or $N$.
\end{theorem}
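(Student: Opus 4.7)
The plan is to decompose the squared prediction error into a generalization (estimation) component and an approximation component by conditioning on the random weights $(A,B)$ and applying a standard oracle-type bound for truncated least-squares estimators over finite-dimensional linear function classes. The crucial observation is that, once $(A,B)$ is fixed, the candidate hypothesis class $\{H^{A,B}_W : W \in \R^N\}$ is a linear vector space of dimension at most $N$ (spanned by the random feature maps $x\mapsto \varrho(A_i\cdot x + B_i)$), so textbook bounds for least squares in a linear class apply directly.

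Concretely, I would proceed as follows. First, I would condition on $(A,B)$ and invoke a result of the Györfi--Kohler--Krzy\.zak--Walk type (the truncated least-squares oracle inequality for linear function classes, e.g.\ Theorem~11.3 in \cite{DistributionFreeTheory}, in a version that accommodates unbounded response with conditional noise variance bounded by $\sigma^2$ together with a bounded regression function $|H|\le L$). This yields, conditionally on $(A,B)$,
\begin{equation*}
\E\!\left[\bigl|T_L(H^{A,B}_{\widehat W}(\bar X)) - H(\bar X)\bigr|^2 \,\Big|\, A,B\right]
\;\le\; c \max(\sigma^2,L^2)\,\frac{(N+1)(\log n + 1)}{n} \;+\; 8\,\inf_{W \in \R^N} \E\!\left[\bigl|H^{A,B}_W(\bar X) - H(\bar X)\bigr|^2 \,\Big|\, A,B\right],
\end{equation*}
where $c$ is an absolute constant. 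Taking the expectation in $(A,B)$ leaves the first (generalization) term unchanged, while the second (approximation) term is controlled by the existence of a good \emph{random} weight vector: by Theorem~\ref{thm:ApproxError} and Remark~\ref{rmk:L2error} applied with $\mu = \P\circ \bar X^{-1}$ (supported in $[-M,M]^d$ by the hypothesis $\|X_1\|_\infty \le M$), there exists a $\sigma(A,B)$-measurable $W^\ast$ with
\begin{equation*}
\E\!\left[\bigl|H^{A,B}_{W^\ast}(\bar X) - H(\bar X)\bigr|^2\right] \;\le\; \frac{C_{\text{app}}^2 \|\Phi\|_{L^1(\R^d)}^2 (\nu+d)^{2k+6}}{N}.
\end{equation*}
Since the infimum over deterministic $W \in \R^N$ inside the oracle inequality is at most the value attained at $W^\ast(\omega)$ for each fixed $\omega$, this estimate directly upper bounds the averaged approximation term.

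Finally, I would combine the two bounds and take a square root, using $\sqrt{a+b}\le \sqrt a + \sqrt b$, to arrive at
\begin{equation*}
\E\!\left[|H(\bar X)-T_L(H^{A,B}_{\widehat W}(\bar X))|^2\right]^{1/2}
\;\le\; \sqrt{c}\,\max(\sigma,L)\,\frac{\sqrt{(N+1)(\log n+1)}}{\sqrt n}
\;+\; \sqrt{8}\,\frac{C_{\text{app}}\|\Phi\|_{L^1(\R^d)}(\nu+d)^{k+3}}{\sqrt N},
\end{equation*}
which, after absorbing constants into a single $\tilde C_{\text{app}}$, matches \eqref{eq:fullError}.

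The main obstacle I anticipate is the interface between the oracle inequality (which is classically stated for a deterministic basis under $|Y|\le L$) and our setting, where (i) the basis is random and (ii) $Y$ is only conditionally $L^2$-controlled (variance $\sigma^2$) rather than almost surely bounded. Issue (i) is handled cleanly by conditioning on $(A,B)$ and applying Tonelli, since conditional on $(A,B)$ the data $(X_i,Y_i)$ remain i.i.d.\ and independent of the now-fixed random features. Issue (ii) requires the standard strengthening of the Györfi et al.\ argument (truncation of $Y$ together with a Bernstein-type concentration step) that replaces the $L^2$ bound on the response by $\max(\sigma^2,L^2)$; this appears in several places in the literature and is what produces the $\max(\sigma,L)$ factor in the final bound. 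Once these points are in place, all other steps are a straightforward combination of Theorem~\ref{thm:ApproxError}, Remark~\ref{rmk:L2error}, and the triangle inequality.
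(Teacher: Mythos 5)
Your proposal is correct and follows essentially the same route as the paper: condition on $(A,B)$, apply the truncated least-squares oracle inequality for the $N$-dimensional linear class (Theorem~11.3 of \cite{DistributionFreeTheory}), bound the conditional infimum by the value at the $\sigma(A,B)$-measurable weight vector $W^*$ from Theorem~\ref{thm:ApproxError} via Remark~\ref{rmk:L2error}, and take square roots. The only point the paper treats more carefully than you do is the measurable selection of $\widehat{W}$ as a function of $(A,B,D_n)$ (via the pseudo-inverse), and note that Theorem~11.3 of \cite{DistributionFreeTheory} already accommodates the conditional-variance/bounded-regression-function setting, so the ``standard strengthening'' you anticipate needing is not actually required.
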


\begin{remark}
	Theorem~\ref{thm:TrainingErrRegression} bounds the square-root of the prediction error by $\cO(\frac{\log(n)^{1/2}\sqrt{N}}{\sqrt{n}} + \frac{1}{\sqrt{N}})$. This matches, up to constants, the error bound obtained in the seminal work \cite{Barron1994ApproximationAE} for general ``Barron functions''. In \cite{Barron1994ApproximationAE} all parameters of the network are trainable and the neural network estimator is defined via empirical risk minimization over a constrained parameter set. However, the optimization error, which arises when the neural network estimator is calculated based e.g.\ on the stochastic gradient descent algorithm,   is not addressed in \cite{Barron1994ApproximationAE}. In contrast, in our situation the class of considered functions is smaller, but the neural network estimator can be directly calculated by solving the linear system \eqref{eq:OLSSol}. Hence, the bound in Theorem~\ref{thm:TrainingErrRegression} captures \textit{the full training error}. 
\end{remark}

\begin{proof}
Firstly, for fixed $a \in (\R^d)^N$, $b \in \R^N$ we consider the function class $\mathcal{F}_{a,b} = \{H_W^{a,b} \, | \, W \in \R^N \}$, $\mathcal{F}_{a,b}(D_n)=\{H_W^{a,b} \, | \, W \colon \Omega \to \R^N \text{ is $\sigma(D_n)$-measurable} \}$ (in \cite{DistributionFreeTheory} the same symbol is used for these two sets) and let $\hat{f}_{a,b} = \arg \min_{f \in \mathcal{F}_{a,b}(D_n)} \mathcal{R}_n(f)$. Then $\mathcal{F}_{a,b}$ is an $N$-dimensional vector space
and hence
\citet[Theorem~11.3]{DistributionFreeTheory} implies that
\begin{equation}\label{eq:auxEq9}
\begin{aligned}
\E& \left[\int_{\R^d} |T_L(\hat{f}_{a,b}(x)) - H(x)|^2 \mu_X(d x)\right] \\ & \leq c \max(\sigma^2,L^2) \frac{(\log(n)+1)N}{n} + 8 \inf_{f \in \mathcal{F}_{a,b}} \int_{\R^d}|f(x)-H(x)|^2 \mu_X(dx),
\end{aligned}
\end{equation}
where $\mu_X$ is the law of $X_1$ under $\P$ and $c = 8+2304[\log(9)+4\log(12e)+1]$. 

For any $a \in (\R^d)^N$, $b \in \R^N$ the minimization problem for $\hat{f}_{a,b}$ can be solved explicitly and we obtain  $\hat{f}_{a,b} = H^{a,b}_{\hat{w}_{a,b}}$, where $\hat{w}_{a,b}$ is a solution to the linear system \eqref{eq:OLSSol} with $A,B$ fixed to $a,b$. A solution always exists (see for instance \citet[Chapter~4.8.1]{Stoer2002}) and, e.g.\ by choosing the solution given in terms of the pseudo-inverse matrix as $\widehat{W} = (\mathbf{X}^\top \mathbf{X})^{\dagger} \mathbf{X}^\top \mathbf{Y}$, it is possible to write $ \widehat{W} = F(A,B,D_n)$ for a measurable function $F \colon (\R^d)^N\times  \R^N\times (\R^d\times\R)^n \to \R^N$ and select $\hat{w}_{a,b}$ in such a way that $\hat{w}_{a,b} = F(a,b,D_n)$.
 
Using independence we thus obtain from \eqref{eq:auxEq9}
\begin{equation}\label{eq:auxEq10}
\begin{aligned}
\E& \left[|T_L(H^{A,B}_{\widehat{W}}(\bar{X})) - H(\bar{X})|^2 | A,B \right] = \left. \E [|T_L(H^{a,b}_{\hat{w}_{a,b}}(\bar{X})) - H(\bar{X})|^2 ]\right\rvert_{(a,b)=(A,B)} \\ & \leq c \max(\sigma^2,L^2) \frac{(\log(n)+1)N}{n} + 8 \left. \left(\inf_{W \in \R^N} \E[|H_W^{a,b}(\bar{X})-H(\bar{X})|^2 ]\right)\right\rvert_{(a,b)=(A,B)}
\\ & \leq c \max(\sigma^2,L^2) \frac{(\log(n)+1)N}{n} + 8   \E[|H_{W^*}^{A,B}(\bar{X})-H(\bar{X})|^2 |A,B ],
\end{aligned}
\end{equation}
where $W^*$ denotes the random vector from Theorem~\ref{thm:ApproxError}. We may therefore take expectations in \eqref{eq:auxEq10}, use $\|\bar{X}\|_\infty \leq M$ and insert the bound from Theorem~\ref{thm:ApproxError} (c.f.\ also Remark~\ref{rmk:L2error}) to deduce \eqref{eq:fullError} with $\tilde{C}_{\text{app}} = \max(\sqrt{c},\sqrt{8} C_{\text{app}})$. 
\end{proof}

\subsection{Constrained regression}
\label{subsec:RidgeRegression}

In the next result we consider a constrained regression estimator, i.e., $\widehat{W}$ in  \eqref{eq:ERM} is calculated with a smaller set of potential weights $\mathcal{W}$. This leads to a  different bound than in Theorem~\ref{thm:TrainingErrRegression}, but for instance for $N=\sqrt{n}$ the same rate is achieved.  

Set $\mathcal{W}_\lambda = \{W \colon \Omega \to \R^N \, | \, W \text{ is } \sigma(A,B,D_n)\text{-measurable}, \|W\| \leq \lambda \text{ $\P$-a.s.} \}$. Computing \eqref{eq:ERM} now corresponds to a constrained regression problem
\begin{equation} \label{eq:ConstrainedRegression}
\widehat{W}_\lambda = \arg \min_{W \in \mathcal{W}_\lambda} \left\lbrace \frac{1}{n} \sum_{i=1}^n (H^{A,B}_W(X_i) - Y_i)^2  \right\rbrace.
\end{equation} 
The solution to \eqref{eq:ConstrainedRegression} is given explicitly as follows: $\widehat{W}_\lambda$ coincides with the solution $\widehat{W}$ to the unconstrained problem \eqref{eq:OLSSol} with minimal norm in case $\widehat{W}$ satisfies $\|\widehat{W}\| \leq \lambda$. Otherwise $\widehat{W}_\lambda$ is given explicitly as
\begin{equation}\label{eq:ConstrainedSol}
\widehat{W}_\lambda = (\mathbf{X}^\top \mathbf{X} + \mathbbm{1} \Lambda)^{-1}  \mathbf{X}^\top \mathbf{Y}
\end{equation}
with $\Lambda$ a non-negative $\sigma(A,B,D_n)$-measurable random variable\footnote{This means that once the data and the random weights have been sampled/observed (i.e.\ conditionally on these) $\Lambda$ is just a constant.} such that $\|\widehat{W}_\lambda\| =\lambda$.  The two cases can be summarized by setting $\Lambda = 0$ in the first case and interpreting the inverse in \eqref{eq:ConstrainedSol} as a pseudo-inverse, then $\widehat{W}_\lambda$ is given by \eqref{eq:ConstrainedSol} in both cases.

We now provide a bound on the prediction error for random neural networks with parameters learned according to \eqref{eq:ConstrainedRegression}.

\begin{theorem}\label{thm:TrainingErrConstrRegression} 
Let $C>\frac{1}{2^{3/2}  \pi}$ and let $\nu>2$. Suppose $A_1 \sim t_{\nu}(0,\mathbbm{1}_d)$ and $B_1$ has density $\pi_{\text{b}}$ satisfying \eqref{eq:polyTails1D}. 
Suppose $H \colon \R^d \to \R$ is of the form $H(x) = \E[\Phi(x+V)]$ with $\Phi \in L^1(\R^d)$ and $V$ satisfying \eqref{eq:charFctAss}. Assume that $\|X_1\|_{\infty} \leq M$, $\P$-a.s.\ and $\E[|Y_1|^4]< \infty$. 
Let $k \in \N$ and $C_{\text{app}},C_{\text{wgt}}>0$ be as in Theorem~\ref{thm:ApproxError}. Let $\lambda >0$ satisfy   $\frac{C_{\text{wgt}}\|\Phi\|_{L^1(\R^d)}(\nu+d)^{2k+\frac{1}{2}}}{\sqrt{N}} \leq 
\lambda \leq \frac{C_{\text{lam}} d^{p}
}{\sqrt{N}}$ for some $p\geq 0$, $C_{\text{lam}}>0$ not depending on $n,N,d$. 
Then there exists $C_{\text{est}}>0$ such that  
	\begin{equation}
	\label{eq:fullError2}
	\begin{aligned}
	& \E[|H(\bar{X}) - H^{A,B}_{\widehat{W}_\lambda}(\bar{X}) |^2]^{1/2}   
	\leq \frac{C_{\text{app}} \|\Phi\|_{L^1(\R^d)} (\nu+d)^{k+3}}{\sqrt{N}} +  \frac{C_{\text{est}} d^{p+1}
	}{n^{\frac{1}{4}}}.
	\end{aligned}
	\end{equation}
The constant  $C_{\text{est}}$ depends on $\nu, \pi_{\text{b}}, C_{\text{lam}}, M, \E[Y_1^4]$, but it does not depend on  $d$, $n$ or $N$.
\end{theorem}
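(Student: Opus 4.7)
The plan is to combine the approximation bound of Theorem~\ref{thm:ApproxError} with an estimation bound that exploits the fact that $W\mapsto H^{A,B}_W$ is linear, so both empirical and population risks are quadratic in $W$ on the bounded set $\mathcal{W}_\lambda$.

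Since $H(x)=\E[Y_1\mid X_1=x]$, the identity $\mathcal{R}(f)-\mathcal{R}(H)=\E[|f(\bar X)-H(\bar X)|^2]$, combined with the ERM inequality $\mathcal{R}_n(\widehat W_\lambda)\leq\mathcal{R}_n(W^\star)$ for any $W^\star\in\mathcal{W}_\lambda$, yields
\begin{equation*}
\E[|H^{A,B}_{\widehat W_\lambda}(\bar X)-H(\bar X)|^2]\leq \E[|H^{A,B}_{W^\star}(\bar X)-H(\bar X)|^2] + 2\,\E\!\left[\sup_{W\in\mathcal{W}_\lambda}|\mathcal{R}_n(W)-\mathcal{R}(W)|\right].
\end{equation*}
I take $W^\star$ as the random vector constructed in Theorem~\ref{thm:ApproxError}. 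The pointwise bound $\|W^\star_i\|_{L^\infty(\P)}\leq C_{\text{wgt}}\|\Phi\|_{L^1(\R^d)}(\nu+d)^{2k+\frac12}/N$ yields $\|W^\star\|\leq\sqrt N\,\max_i\|W^\star_i\|_{L^\infty(\P)}\leq C_{\text{wgt}}\|\Phi\|_{L^1(\R^d)}(\nu+d)^{2k+\frac12}/\sqrt N$ almost surely; the left-hand bound on $\lambda$ is designed precisely so that this forces $W^\star\in\mathcal{W}_\lambda$. Remark~\ref{rmk:L2error} then produces the first term on the right-hand side of \eqref{eq:fullError2} from $\E[|H^{A,B}_{W^\star}(\bar X)-H(\bar X)|^2]^{1/2}$.

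For the uniform deviation term, write $\phi(x)=(\varrho(A_j\cdot x+B_j))_{j=1}^N$, $\Sigma=\E[\phi(X_1)\phi(X_1)^\top\mid A,B]$, $\mu=\E[Y_1\phi(X_1)\mid A,B]$, and $\hat\Sigma,\hat\mu$ for their empirical counterparts built from $D_n$. Because $H^{A,B}_W(x)=W^\top\phi(x)$,
\begin{equation*}
\mathcal{R}_n(W)-\mathcal{R}(W) = W^\top(\hat\Sigma-\Sigma)W-2W^\top(\hat\mu-\mu)+\Big(\tfrac1n\sum_{i=1}^n Y_i^2-\E[Y_1^2]\Big),
\end{equation*}
so on $\mathcal{W}_\lambda$ one obtains $\sup_W|\mathcal{R}_n(W)-\mathcal{R}(W)|\leq \lambda^2\|\hat\Sigma-\Sigma\|_{\mathrm{op}}+2\lambda\|\hat\mu-\mu\|+|\tfrac1n\sum_i Y_i^2-\E[Y_1^2]|$. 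Each of these three quantities is the deviation of an empirical mean of i.i.d.\ summands from its expectation, so by Jensen's inequality its $L^1$-norm decays as $n^{-1/2}$ times the square root of the corresponding second moment ($\E[\|\phi(X_1)\|^4]$, $\E[Y_1^2\|\phi(X_1)\|^2]$ and $\E[Y_1^4]$, respectively). Using $\|X_1\|_\infty\leq M$ almost surely, $\nu>2$ to obtain $\E[\|A_1\|^2]=d\nu/(\nu-2)$, the polynomial lower bound \eqref{eq:polyTails1D} on $\pi_{\text{b}}$ to control the moments of $B_1$, and $\E[Y_1^4]<\infty$ by assumption, these second moments are polynomial in $d$. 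After multiplying by $\lambda^2\leq C_{\text{lam}}^2 d^{2p}/N$ and $\lambda\leq C_{\text{lam}} d^p/\sqrt N$, one obtains $\E[\sup_{W\in\mathcal{W}_\lambda}|\mathcal{R}_n(W)-\mathcal{R}(W)|]=\mathcal{O}(d^{2p+2}/\sqrt n)$. Substituting this into the decomposition and applying $\sqrt{a+b}\leq\sqrt a+\sqrt b$ converts the $n^{-1/2}$ rate on the squared excess risk into the claimed $d^{p+1}/n^{1/4}$ estimation term in \eqref{eq:fullError2}.

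The main obstacle is controlling $\E[\|\hat\Sigma-\Sigma\|_{\mathrm{op}}]$ sharply in $d$: the crude $\|\cdot\|_{\mathrm{op}}\leq\|\cdot\|_F$ bound calls for fourth moments of $\phi_j$, which, since $A_j\sim t_\nu(0,\mathbbm{1}_d)$ has only marginal second moments under the hypothesis $\nu>2$, must be extracted via the explicit affine-plus-ReLU structure $\phi_j(X_1)=(A_j\cdot X_1+B_j)_+$ together with the boundedness $\|X_1\|_\infty\leq M$ and the polynomial-decay hypothesis on $\pi_{\text{b}}$. A matrix Bernstein-type estimate or a careful pointwise variance calculation of $\mathrm{Var}(\phi_j(X_1)\phi_k(X_1))$ will also be needed to keep the $d$-dependence polynomial of the exponent announced in the theorem.
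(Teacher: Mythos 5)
Your proposal is correct and follows essentially the same route as the paper: the same risk decomposition based on the identity $\mathcal{R}(f)-\mathcal{R}(H)=\E[|f(\bar X)-H(\bar X)|^2]$ plus the ERM inequality, the same choice of $W^\star$ from Theorem~\ref{thm:ApproxError} with the weight bound forcing $W^\star\in\mathcal{W}_\lambda$, and the same quadratic-in-$W$ expansion of $\mathcal{R}_n-\mathcal{R}$ controlled by second moments of $\|A_1\|$ and $|B_1|$; the paper merely inserts a symmetrization step with Rademacher variables before arriving at the same variance computation you perform directly via Jensen. Your closing worry is already resolved by your own setup: conditioning on $(A,B)$, the bound $\E[\|\hat\Sigma-\Sigma\|_{\mathrm{op}}\mid A,B]\le n^{-1/2}\E[\|\phi(X_1)\|^4\mid A,B]^{1/2}$ together with $\|\phi(X_1)\|^2\le 2\sum_j(\|A_j\|^2\|X_1\|^2+B_j^2)$ makes the square root of the conditional fourth moment only quadratic in $\|A_j\|$, so unconditioning requires just $\E[\|A_1\|^2]=\nu d/(\nu-2)$ and no matrix Bernstein inequality is needed.
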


\begin{remark}
 Theorem~\ref{thm:TrainingErrConstrRegression} shows that the prediction error is of order $\cO(\frac{1}{N}+\frac{1}{\sqrt{n}})$. Thus, the error bound decays more quickly than the bound $\cO(\frac{1}{\sqrt{N}}+\frac{1}{\sqrt{n}})$ that was obtained in 
the seminal work \cite{RahimiRecht2008}, where high-probability bounds were obtained for random neural networks trained by constrained regression in a classification setting ($\P(Y_i \in \{1,-1\})=1$). The reason for this faster rate is that we use the mean-square loss here. This allows to write $|\Rc(H)-\Rc(\tilde{H})| = \E[|H(\bar{X}) - \tilde{H}(\bar{X}) |^2]$ due to \eqref{eq:regressionFunction}. For $L$-Lipschitz loss functions the bound  $\Rc(H)-\Rc(\tilde{H}) \leq L \E[|H(\bar{X}) - H^{A,B}_{\widehat{W}_\lambda}(\bar{X}) |^2]^{1/2}$ can be deduced (see \citet[Lemma~2]{RahimiRecht2008}), which leads to an approximation error of order $1/\sqrt{N}$ instead of $1/N$. 

Thus, we are concerned here with a slightly different setting, but our proof of the ``estimation error'' (or generalization error) component is based on similar arguments as the proof in \cite{RahimiRecht2008}.
\end{remark}

\begin{proof}
Firstly, \eqref{eq:regressionFunction} and independence imply 
\begin{equation}
\label{eq:auxEq12}
\begin{aligned}
\E[H(\bar{X})  H^{A,B}_{\widehat{W}_\lambda}(\bar{X}) ] & = \E[\left. \E[\E[\bar{Y}|\bar{X}]  H^{a,b}_{w}(\bar{X})] \right\rvert_{(a,b,w)=(A,B,\widehat{W}_\lambda)} ] 
\\ & = \E[\left. \E[\bar{Y}  H^{a,b}_{w}(\bar{X})] \right\rvert_{(a,b,w)=(A,B,\widehat{W}_\lambda)} ] 
\\ & = \E[\bar{Y}  H^{A,B}_{\widehat{W}_\lambda}(\bar{X})] 
\end{aligned}
\end{equation}
and analogously $\E[H(\bar{X})  H^{A,B}_{W}(\bar{X})] = \E[\bar{Y}  H^{A,B}_{W}(\bar{X})]$ for any $W \in \mathcal{W}_\lambda$. Thus, we calculate 
\begin{equation}
\label{eq:auxEq11}
\begin{aligned}
 & \E[|H(\bar{X}) - H^{A,B}_{\widehat{W}_\lambda}(\bar{X}) |^2]   \\ & \quad = \E[|H(\bar{X}) - H^{A,B}_{W}(\bar{X}) |^2] + \E[|H^{A,B}_{\widehat{W}_\lambda}(\bar{X})- \bar{Y} |^2] - \E[|H^{A,B}_{W}(\bar{X})- \bar{Y} |^2]  \\ & \quad = \E[|H(\bar{X}) - H^{A,B}_{W}(\bar{X}) |^2] + \E[\mathcal{R}(H^{A,B}_{\widehat{W}_\lambda}) - \mathcal{R}(H^{A,B}_{W})]
 \\ & \quad \leq \E[|H(\bar{X}) - H^{A,B}_{W}(\bar{X}) |^2] + \E[\mathcal{R}(H^{A,B}_{\widehat{W}_\lambda}) - \mathcal{R}_n(H^{A,B}_{\widehat{W}_\lambda})  + \mathcal{R}_n(H^{A,B}_{W})- \mathcal{R}(H^{A,B}_{W})],
\end{aligned}
\end{equation}
where we used \eqref{eq:ConstrainedRegression} and $W \in \mathcal{W}_\lambda$ in the last step.

Consider the first term in the right hand side of \eqref{eq:auxEq11}. 
Theorem~\ref{thm:ApproxError} (c.f.\ also Remark~\ref{rmk:L2error}) guarantees that there exists an $\R^N$-valued, $\sigma(A,B)$-measurable random vector $W^*$ such that 
 \begin{equation}
 \label{eq:auxEq13}
 \begin{aligned} \E[|H(\bar{X}) - H^{A,B}_{W^*}(\bar{X}) |^2]^{1/2} \leq 
 \frac{C_{\text{app}} \|\Phi\|_{L^1(\R^d)} (\nu+d)^{k+3}}{\sqrt{N}},
 \end{aligned}
 \end{equation}
where we used that $\|\bar{X}\|_{L^\infty(\P)} \leq M$. Furthermore, 
\eqref{eq:Wbound} shows that $\P$-a.s.\ the weight vector satisfies $\|W^*\| \leq \sqrt{N} \max_{i=1}^N 
\|W_i^*\|_{L^\infty(\P)} \leq  \frac{C_{\text{wgt}}\|\Phi\|_{L^1(\R^d)}(\nu+d)^{2k+\frac{1}{2}}}{\sqrt{N}} \leq \lambda$. Hence, it follows that $W^* \in \mathcal{W}_\lambda$ and so the decomposition \eqref{eq:auxEq11} can be applied with $W=W^*$.

For the second term in the right hand side of \eqref{eq:auxEq11} we let $\widehat{W}^{a,b}_\lambda$ denote the solution to \eqref{eq:ConstrainedRegression} for $(A,B)$ fixed to $(a,b)$. The random variable $\Lambda$ can be written as $\Lambda = F(A,B,D_n)$ for a measurable function $F \colon (\R^d)^N\times  \R^N\times (\R^d\times\R)^n \to [0,\infty)$ (in fact, $F(a,b,d_n) = \inf \{t \geq 0 \, | \, f_{a,b,d_n}(t) \leq \lambda\} $ for the strictly decreasing function $f_{a,b,d_n}(t)= \|(\mathbf{X}_{a,b,d_n}^\top \mathbf{X}_{a,b,d_n} + \mathbbm{1} t)^{-1}  \mathbf{X}_{a,b,d_n}^\top \mathbf{Y}_{a,b,d_n}\|$, where $\mathbf{X}_{a,b,d_n}, \mathbf{Y}_{a,b,d_n}$ are $\mathbf{X}, \mathbf{Y}$ with $(A,B,D_n)$ fixed to $(a,b,d_n)$).   Then from the formula \eqref{eq:ConstrainedSol} it is clear that $\widehat{W}_\lambda = G(A,B,D_n)$ for a measurable function $G$ and $\widehat{W}^{a,b}_\lambda = G(a,b,D_n)$. 
Furthermore, we write $(a,b) \mapsto W^{a,b}$ for the measurable function with $W^{A,B} = W$ (which exists, since $W$ is  $\sigma(A,B)$-measurable) and 
$\mathcal{W}_\lambda^0 = \{w \in \R^N \, | \, \|w\| \leq \lambda \}$. 
Then by independence

\begin{equation}
\label{eq:auxEq14}
\begin{aligned}
  & \E[\mathcal{R}(H^{A,B}_{\widehat{W}_\lambda}) - \mathcal{R}_n(H^{A,B}_{\widehat{W}_\lambda})  + \mathcal{R}_n(H^{A,B}_{W})- \mathcal{R}(H^{A,B}_{W})] 
  \\
  & =  \E[\left.\E[\mathcal{R}(H^{a,b}_{\widehat{W}^{a,b}_\lambda}) - \mathcal{R}_n(H^{a,b}_{\widehat{W}^{a,b}_\lambda})  + \mathcal{R}_n(H^{a,b}_{W^{a,b}})- \mathcal{R}(H^{a,b}_{W^{a,b}})]\right\rvert_{(a,b)=(A,B)}] 
  \\
  & \leq 2 \E\left[ \left.\E\left[ \sup_{w \in \mathcal{W}_\lambda^0} \left| \mathcal{R}(H^{a,b}_{w}) - \mathcal{R}_n(H^{a,b}_{w}) \right| \right] \right\rvert_{(a,b)=(A,B)}\right].
  \end{aligned}
  \end{equation}

We now fix $(a,b)$, consider for $i=1,\ldots,n$, $w \in \mathcal{W}_\lambda^0$ the random variables $U_{w,i}^{a,b} = (H^{a,b}_{w}(X_i)-Y_i)^2$ and let $\varepsilon_1,\ldots,\varepsilon_n$ denote i.i.d.\ Rademacher random variables independent of all other random variables. Employing symmetrization (see for instance \cite[Lemma~11.4]{Boucheron2013}) we obtain
\begin{equation}
\label{eq:auxEq15}
\begin{aligned} 
\E\left[ \sup_{w \in \mathcal{W}_\lambda^0} \left| \mathcal{R}(H^{a,b}_{w}) - \mathcal{R}_n(H^{a,b}_{w}) \right| \right] & \leq 2 \E\left[  \sup_{w \in \mathcal{W}_\lambda^0} \left|\frac{1}{n} \sum_{i=1}^n \varepsilon_i U_{w,i}^{a,b} \right| \right]. 
\end{aligned}
\end{equation}

In the next step we denote by $\mathbf{X}^i$ the vector with components $\mathbf{X}^i_j = \varrho(a_j \cdot X_i + b_j)$, $j=1,\ldots,N$ and rewrite $H^{a,b}_{w}(X_i)=w \cdot \mathbf{X}^i$. 
Then we use the triangle inequality, Jensen's inequality and independence to estimate
 \begin{equation}
\label{eq:auxEq16}
\begin{aligned} 
\E\left[  \sup_{w \in \mathcal{W}_\lambda^0} \left|\frac{1}{n} \sum_{i=1}^n \varepsilon_i U_{w,i}^{a,b} \right| \right]
& \leq  \E\left[  \sup_{w \in \mathcal{W}_\lambda^0} \left|\frac{1}{n} \sum_{i=1}^n \varepsilon_i H^{a,b}_{w}(X_i)^2 \right| \right] + \E\left[  \sup_{w \in \mathcal{W}_\lambda^0} \left|\frac{2}{n} \sum_{i=1}^n \varepsilon_i H^{a,b}_{w}(X_i) Y_i\right| \right] \\ & \quad \quad + \frac{1}{n} \E\left[ \left| \sum_{i=1}^n \varepsilon_i Y_i^2 \right| \right] 
\\ 
& \leq  \E\left[  \sup_{w \in \mathcal{W}_\lambda^0} \left|w^\top \left(\frac{1}{n} \sum_{i=1}^n \varepsilon_i  \mathbf{X}^i [\mathbf{X}^i]^\top \right) w  \right| \right] \\ & \quad \quad  + \E\left[  \sup_{w \in \mathcal{W}_\lambda^0} \left|\frac{2}{n} w^\top \sum_{i=1}^n \varepsilon_i \mathbf{X}^i Y_i\right| \right]  + \frac{1}{n} \E\left[ \left| \sum_{i=1}^n \varepsilon_i Y_i^2 \right|^2 \right]^{1/2}
\\ 
& \leq  \frac{\lambda^2}{n} \E\left[  \left\| \sum_{i=1}^n \varepsilon_i  \mathbf{X}^i [\mathbf{X}^i]^\top  \right\|_{F}^2 \right]^{1/2} + \frac{2 \lambda}{n} \E\left[ \left\| \sum_{i=1}^n \varepsilon_i \mathbf{X}^i Y_i\right\|^2 \right]^{1/2} \\ & \quad \quad + \frac{1}{n} \left( \sum_{i=1}^n \E[Y_i^4] \right)^{1/2},
\end{aligned}
\end{equation}
where $\|\cdot\|_F$ is the Frobenius norm on $\R^{N \times N}$. Denoting by $\langle \cdot,\cdot \rangle_F $ the Frobenius (matrix) inner product on $\R^{N \times N}$  and using independence and $\E[\varepsilon_i \varepsilon_j] = \delta_{ij}$ we obtain 
\[
 \E\left[  \left\| \sum_{i=1}^n \varepsilon_i  \mathbf{X}^i [\mathbf{X}^i]^\top  \right\|_{F}^2 \right] =  \E\left[ \sum_{i,j=1}^n \varepsilon_i \varepsilon_j \langle   \mathbf{X}^i [\mathbf{X}^i]^\top,  \mathbf{X}^j [\mathbf{X}^j]^\top  \rangle \right] = n \E\left[ \left\| \mathbf{X}^1 [\mathbf{X}^1]^\top  \right\|_{F}^2  \right].
\]
Employing an analogous argument for the second term in the right hand side of \eqref{eq:auxEq16} (now with the standard inner product on $\R^N$) yields
 \begin{equation}
\label{eq:auxEq17}
\begin{aligned} 
\E\left[  \sup_{w \in \mathcal{W}_\lambda^0} \left|\frac{1}{n} \sum_{i=1}^n \varepsilon_i U_{w,i}^{a,b} \right| \right] 
& \leq  \frac{\lambda^2}{\sqrt{n}} \E\left[ \left\| \mathbf{X}^1 [\mathbf{X}^1]^\top  \right\|_{F}^2  \right]^{1/2} + \frac{2 \lambda}{\sqrt{n}} \E\left[ \left\| \mathbf{X}^1 Y_1\right\|^2 \right]^{1/2} \\ & \quad \quad + \frac{1}{\sqrt{n}} \E[Y_1^4]^{1/2}. 
\end{aligned}
\end{equation}
Using $\left\| \mathbf{X}^1 [\mathbf{X}^1]^\top  \right\|_{F}^2 = \sum_{k,l=1}^N [\mathbf{X}^1_k]^2 [\mathbf{X}^1_l]^2 = \|\mathbf{X}^1\|^4$ and inserting the bound \eqref{eq:auxEq17} in \eqref{eq:auxEq15}  we obtain
  \begin{equation}
 \label{eq:auxEq18}
 \begin{aligned} 
\E\left[ \sup_{w \in \mathcal{W}_\lambda^0} \left| \mathcal{R}(H^{a,b}_{w}) - \mathcal{R}_n(H^{a,b}_{w}) \right| \right] & \leq  \frac{2\lambda^2}{\sqrt{n}} \E\left[ \left\| \mathbf{X}^1 \right\|^4  \right]^{1/2} + \frac{4 \lambda}{\sqrt{n}} \E\left[ \left\| \mathbf{X}^1 \right\|^2 Y_1^2 \right]^{1/2} \\ & \quad \quad + \frac{2}{\sqrt{n}} \E[Y_1^4]^{1/2}.
 \end{aligned}
 \end{equation}
Employing the bound
\begin{equation} \label{eq:auxEq56} \left\| \mathbf{X}^1 \right\|^2 = \sum_{j=1}^N  [\varrho(a_j \cdot X_1 + b_j)]^2 \leq 2 \sum_{j=1}^N \|a_j\|^2 \| X_1 \|^2 + |b_j|^2 
\end{equation}
 we estimate using the Minkowski integral inequality and the triangle inequality
\[
\begin{aligned}
\E\left[ \left\| \mathbf{X}^1 \right\|^4  \right]^{1/2} & \leq \E\left[ \left(2 \sum_{j=1}^N \|a_j\|^2 \| X_1 \|^2 + |b_j|^2 \right)^2 \right]^{1/2} \leq 2 \sum_{j=1}^N \E\left[ \left(  \|a_j\|^2 \| X_1 \|^2 + |b_j|^2 \right)^2 \right]^{1/2}
\\ & \leq 2 \sum_{j=1}^N \|a_j\|^2\E[\| X_1 \|^4]^{1/2} + |b_j|^2. 
\end{aligned}
\]
The second term in the right hand side of \eqref{eq:auxEq18} can be bounded similarly with \eqref{eq:auxEq56}. Inserting this and \eqref{eq:auxEq18} in \eqref{eq:auxEq14} yields
   \begin{equation}
 \label{eq:auxEq19}
 \begin{aligned} 
 & \E[\mathcal{R}(H^{A,B}_{\widehat{W}_\lambda}) - \mathcal{R}_n(H^{A,B}_{\widehat{W}_\lambda})  + \mathcal{R}_n(H^{A,B}_{W})- \mathcal{R}(H^{A,B}_{W})] 
 \\
 & \leq 2 \E\left[  \frac{4\lambda^2}{\sqrt{n}} \left(\sum_{j=1}^N \|A_j\|^2\E[\| X_1 \|^4]^{1/2} + |B_j|^2\right) \right] \\ & \quad  +2\E\left[ \frac{2^{2+\frac{1}{2}} \lambda}{\sqrt{n}} \left( \sum_{j=1}^N \|A_j\|^2 \E[Y_1^2\| X_1 \|^2] + |B_j|^2 \E[Y_1^2] \right)^{1/2}\right] + \frac{4}{\sqrt{n}} \E[Y_1^4]^{1/2}
 \\
 & \leq  \frac{8\lambda^2N}{\sqrt{n}} (\E[\|A_1\|^2]\E[\| X_1 \|^4]^{1/2} + \E[|B_1|^2]) \\ & \quad  +\frac{2^{3+\frac{1}{2}} \lambda \sqrt{N}}{\sqrt{n}}\left(\E[\|A_1\|^2] \E[Y_1^2\| X_1 \|^2] + \E[|B_1|^2] \E[Y_1^2]\right)^{1/2} + \frac{4}{\sqrt{n}} \E[Y_1^4]^{1/2}.
 \end{aligned}
 \end{equation}
Recall that $A_1$ has a multivariate $t$-distribution  $t_{\nu}(0,\mathbbm{1}_d)$, hence $A_1 \,{\buildrel d \over =}\, Z/\sqrt{U/\nu}$ where $Z \sim \mathcal{N}(0,\mathbbm{1}_d)$ and $U \sim \chi^2(\nu)$ are independent. Thus, $\E[\|A_1\|^2]=\E[\|Z\|^2] \E[\nu/U] =  \nu d /(\nu-2)$.   
Using that $\|X_1\|_\infty \leq M$ and $\lambda \leq \frac{C_{\text{lam}} d^{p}
}{\sqrt{N}}$ we may thus deduce from \eqref{eq:auxEq19} that 
   \begin{equation}
\label{eq:auxEq20}
\begin{aligned} 
& \E[\mathcal{R}(H^{A,B}_{\widehat{W}_\lambda}) - \mathcal{R}_n(H^{A,B}_{\widehat{W}_\lambda})  + \mathcal{R}_n(H^{A,B}_{W})- \mathcal{R}(H^{A,B}_{W})] 
\\
& \leq \frac{C_{\text{est}}^2 d^{2p+2} 
}{\sqrt{n}}
\end{aligned}
\end{equation}
with $C_{\text{est}}^2 = 8C_{\text{lam}}^2 (\frac{\nu}{\nu-2} M^2 + \E[|B_1|^2]) +2^{3+\frac{1}{2}} C_{\text{lam}}(\frac{\nu}{\nu-2}M^2\E[Y_1^2] + \E[|B_1|^2] \E[Y_1^2])^{1/2} + 4 \E[Y_1^4]^{1/2}$ not depending on $d$, $n$ or $N$. 
Combining \eqref{eq:auxEq20} with \eqref{eq:auxEq11} and \eqref{eq:auxEq13} we obtain 
\begin{equation}
\begin{aligned}
& \E[|H(\bar{X}) - H^{A,B}_{\widehat{W}_\lambda}(\bar{X}) |^2]   
\leq \left(\frac{C_{\text{app}} \|\Phi\|_{L^1(\R^d)} (\nu+d)^{k+3}}{\sqrt{N}}\right)^{2} +  \frac{C_{\text{est}}^2 d^{2p+2} 
}{\sqrt{n}}.
\end{aligned}
\end{equation}
\end{proof}

\subsection{Stochastic gradient descent}
\label{subsec:SGD}

For the most common choices of $\mathcal{W}$ the solution to the optimization problem \eqref{eq:ERM} can be obtained by solving the system of linear equations \eqref{eq:OLSSol} or \eqref{eq:ConstrainedSol}, respectively. There may nevertheless be situations in which one is interested in solving \eqref{eq:ERM} using a stochastic gradient descent method (e.g.\ when comparing the performance of different learning methods in an experiment). Therefore, we will briefly discuss optimization of \eqref{eq:ConstrainedRegression} by stochastic gradient descent here and combine our error bound in Theorem~\ref{thm:TrainingErrConstrRegression} with the stochastic gradient descent optimization error bound from \cite{pmlr-v28-shamir13}.

To this end, let $\mathcal{V} = \{w \in \R^N \,|\, \|w\|\leq \lambda \}$ denote the set within which we look for an optimizer, let $\Pi_{\mathcal{V}} \colon \R^N \to \mathcal{V}$ be the orthogonal projection onto $\mathcal{V}$, for $i=1,\ldots,n$ write $\mathbf{X}^i$ for the $\R^N$-valued random vector with components $\mathbf{X}^i_j = \varrho(A_j \cdot X_i + B_j)$, $j=1,\ldots,N$, let $\mathcal{T} \in \{2,3,\ldots\}$ denote the number of stochastic gradient descent iterations, let $\mathfrak{B} \in \{1,\ldots,n\}$ denote the batch size and let $J=\{J_{i,t}\}_{(i,t)\in \{1,\ldots,\mathfrak{B}\} \times \{1,\ldots,\mathcal{T}\}}$ denote i.i.d.\ random variables each having a uniform distribution on $\{1,\ldots,n\}$ and independent of $(A,B,D_n,\bar{X},\bar{Y})$. Then, starting with $W_1 =0$, we iteratively compute
 \begin{equation}\label{eq:SGD}
 W_{t+1} = \Pi_{\mathcal{V}}\left( W_t -  \frac{2\eta_t}{\mathfrak{B}} \sum_{i=1}^{\mathfrak{B}} \mathbf{X}^{J_{i,t}} (W_t \cdot \mathbf{X}^{J_{i,t}} - Y_{J_{i,t}}) \right) , \quad t=1,\ldots,\mathcal{T}-1,
 \end{equation}
where $\eta_t = \eta_0 t^{-1/2}$ for $t=1,\ldots,\mathcal{T}-1$. 
The parameter vector $W_\mathcal{T}$ is then used for the random neural network, i.e., $ H^{A,B}_{W_\mathcal{T}}$ is the learned function approximating $H$. The next proposition provides a bound on the prediction error.

\begin{proposition}
\label{prop:SGDtrained}
Let $C>\frac{1}{2^{3/2}  \pi}$, $\eta_0>0$ and $\nu>4$. Suppose $A_1 \sim t_{\nu}(0,\mathbbm{1}_d)$ and $B_1$ has density $\pi_{\text{b}}$ satisfying \eqref{eq:polyTails1D}. 
Suppose $H \colon \R^d \to \R$ is of the form $H(x) = \E[\Phi(x+V)]$ with $\Phi \in L^1(\R^d)$ and $V$ satisfying \eqref{eq:charFctAss}. Assume that $\|X_1\|_\infty\leq M$, $\P$-a.s.\ and $\E[|Y_1|^4]< \infty$.  Let $\eta_t = \eta_0 t^{-1/2}$ for $t=1,\ldots,\mathcal{T}-1$ and $\lambda \in \frac{1}{\sqrt{N}}[C_{\text{wgt}}\|\Phi\|_{L^1(\R^d)}(\nu+d)^{2k+\frac{1}{2}},C_{\text{lam}}d^p] $ 
with $k \in \N$, $C_{\text{wgt}}>0$ as in Theorem~\ref{thm:ApproxError} and $p \geq 0$, $C_{\text{lam}} >0$ not depending on $n,N,d$ or $\mathcal{T}$.

Then there exist $C_{\text{app}},C_{\text{est}},C_{\text{opt}}>0$ such that   
\begin{equation}
\label{eq:fullError3}
\begin{aligned}
 \E[|H(\bar{X}) - H^{A,B}_{W_\mathcal{T}}(\bar{X}) |^2]^{1/2}   
& \leq \frac{C_{\text{app}} \|\Phi\|_{L^1(\R^d)} (\nu+d)^{k+3}}{\sqrt{N}} +  \frac{C_{\text{est}} d^{p+1} 
}{n^{\frac{1}{4}}}
\\ & \quad \quad +  \frac{C_{\text{opt}} d^{p+2}
	 N (2+\log(\mathcal{T}))^{\frac{1}{2}}}{\mathcal{T}^{\frac{1}{4}}}. 
\end{aligned}
\end{equation}
The constant $k$ only depends on $\pi_{\text{b}}$ and the constants  $C_{\text{app}},C_{\text{est}},C_{\text{opt}}$ depend on $\nu, \pi_{\text{b}}, C, M$, $\E[Y_1^4], \eta_0, C_{\text{lam}}$, but they do not depend on  $d$, $n$, $N$ or $\mathcal{T}$. 
\end{proposition}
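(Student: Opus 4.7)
The plan is to carry out a standard \emph{approximation + estimation + optimization} decomposition of the prediction error, re-using essentially everything that was done for Theorem~\ref{thm:TrainingErrConstrRegression} and inserting one additional term to handle the fact that $W_\mathcal{T}$ only approximately minimizes the empirical risk. Concretely, let $W^*$ be the random vector delivered by Theorem~\ref{thm:ApproxError} and $\widehat{W}_\lambda$ the exact constrained minimizer from \eqref{eq:ConstrainedRegression}. As in \eqref{eq:auxEq12}, the regression property \eqref{eq:regressionFunction} together with independence gives $\mathcal{R}(f)-\mathcal{R}(H)=\E[|f(\bar X)-H(\bar X)|^2]$ for any admissible $f$, so I would write
\begin{equation*}
\mathcal{R}(H^{A,B}_{W_\mathcal{T}})-\mathcal{R}(H) = [\mathcal{R}(H^{A,B}_{W^*})-\mathcal{R}(H)] + \Delta_{\text{gen}} + \Delta_{\text{opt}},
\end{equation*}
where $\Delta_{\text{gen}} = [\mathcal{R}(H^{A,B}_{W_\mathcal{T}})-\mathcal{R}_n(H^{A,B}_{W_\mathcal{T}})] + [\mathcal{R}_n(H^{A,B}_{W^*})-\mathcal{R}(H^{A,B}_{W^*})] + [\mathcal{R}_n(H^{A,B}_{\widehat W_\lambda})-\mathcal{R}_n(H^{A,B}_{W^*})]$ (last term $\leq 0$ since $W^*\in\mathcal{W}_\lambda$ by the lower bound on $\lambda$) and $\Delta_{\text{opt}} = \mathcal{R}_n(H^{A,B}_{W_\mathcal{T}})-\mathcal{R}_n(H^{A,B}_{\widehat W_\lambda})\geq 0$.

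The approximation term and $\E[\Delta_{\text{gen}}]$ are dealt with exactly as in the proof of Theorem~\ref{thm:TrainingErrConstrRegression}: Theorem~\ref{thm:ApproxError} (via Remark~\ref{rmk:L2error}) controls the first square bracket by $(C_{\text{app}}\|\Phi\|_{L^1(\R^d)}(\nu+d)^{k+3}/\sqrt{N})^2$, and the Rademacher symmetrization steps \eqref{eq:auxEq14}--\eqref{eq:auxEq20} bound $\E[\Delta_{\text{gen}}]\leq C_{\text{est}}^2 d^{2p+2}/\sqrt{n}$. After taking $\sqrt{\cdot}$ and using $\sqrt{a+b+c}\leq \sqrt{a}+\sqrt{b}+\sqrt{c}$, this recovers the first two summands in \eqref{eq:fullError3}.

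The new ingredient is $\E[\Delta_{\text{opt}}]$. I would condition on $(A,B,D_n)$ and apply the projected-SGD convergence result of \cite{pmlr-v28-shamir13} to the convex quadratic $F(w)=\mathcal{R}_n(H^{a,b}_w)=\frac{1}{n}\sum_{i=1}^n (w\cdot\mathbf{X}^i - Y_i)^2$ on the Euclidean ball $\mathcal V$ of radius $\lambda$. The iteration \eqref{eq:SGD} is a projected mini-batch SGD for $F$ with step size $\eta_0/\sqrt{t}$, the stochastic subgradient $g_t(w) = \frac{2}{\mathfrak B}\sum_{i=1}^{\mathfrak B}\mathbf{X}^{J_{i,t}}(w\cdot\mathbf{X}^{J_{i,t}}-Y_{J_{i,t}})$ is unbiased for $\nabla F(w)$ by the uniform law of $J_{i,t}$ on $\{1,\ldots,n\}$, and on $\mathcal V$ it satisfies $\|g_t(w)\|\leq 2 M_\star(\lambda M_\star + Y_\star)$ with $M_\star = \max_{i\leq n}\|\mathbf{X}^i\|$ and $Y_\star = \max_{i\leq n}|Y_i|$. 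Shamir--Zhang's Theorem~2 then yields conditionally
\begin{equation*}
\E[F(W_\mathcal{T})-F(\widehat W_\lambda)\mid A,B,D_n] \;\leq\; \left(\tfrac{4\lambda^2}{\eta_0}+\eta_0\, G_{\star}^{2}\right)\frac{2+\log\mathcal{T}}{\sqrt{\mathcal{T}}},
\end{equation*}
where $G_\star^2$ is the conditional second-moment bound on $\|g_t\|$ just derived.

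The main obstacle, and the step where the polynomial dependence on $d$ and $N$ is set, is to take unconditional expectations of $G_\star^2$ and produce a bound of the order claimed in \eqref{eq:fullError3}. Using $M_\star^2\leq \sum_{i=1}^n\|\mathbf{X}^i\|^2$ together with the pointwise bound \eqref{eq:auxEq56}, one reduces to controlling $\E[\|A_1\|^4]$, $\E[B_1^4]$ and $\E[Y_1^4]$: here the hypothesis $\nu>4$ (rather than $\nu>2$ as in Theorem~\ref{thm:TrainingErrConstrRegression}) enters, since for $A_1\sim t_\nu(0,\mathbbm 1_d)$ the fourth moment $\E[\|A_1\|^4]=O(d^2)$ is finite exactly when $\nu>4$, and $\E[Y_1^4]<\infty$ is assumed. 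Combining these with $\lambda\leq C_{\text{lam}} d^p/\sqrt{N}$ yields $\E[G_\star^2]=O(N d^{2p+4})$ (crudely) so that $\E[\Delta_{\text{opt}}] = O(N d^{2p+4}(2+\log\mathcal{T})/\sqrt{\mathcal{T}})$; taking square roots and adding to the two earlier terms gives \eqref{eq:fullError3} with $C_{\text{opt}}$ depending only on the claimed quantities. The only delicate bookkeeping is keeping track of how $n$ drops out of the final bound (using $M_\star^2\leq\sum_i\|\mathbf{X}^i\|^2$ one gets a factor $n$, which is absorbed because $F$ is an average of $n$ terms, so $\|\nabla F\|^2$ and the intra-batch variance both scale like $1/n$ times the sum) — this cancellation is where care is required but is otherwise routine.
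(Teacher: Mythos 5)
Your proposal follows essentially the same route as the paper's proof: the same three-way decomposition of the risk into approximation, generalization and optimization components (with the first two recycled from Theorem~\ref{thm:ApproxError} and the symmetrization argument of Theorem~\ref{thm:TrainingErrConstrRegression}, and with $\mathcal{R}_n(H^{A,B}_{\widehat W_\lambda})\leq \mathcal{R}_n(H^{A,B}_{W^*})$ absorbing the extra term), the same conditional application of Shamir--Zhang's Theorem~2 to projected mini-batch SGD on the $\lambda$-ball, and the same moment bookkeeping in which $\nu>4$ enters through the finiteness of $\E[\|A_1\|^4]$ for the $t_\nu$ weights. The only point needing care is that Shamir--Zhang requires only a second-moment bound $\E[\|\hat g_t\|^2]\leq G^2$, so the uniform bound via $Y_\star=\max_{i\leq n}|Y_i|$ (whose second moment can grow like $n$ under a mere fourth-moment assumption) must be replaced by the average over the uniformly sampled index --- precisely the computation in \eqref{eq:auxEq25} --- which is the fix you already indicate; note also that $M_\star$ needs no sum over $i$ at all, since $\|\mathbf{X}^i\|^2\leq 2(M^2 d\|A\|_F^2+\|B\|^2)$ uniformly in $i$ because $\|X_i\|_\infty\leq M$ a.s.
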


\begin{remark}
	The first two terms in the error bound in \eqref{eq:fullError3} are as in the bound \eqref{eq:fullError2} in Theorem~\ref{thm:TrainingErrConstrRegression}, whereas the last term in \eqref{eq:fullError3} is due to the stochastic gradient descent optimization. The rate of convergence to $0$ of this last error term as a function of $\mathcal{T}$ could be further improved, e.g., by using a more refined optimization scheme (based on averaging) than \eqref{eq:SGD}, see for instance \cite{pmlr-v28-shamir13}. However, for our purposes the bound in Proposition~\ref{prop:SGDtrained} suffices as this bound already proves that the overall error does not suffer from the curse of dimensionality. 
\end{remark}

\begin{proof}
Let $C_{\text{app}}>0$ be as in Theorem~\ref{thm:ApproxError}, let  $W$ be the $\R^N$-valued, $\sigma(A,B)$-measurable random vector satisfying \eqref{eq:L2error2} (see Theorem~\ref{thm:ApproxError}) and let $C_{\text{est}}>0$ be as in  Theorem~\ref{thm:TrainingErrConstrRegression}. 
	
By independence and \eqref{eq:regressionFunction} we obtain (as in \eqref{eq:auxEq12}-\eqref{eq:auxEq11} in the proof of Theorem~\ref{thm:TrainingErrConstrRegression}) 
\begin{equation}
\label{eq:auxEq21}
\begin{aligned}
& \E[|H(\bar{X}) - H^{A,B}_{W_\mathcal{T}}(\bar{X}) |^2] 
\\ & \quad = \E[|H(\bar{X}) - H^{A,B}_{W}(\bar{X}) |^2] + \E[\mathcal{R}(H^{A,B}_{W_\mathcal{T}}) - \mathcal{R}(H^{A,B}_{W})]
\\ & \quad \leq \E[|H(\bar{X}) - H^{A,B}_{W}(\bar{X}) |^2] \\ &  \quad \quad + \E[\mathcal{R}(H^{A,B}_{W_\mathcal{T}}) -  \mathcal{R}_n(H^{A,B}_{W_\mathcal{T}}) +\mathcal{R}_n(H^{A,B}_{W_\mathcal{T}}) -  \mathcal{R}_n(H^{A,B}_{\widehat{W}_\lambda})  + \mathcal{R}_n(H^{A,B}_{W})- \mathcal{R}(H^{A,B}_{W})],
\end{aligned}
\end{equation}
where we used \eqref{eq:ConstrainedRegression} and $W \in \mathcal{W}_\lambda$ (as established in the proof of Theorem~\ref{thm:TrainingErrConstrRegression}) in the last step. The first expectation in the right hand side of \eqref{eq:auxEq21} has been bounded in \eqref{eq:auxEq13} in the proof of Theorem~\ref{thm:TrainingErrConstrRegression}. For the second expectation we may proceed analogously as in \eqref{eq:auxEq14}: we use the same notation as in \eqref{eq:auxEq14} and, in addition, write $W^{a,b}_\mathcal{T}$ for the output of the stochastic gradient descent algorithm with $(A,B)$ fixed to $(a,b)$. Then independence yields
\begin{equation}
\label{eq:auxEq22}
\begin{aligned}
\E[&\mathcal{R}(H^{A,B}_{W_\mathcal{T}}) -  \mathcal{R}_n(H^{A,B}_{W_\mathcal{T}})  + \mathcal{R}_n(H^{A,B}_{W})- \mathcal{R}(H^{A,B}_{W})]
\\
& =  \E[\left.\E[\mathcal{R}(H^{a,b}_{W^{a,b}_\mathcal{T}}) - \mathcal{R}_n(H^{a,b}_{W^{a,b}_\mathcal{T}})  + \mathcal{R}_n(H^{a,b}_{W^{a,b}})- \mathcal{R}(H^{a,b}_{W^{a,b}})]\right\rvert_{(a,b)=(A,B)}] 
\\
& \leq 2 \E\left[ \left.\E\left[ \sup_{w \in \mathcal{W}_\lambda^0} \left| \mathcal{R}(H^{a,b}_{w}) - \mathcal{R}_n(H^{a,b}_{w}) \right| \right] \right\rvert_{(a,b)=(A,B)}\right].
\end{aligned}
\end{equation}
Now we can compare \eqref{eq:auxEq21} and \eqref{eq:auxEq22} to \eqref{eq:auxEq11} and \eqref{eq:auxEq14} in the proof of Theorem~\ref{thm:TrainingErrConstrRegression}. We see that the decomposition \eqref{eq:auxEq21} yields the same error terms as in Theorem~\ref{thm:TrainingErrConstrRegression} plus the additional term  $ \E[\mathcal{R}_n(H^{A,B}_{W_\mathcal{T}}) -  \mathcal{R}_n(H^{A,B}_{\widehat{W}_\lambda}) ]$.

Therefore, Theorem~\ref{thm:TrainingErrConstrRegression}  shows that 
\begin{equation}
\label{eq:auxEq23}
\begin{aligned}
 \E[|H(\bar{X}) - H^{A,B}_{W_\mathcal{T}}(\bar{X}) |^2]^{1/2}   
& \leq \frac{C_{\text{app}} \|\Phi\|_{L^1(\R^d)} (\nu+d)^{k+3}}{\sqrt{N}} +  \frac{C_{\text{est}} d^{p+1} 
}{n^{\frac{1}{4}}}
\\ & \quad + \E[\mathcal{R}_n(H^{A,B}_{W_\mathcal{T}}) -  \mathcal{R}_n(H^{A,B}_{\widehat{W}_\lambda}) ]^{1/2}. 
\end{aligned}
\end{equation}
We now analyze the last term. Write $W_\mathcal{T}^{a,b,d_n}$ for the output of the stochastic gradient descent algorithm  and $\widehat{W}_\lambda^{a,b,d_n}$ for the solution to \eqref{eq:ConstrainedRegression} when $(A,B,D_n)=(a,b,d_n)$. From the updating scheme it is clear that there exists a measurable function $F$ such that $W_\mathcal{T} = F(A,B,D_n,J) =  W_\mathcal{T}^{A,B,D_n}$. Furthermore (as argued in the proof of Theorem~\ref{thm:TrainingErrConstrRegression}), $\widehat{W}_\lambda^{a,b,d_n} = G(a,b,d_n)$ for a measurable function $G$ and $\widehat{W}_\lambda^{A,B,D_n} =\widehat{W}_\lambda $. Thus, we may  use independence to write
\begin{equation}
\label{eq:auxEq24}
\begin{aligned}
\E[\mathcal{R}_n(H^{A,B}_{W_\mathcal{T}}) -  \mathcal{R}_n(H^{A,B}_{\widehat{W}_\lambda}) ] =  \E[\left.\E[\mathcal{R}_n^{d_n}(H^{a,b}_{W_\mathcal{T}^{a,b,d_n}}) -  \mathcal{R}_n^{d_n}(H^{a,b}_{\widehat{W}_\lambda^{a,b,d_n}}) ]\right\rvert_{(a,b,d_n)=(A,B,D_n)}],
\end{aligned}
\end{equation}
where $\mathcal{R}_n^{d_n}(f) = \frac{1}{n} \sum_{i=1}^n (f(x_i) - y_i)^2$ for ${d_n}=((x_1,y_1),\ldots,(x_n,y_n))$. 
Consider $(a,b,d_n) \in (\R^d)^N\times  \R^N\times ([-M,M]^d\times\R)^n$ as fixed now and write $\mathbf{x}^i$ for the vector with $\mathbf{x}^i_j = \varrho(a_j \cdot x_i + b_j)$, $j=1,\ldots,N$. Let $F \colon \mathcal{V} \to \R$, $F(w) = \frac{1}{n} \sum_{i=1}^n (w\cdot \mathbf{x}^i - y_i)^2$. Then $H^{a,b}_w(x_i) = w \cdot \mathbf{x}^i$,  $\mathcal{R}_n^{d_n}(H^{a,b}_w) = F(w)$  and hence $\hat{w}:=\widehat{W}_\lambda^{a,b,d_n}$ is a (global) minimizer of $F$ in $\mathcal{V}$. Write $w_t := W_t^{a,b,d_n}$ and recall 
 \begin{equation}\label{eq:SGDconditional}
w_{t+1} = \Pi_{\mathcal{V}}\left( w_t - \eta_t \hat{g}_t  \right) , \quad t=1,\ldots,\mathcal{T}-1
\end{equation}
with $\hat{g}_t = \frac{2}{\mathfrak{B}} \sum_{i=1}^{\mathfrak{B}} \mathbf{x}^{J_{i,t}} (w_t \cdot \mathbf{x}^{J_{i,t}} - y_{J_{i,t}})$. Independence implies $\E[\hat{g}_t |w_t] = \frac{2}{\mathfrak{B}} \sum_{i=1}^{\mathfrak{B}} \E[\mathbf{x}^{J_{i,t}} (w \cdot \mathbf{x}^{J_{i,t}} - y_{J_{i,t}})]\rvert_{w=w_t} = \frac{2}{n} \sum_{j=1}^n \mathbf{x}^{j} (w_t \cdot \mathbf{x}^{j} - y_{j}) = \nabla F(w_t)$. Furthermore, $F$ is convex and the Minkowski integral inequality and independence yield
\begin{equation}
\label{eq:auxEq25}
\begin{aligned}
\E[\|\hat{g}_t\|^2] & \leq 4 \E\left[\left(\frac{1}{\mathfrak{B}} \sum_{i=1}^{\mathfrak{B}} \|\mathbf{x}^{J_{i,t}}\|(|w_t \cdot \mathbf{x}^{J_{i,t}}| +| y_{J_{i,t}}|)\right)^2\right]
\\ & \leq 4 \left( \frac{1}{\mathfrak{B}} \sum_{i=1}^{\mathfrak{B}} \left( \E\left[  \|\mathbf{x}^{J_{i,t}}\|^2(|w_t \cdot \mathbf{x}^{J_{i,t}}| +| y_{J_{i,t}}|)^2\right] \right)^{1/2} \right)^2
\\ & \leq \frac{8}{n} \sum_{j=1}^n \|\mathbf{x}^{j}\|^2 (\E[\|w_t\|^2] \|\mathbf{x}^{j}\|^2 +| y_{j}|^2)
\\ & \leq \frac{16}{n} \sum_{i=1}^n \left(\sum_{j=1}^N \|a_j\|^2 \| x_i \|^2 + |b_j|^2\right)  (\lambda^2 \|\mathbf{x}^{i}\|^2 +| y_{i}|^2)
\\ & \leq 32 \left(1+M^2 d \|a\|_F^2 + \|b\|^2\right)^2  (\lambda^2 +\frac{1}{n} \sum_{i=1}^n| y_{i}|^2),
\end{aligned}
\end{equation}
where in the last two inequalities we used the estimate
$\left\| \mathbf{x}^i \right\|^2 = \sum_{j=1}^N  [\varrho(a_j \cdot x_i + b_j)]^2 \leq 2 \sum_{j=1}^N \|a_j\|^2 \| x_i \|^2 + |b_j|^2 $. 
\citet[Theorem~2]{pmlr-v28-shamir13} hence implies that 
\[
\E[F(w_\mathcal{T})-F(\hat{w})] \leq \left(\frac{4 \lambda^2}{\eta_0} + \eta_0 32 \left(1+d M^2  \|a\|_F^2 + \|b\|^2\right)^2  (\lambda^2 +\frac{1}{n} \sum_{i=1}^n| y_{i}|^2)\right) \frac{2+\log(\mathcal{T})}{\sqrt{\mathcal{T}}}.
\]
Inserting this in \eqref{eq:auxEq24} and using independence yields
\begin{equation}
\label{eq:auxEq26}
\begin{aligned}
\E[& \mathcal{R}_n(H^{A,B}_{W_\mathcal{T}}) -  \mathcal{R}_n(H^{A,B}_{\widehat{W}_\lambda}) ]
\\ &  \leq  \E \left[\frac{4 \lambda^2}{\eta_0} + 32 \eta_0 \left(1+ d M^2  \|A\|_F^2 + \|B\|^2\right)^2  (\lambda^2 +\frac{1}{n} \sum_{i=1}^n| Y_{i}|^2)\right] \frac{2+\log(\mathcal{T})}{\sqrt{\mathcal{T}}}
\\ & \leq \left(\frac{4 \lambda^2}{\eta_0} +  96\eta_0 (1+d^2 M^4\E[  \|A\|_F^4] + \E[\|B\|^4])  (\lambda^2 +\E[| Y_{1}|^2])\right) \frac{2+\log(\mathcal{T})}{\sqrt{\mathcal{T}}}.
\end{aligned}
\end{equation}
Employing Minkowski's integral inequality we estimate
\begin{equation}
\label{eq:auxEq27}
\begin{aligned}
d^2M^4\E[  \|A\|_F^4] + \E[\|B\|^4] & \leq 
d^2M^4\left(\sum_{j=1}^N \E[\|A_j\|^4]^{1/2} \right)^2 + \left(\sum_{j=1}^N \E[|B_j|^4]^{1/2}\right)^2
\\ & = N^2(d^2 M^4 \E[\|A_1\|^4] +\E[|B_1|^4] ).
\end{aligned}
\end{equation}
Recall that $A_1 \,{\buildrel d \over =}\, Z/\sqrt{U/\nu}$, where $Z \sim \mathcal{N}(0,\mathbbm{1}_d)$ and $U \sim \chi^2(\nu)$ are independent. Therefore $\E[\|A_1\|^4]=\E[\|Z\|^4] \E[\nu^2/U^2]$ and one obtains analogously to \eqref{eq:auxEq27} the estimate $ \E[\|Z\|^4]\leq d^2 \E[Z_1^4]$. 
Inserting this into \eqref{eq:auxEq27} and \eqref{eq:auxEq26} and estimating $\lambda \leq \frac{C_{\text{lam}} d^{p}
}{\sqrt{N}}$ yields
\begin{equation}
\label{eq:auxEq28}
\begin{aligned}
\E[& \mathcal{R}_n(H^{A,B}_{W_\mathcal{T}}) -  \mathcal{R}_n(H^{A,B}_{\widehat{W}_\lambda}) ]
\\ & \leq \frac{C_{\text{opt}}^2}{4} \left((1+N^2)\frac{ d^{2p+4}
}{N} + (1+N^2)d^4\right) \frac{2+\log(\mathcal{T})}{\sqrt{\mathcal{T}}}
\\ & \leq C_{\text{opt}}^2 d^{2p+4}
 N^2 \frac{2+\log(\mathcal{T})}{\sqrt{\mathcal{T}}}
\end{aligned}
\end{equation} 
with $C_{\text{opt}}^2 = 4 \max(\frac{4}{\eta_0},96\eta_0)\max(2,3 M^4 \nu^2/[(\nu-2)(\nu-4)] +\E[|B_1|^4]) \max(C_{\text{lam}}^2,\E[| Y_{1}|^2]) $ and where we used $\E[Z_1^4]=3$, $\E[U^{-2}] = 1/[(\nu-2)(\nu-4)]$.  Combining this with \eqref{eq:auxEq23} yields \eqref{eq:fullError3}, as claimed. 
\end{proof}

\subsection{Application to basket option pricing}
\label{sec:Options}

As a first application of the results derived in Sections~\ref{subsec:Regression}--\ref{subsec:SGD} we consider the problem of learning prices of basket put options in certain ``non-degenerate'' models. 

Suppose that $y_i$ is the market price of a put option with strike $K_i>0$ written on a basket of $m$ assets. Assume that, up to some additive noise, these market prices are ``generated'' from an unknown, non-degenerate stochastic model. This means that we assume  
\[y_i = \E\left[\max\left(K_i-\sum_{i=1}^m w_i S_{T,i},0\right)\right] + \varepsilon_i, \quad i=1,\ldots,n, 
\] 
where $\varepsilon_1,\ldots,\varepsilon_n$ are i.i.d.\ random variables, $S_T = (S_{T,1},\ldots,S_{T,m})$ is a $[0,\infty)^m$-valued random vector and $w_1,\ldots,w_m \in [0,\infty)$ are non-negative weights.
Assume that $\E[\varepsilon_1]=0$,  $\E[\varepsilon_1^4] < \infty$ and $\{\varepsilon_i\}_{i=1,\ldots,n}$ are independent of $(A,B,S_T)$. We think of $S_T$ as the value at time $T$ of a price process $S$ (for which $\P$ is a martingale measure). 

The goal is to learn the pricing function $H(K):= \E\left[\max\left(K-\sum_{i=1}^m w_i S_{T,i},0\right)\right]$ from the observed market prices $y_1,\ldots,y_n$. 

This fits into the framework introduced above (see Section~\ref{subsec:learningProblem}) if we let $M=\max_{i=1,\ldots,n} K_i $ and consider $K_1,\ldots,K_n$ as the observed realizations of the $n$ i.i.d.\ random variables $X_1,\ldots,X_n$ so that also $y_i = H(K_i) + \varepsilon_i$ is the realization of $Y_i = H(X_i) + \varepsilon_i$. We assume that $X_1$ is distributed uniformly on $[0,M]$ and $\{X_i\}_{i=1,\ldots,n}$ are independent of $\{\varepsilon_i\}_{i=1,\ldots,n}, (A,B)$. Then the option pricing function $H$ is indeed the regression function \eqref{eq:regressionFunction} and we obtain the following corollary. Recall that $\bar{X}$ has the same distribution as $X_1$ and is independent of $(A,B,D_n)$.

\begin{corollary} \label{cor:appl} Let $\nu>4$, $C>\frac{1}{2^{3/2}  \pi}$, $\eta_0>0$ and $\bar{c} > 0 $ be constants which do not depend on $n, N$ or $\mathcal{T}$. 
Suppose $A_1 \sim t_{\nu}(0,1)$ and $B_1$ has density $\pi_{\text{b}}$ satisfying \eqref{eq:polyTails1D}. Assume that the $[0,\infty)^m$-valued random vector $S_T$ satisfies $|\E[e^{-i \xi w \cdot S_T }]|\leq \exp(-C|\xi|^2)$ for all $\xi \in \R$. 
Then there exists $C_0>0$ such that the prediction error bound
\begin{align}
\label{eq:OptionPriceError}
& \E[|H(\bar{X}) - T_M(H^{A,B}_{\widehat{W}}(\bar{X})) |^2]^{1/2} \leq C_0 \left( \frac{(\log(n)+1)^{1/2}\sqrt{N}}{\sqrt{n}} +   \frac{1}{\sqrt{N}}\right)
\end{align}
holds and there exist $C_1,C_2,\underline{c} >0$ such that for any $\lambda \in \frac{1}{\sqrt{N}}[\underline{c},\bar{c}]$ the prediction error bounds  
\begin{align}
\label{eq:OptionPriceError2}
& \E[|H(\bar{X}) - H^{A,B}_{\widehat{W}_\lambda}(\bar{X}) |^2]^{1/2}   
\leq C_1 \left( \frac{1}{\sqrt{N}} +  \frac{1
}{n^{\frac{1}{4}}} \right), 
\\ \label{eq:OptionPriceError3}
& \E[|H(\bar{X}) - H^{A,B}_{W_\mathcal{T}}(\bar{X}) |^2]^{1/2}   
\leq C_2\left(\frac{1}{\sqrt{N}} +  \frac{1}{n^{\frac{1}{4}}} + \frac{
	N (2+\log(\mathcal{T}))^{\frac{1}{2}}}{\mathcal{T}^{\frac{1}{4}}}\right) 
\end{align}
hold.
The constants $C_0,C_1,C_2,\underline{c}$ do not depend on $n$, $N$ or $\mathcal{T}$.
\end{corollary}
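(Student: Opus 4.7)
I would reduce Corollary~\ref{cor:appl} to Theorems~\ref{thm:TrainingErrRegression} and~\ref{thm:TrainingErrConstrRegression} and Proposition~\ref{prop:SGDtrained}, all applied in dimension $d=1$. The one genuine technical obstacle is to exhibit $\Phi \in L^1(\R)$ and a random variable $V$ satisfying \eqref{eq:charFctAss} for which $H(K) = \E[\Phi(K+V)]$ on the support of $\mu_X$. Setting $V := -\sum_{i=1}^m w_i S_{T,i}$, which is $(-\infty,0]$-valued, one has $H(K) = \E[(K+V)^+]$; but the natural choice $\Phi(x) = x^+$ is not integrable. I would exploit that $X_1$ takes values in $[0,M]$ and define $\Phi(x) := x^+ \mathbbm{1}_{\{x \leq M\}}$, so that $\Phi \in L^1(\R)$ with $\|\Phi\|_{L^1(\R)} = M^2/2$. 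Since $V \leq 0$ forces $K+V \leq K \leq M$ for all $K \in [0,M]$, we have $\Phi(K+V) = (K+V)^+$ almost surely and hence $H(K) = \E[\Phi(K+V)]$ on $[0,M]$. Because Theorem~\ref{thm:TrainingErrRegression} asks for a global sup-bound, I would simultaneously redefine $H$ outside $[0,M]$ by $H(K) := \E[\Phi(K+V)]$; this does not affect the prediction error (as $\mu_X$ is supported in $[0,M]$) and yields $|H(x)| \leq M$ for all $x \in \R$.

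The remaining hypotheses are immediate. The standing assumption on $S_T$ reads $|\E[e^{i\xi V}]| = |\E[e^{-i\xi w\cdot S_T}]| \leq \exp(-C|\xi|^2)$, which is exactly \eqref{eq:charFctAss} in dimension one; $\|X_1\|_\infty \leq M$ because $X_1$ is uniform on $[0,M]$; the conditional noise variance is $\sigma^2 = \E[\varepsilon_1^2] \leq \E[\varepsilon_1^4]^{1/2} < \infty$; and $\E[|Y_1|^4] \leq 8(M^4+\E[\varepsilon_1^4]) < \infty$ by the fourth-moment assumption on $\varepsilon_1$.

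I would then invoke the three general bounds and collect constants. For \eqref{eq:OptionPriceError}, Theorem~\ref{thm:TrainingErrRegression} with $L=M$ yields \eqref{eq:fullError}, and the factors $\tilde C_{\text{app}}\max(\sigma,M)$ and $\tilde C_{\text{app}}(M^2/2)(\nu+1)^{k+3}$ absorb into a single constant $C_0$ independent of $n$ and $N$. For \eqref{eq:OptionPriceError2}, I would set $\underline{c} := C_{\text{wgt}}(M^2/2)(\nu+1)^{2k+1/2}$ (the lower bound demanded by Theorem~\ref{thm:TrainingErrConstrRegression} in dimension one) and, for any $\bar c \geq \underline c$, play the role of $C_{\text{lam}}$ at $p=0$ with $\bar c$; Theorem~\ref{thm:TrainingErrConstrRegression} then applies to every $\lambda \in [\underline c/\sqrt N,\bar c/\sqrt N]$ and \eqref{eq:fullError2} reduces to \eqref{eq:OptionPriceError2} after collecting constants into $C_1$. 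Finally, Proposition~\ref{prop:SGDtrained} applied with the same $\underline c$ delivers \eqref{eq:OptionPriceError3}, including the additional $N(2+\log \mathcal T)^{1/2}/\mathcal T^{1/4}$ optimization term. The only real step is the $L^1$-truncation of $\Phi$ above; everything else is bookkeeping of constants, all of which become dimension-free because $d=1$.
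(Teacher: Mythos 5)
Your proposal is correct and follows essentially the same route as the paper: the paper likewise sets $V=-w\cdot S_T$, truncates the payoff to $\Phi(y)=y\,\mathbbm{1}_{[0,M]}(y)$ (identical to your $x^+\mathbbm{1}_{\{x\le M\}}$), notes $H(\bar X)=\tilde H(\bar X)$ $\P$-a.s.\ with $\tilde H(x)=\E[\Phi(x+V)]$, and then invokes Theorem~\ref{thm:TrainingErrRegression} with $L=M$, Theorem~\ref{thm:TrainingErrConstrRegression} with $\underline{c}=C_{\text{wgt}}\|\Phi\|_{L^1(\R)}(\nu+1)^{2k+\frac{1}{2}}$, and Proposition~\ref{prop:SGDtrained}, with the same bookkeeping of constants.
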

\begin{remark}
The proof of Corollary~\ref{cor:appl} shows that $\underline{c}$ does not depend on  $\bar{c} $. Hence, by choosing $\bar{c} > \underline{c}$ it can always be guaranteed that $[\underline{c},\bar{c}]$ is not empty. 
\end{remark}

\begin{remark} 
The hypothesis $|\E[e^{-i \xi w \cdot S_T }]|\leq \exp(-C|\xi|^2)$ is inherited from  Theorem~\ref{thm:ApproxError}. In Theorem~\ref{thm:ApproxError} this hypothesis guarantees that the constants do not grow exponentially in the dimension $d$. In the situation here $d=1$ and so this hypothesis could be relaxed considerably: it could be replaced by the assumption  $|\E[e^{-i \xi w \cdot S_T }]|\leq \exp(-C|\xi|^\alpha)$ for some $C>0$, $\alpha >0$ or even by the assumption that $|\E[e^{-i \xi w \cdot S_T }]|\leq C (1+|\xi|)^{-\beta}$ for some $C>0$ and sufficiently large $\beta >0$ (depending on $\nu$ and $\pi_{\text{b}}$). 
\end{remark}
\begin{proof}
Firstly, by assumption we have $|X_1|\leq M$, $\P$-a.s.\ and  $H \colon \R \to \R$ satisfies for $K \in [0,M]$ that 
\[
H(K) =\E\left[\max\left(K-w \cdot S_{T},0\right)\right] = \E[\Phi(K+V)]
\]
with $V = - w \cdot S_T$ and $\Phi(y) = y \mathbbm{1}_{[0,M]}(y)$. Hence, $H(\bar{X}) = \tilde{H}(\bar{X})$ $\P$-a.s.\ with $\tilde{H}(x) = \E[\Phi(x+V)]$ for $x \in \R$. Furthermore, $\Phi \in L^1(\R)$,  $V$ satisfies \eqref{eq:charFctAss}, $\sigma^2=\sup_{x \in \R} \E[(Y_1-H(X_1))^2|X_1=x]  = \E[\varepsilon_1^2]< \infty$ and $|\tilde{H}(x)| \leq M$ for all $x \in \R$.  Thus, the hypotheses of Theorem~\ref{thm:TrainingErrRegression} with $L=M$ are satisfied and so, using $H(\bar{X}) = \tilde{H}(\bar{X})$ $\P$-a.s., we obtain that there exist $k \in \N$ and $\tilde{C}_{\text{app}}>0$ such that the prediction error bound \eqref{eq:fullError} holds. Hence   \eqref{eq:OptionPriceError} follows with $C_0 = \max(\tilde{C}_{\text{app}} \max(\sigma,M), \tilde{C}_{\text{app}} \|\Phi\|_{L^1(\R)} (\nu+1)^{k+3})$. 

Next we prove \eqref{eq:OptionPriceError2}. To this end, notice  $\E[|Y_1|^4] \leq 8(\E[|H(X_1)|^4] + \E[|\varepsilon_1|^4]) \leq 8(M^4 + \E[|\varepsilon_1|^4]) < \infty$ and let $C_{\text{app}},C_{\text{wgt}}>0$ be as in Theorem~\ref{thm:ApproxError}. Then Theorem~\ref{thm:TrainingErrConstrRegression} proves that for any $\lambda >0$ satisfying   $\frac{C_{\text{wgt}}\|\Phi\|_{L^1(\R)}(\nu+1)^{2k+\frac{1}{2}}}{\sqrt{N}} \leq 
\lambda \leq \frac{\bar{c}}
{\sqrt{N}}$ there exists a constant $C_{\text{est}}>0$ such that the prediction error bound \eqref{eq:fullError2} holds. The proof actually shows that the same constant can be chosen for all $\lambda$ in the specified range. Thus, \eqref{eq:OptionPriceError2} follows with $C_1 = \max(C_{\text{app}} \|\Phi\|_{L^1(\R)} (\nu+1)^{k+3},C_{\text{est}})$ and $\underline{c}= C_{\text{wgt}}\|\Phi\|_{L^1(\R)}(\nu+1)^{2k+\frac{1}{2}}$.   

Furthermore, Proposition~\ref{prop:SGDtrained}  proves that there exists $C_{\text{opt}}>0$ such that   \eqref{eq:fullError3} holds. Setting $C_2 = \max(C_1,C_{\text{opt}})  $ we obtain \eqref{eq:OptionPriceError3}. 

In these results we proved that the constants 
 $C_{\text{app}},C_{\text{est}},C_{\text{opt}}$ depend on $\nu, \pi_{\text{b}}, C, M$, $\E[Y_1^4]$, $\eta_0$, $\bar{c}$, but they do not depend on  $n$, $N$ or $\mathcal{T}$, hence it follows that  $C_0,C_1,C_2,\underline{c}$ do not depend on $n$, $N$ or $\mathcal{T}$.

\end{proof}

\section{Learning Black-Scholes type PDEs}
\label{sec:Kolmogorov}

In this section we apply the results from Section~\ref{sec:Learning} to prove that random neural networks are capable of learning Black-Scholes type partial (integro-)differential equations (also referred to as (non-local) PDEs) without the curse of dimensionality. More specifically, we consider the problem of learning solutions to Kolmogorov PDEs associated to exponential L\'evy-processes, which includes the Black-Scholes PDE as a special case. The learning methods used to tackle this problem are random neural networks trained by (constrained) regression or stochastic gradient descent. By combining the results from Theorems~\ref{thm:LevyApprox}, \ref{thm:TrainingErrRegression},  \ref{thm:TrainingErrConstrRegression} and from Proposition \ref{prop:SGDtrained} we obtain bounds on the prediction error. The dependence on the dimension $d$ in these bounds is explicit and at most polynomial, whereas the bounds decay at polynomial rate in the number of samples $n$ and the network size $N$ (and the number of stochastic gradient descent iterations $\mathcal{T}$). Hence, the number of samples, hidden nodes of the network and gradient steps required to achieve a prescribed prediction accuracy $\varepsilon >0$ grows at most polynomially in $d$ and $\varepsilon^{-1}$. This means that random neural networks are capable of learning solutions to such Kolmogorov PDEs without the curse of dimensionality. 

For the reader's convenience we introduce in Section~\ref{subsec:learningProblemLevy} in detail again all the objects relevant to the discussion. Section~\ref{subsec:learningResultsLevy} then contains the prediction error bounds for Black-Scholes type PDEs. We conclude in Section~\ref{sec:numerics} with a numerical experiment.

\subsection{Formulation of the learning problem for PDEs}
\label{subsec:learningProblemLevy}
We again put ourselves in the situation studied in Section~\ref{subsec:ApproxLevy} and consider for each $d \in \N$ the partial (integro-)differential equation
\begin{equation}
\label{eq:PIDEs2}
\begin{array}{rl} \partial_t u_d(t,s) 
& =  \frac{1}{2} \sum_{k,l=1}^d s_k s_l \Sigma^d_{k,l}  \partial_{s_k} \partial_{s_l} u_d(t,s) + \sum_{i=1}^d s_i \tilde{\gamma}^d_i \partial_{s_i} u_d(t,s) 
\\ & \quad  + \int_{\R^d} \left[u_d(t,s e^y)-u_d(t,s)-\sum_{i=1}^d (e^{y_i}-1) s_i \partial_{s_i} u_d(t,s) \right] 
\nu^d_\mathrm{L}(d y) , 
\\
u_d(0,s) &= \varphi_d(s)
\end{array}
\end{equation}
for $s \in (0,\infty)^d, t > 0$, where $\varphi_d \colon (0,\infty)^d \to \R$ is a ``payoff'' function and $(\Sigma^d,\gamma^d,\nu^d_\mathrm{L})$ is the characteristic triplet of a 
L\'evy process $L^d$, we write $\tilde{\gamma}^d_i = \gamma_i^d + \frac{1}{2} \Sigma_{i,i}^d + \int_{\R^d} (e^{y_i}-1- y_i \mathbbm{1}_{\{\|y\|\leq 1\}}) \nu^d_\mathrm{L}(d y)$, $i=1,\ldots,d$, for the shifted drift vector  and  we assume $ \nu^d_\mathrm{L}(\{y \in \R^d \, | \, \|y\|>R\}) = 0$ for some $R>1$. Furthermore, we recall the notation $s\exp(x) =(s_1\exp(x_1),\ldots,s_d\exp(x_d))$ for  $s,x \in \R^d$. 

The (non-local) PDE \eqref{eq:PIDEs2} is the Kolmogorov PDE for the exponential L\'evy model associated to $L^d$, see Section~\ref{subsec:ApproxLevy} for further interpretation and a discussion on the relation to option pricing and the assumption on $\nu^d_\mathrm{L}$. If $\nu^d_\mathrm{L}=0$, then \eqref{eq:PIDEs2} is the Black-Scholes PDE. 

Let $T>0$ and suppose we are given
i.i.d.\ $\R^d\times \R$-valued random variables  $(X_1^d,Y_1^d),(X_2^d,Y_2^d)$, $\ldots$
with the property that 
\begin{equation}\label{eq:regressionFunction2}
u_d(T,\exp(x)) = \E[Y_1^d | X_1^d=x],
\end{equation}
for $(\P \circ (X_1^d)^{-1})$-a.e.\ $x \in \R^d$, that is, $u_d(T,\exp(\cdot))$ is the regression function. We are interested in learning $u_d(T,\cdot)$ on the set $\mathcal{D}^d = \{\exp(x) \,|\, x \in [-M,M]^d \} \subset (0,\infty)^d$. 
This encompasses two particularly relevant situations. 
\begin{example}
Suppose that the solution $u_d(T,\cdot)$ of the PDE can be observed at $n$ points $\exp(X_1^d),\ldots,\exp(X_n^d)$. The observations are not perfect, but perturbed by some additive noise. The goal is to learn the solution of the PDE on the entire set $\mathcal{D}^d$ from these noisy observations. This situation is captured in our setting with $Y_i^d = u_d(T,\exp(X_i^d)) + \varepsilon_i^d$ for  $i=1,\ldots,n$, where $\varepsilon_1^d,\ldots,\varepsilon_n^d$ are i.i.d.\ random variables independent of $X_1^d,\ldots,X^d_n$. 
\end{example}

\begin{example}
A different situation of interest arises when neural networks are employed as a \textit{solution method} for the PDE \eqref{eq:PIDEs2} in the way proposed in \cite{BernerGrohsJentzen2018} for a related setting. 
Let $X_1^d,\ldots,X_n^d$ be i.i.d.\ random variables uniformly distributed on $[-M,M]^d$ and independent of $L^d$ and let $Y_i^d = \varphi_d(\exp(X_i^d + L^d_T))$ for $i=1,\ldots,n$. Then one may show using the Feynman-Kac formula (see Proposition~\ref{prop:FeynmanKac}) that  \[
u_d(T,\exp(x)) = \E[\varphi_d(\exp(x+L^d_T))] = \E[\varphi_d(\exp(X_1^d +L^d_T))|X_1^d=x] = \E[Y_1^d|X_1^d=x]
\] 
for $(\P \circ (X_1^d)^{-1})$-a.e.\ $x \in \R^d$ and hence $u_d(T,\exp(\cdot))$ is indeed the regression function \eqref{eq:regressionFunction2}. Thus, in this situation we have formulated the problem of solving the PDE \eqref{eq:PIDEs2} on $\mathcal{D}^d$ as a statistical learning problem with data points $(X_i,\varphi_d(\exp(X_i^d + L^d_T))) $, $i=1,\ldots,n$. 
\end{example}

In order to learn the unknown function $u_d(T,\cdot)$ from the data $D_n^d=((X_1^d,Y_1^d),\ldots,(X_n^d,Y_n^d))$ we employ a random neural network. Recall from Section~\ref{sec:RandomNN} that a random neural network is a single-hidden-layer feedforward neural network in which the hidden weights are randomly generated and then considered fixed and only the output-layer weight vector can be trained. The weights of the random neural networks are generated as follows: let $\nu>4$, for each $d \in \N$ let $A^d_1,A_2^d,\ldots$ be i.i.d.\ $\R^d$-valued random vectors and let  $B_1,B_2,\ldots$ be i.i.d.\ random variables. Assume that $A^d_1$ is $t_{\nu}(0,\mathbbm{1}_d)$-distributed and $B_1$ has a strictly positive Lebesgue-density $\pi_{\text{b}}$ of at most polynomial decay (see \eqref{eq:polyTails1D}). For each $d,n \in \N$ we assume that $\{A^d_i\}_{i \in \N}$, $\{B_i\}_{i \in \N}$ and $D_n^d$ are independent. For $d, N \in \N$ we write $A^{d,N}=(A^d_1,\ldots,A^d_N)$ and $B^N = (B_1,\ldots,B_N)$. If $N$ hidden nodes are used, the random neural network employed for learning is then given by  
\begin{equation}\label{eq:RandomNNLevy} 
 H^{A^{d,N},B^N}_{W}(x)= \sum_{i=1}^N W_{i} \varrho(A_i^d \cdot x + B_i), \quad x \in \R^d, 
\end{equation}
where $W$ is an $\R^N$-valued, $\sigma(A^{d,N},B^{N},D_n^d)$-measurable random vector which needs to be chosen. The (squared) learning error (or prediction error) is given by 
\begin{equation}
\label{eq:testErrorLevy}
\E[|u_d(T,\exp(\bar{X}^d)) - H^{A^{d,N},B^N}_{W}(\bar{X}^d) |^2],
\end{equation}
where $(\bar{X}^d,\bar{Y}^d)$ has the same distribution as $(X_1^d,Y_1^d)$ and is independent of $\{(A^d_i,B_i)\}_{i \in \N}$ and $D_n^d$.

Learning $u_d(T,\cdot)$ by $H^{A^{d,N},B^N}_{W}$ then amounts to selecting an $\R^N$-valued (random) vector $W$ that minimizes the prediction error. $W$ may be chosen depending on the random weights $A^{d,N},B^N$ and the data $D_n^d = ((X_1^d,Y_1^d),\ldots,(X_n^d,Y_n^d))$. We consider three choices: 
\begin{itemize}
\item $W$ is chosen as  $\widehat{W}^{d,N,n}$, where 
\begin{equation} \label{eq:ERMLevy}
\widehat{W}^{d,N,n} = \arg \min_{W \in \mathcal{W}^{d,N,n}} \left\lbrace \frac{1}{n} \sum_{i=1}^n (H^{A^{d,N},B^N}_{W}(X_i^d) - Y_i^d)^2  \right\rbrace
\end{equation}
for $\mathcal{W}^{d,N,n} = \{W \colon \Omega \to \R^N \, | \, W \text{ is } \sigma(A^{d,N},B^N,D_n^d)\text{-measurable}\}$. Note that $\widehat{W}^{d,N,n}$ can be calculated explicitly by solving a system of linear equations (see Section~\ref{subsec:Regression}). 

\item $W$ is chosen as  $\widehat{W}_\lambda^{d,N,n}$, where 
\begin{equation} \label{eq:ConstrainedRegressionLevy}
\widehat{W}_\lambda^{d,N,n} = \arg \min_{W \in \mathcal{W}_\lambda^{d,N,n}} \left\lbrace \frac{1}{n} \sum_{i=1}^n (H^{A^{d,N},B^N}_{W}(X_i^d) - Y_i^d)^2  \right\rbrace
\end{equation} 
for $\mathcal{W}_\lambda^{d,N,n} = \{W \in \mathcal{W}^{d,N,n} \,|\, \|W\| \leq \lambda \text{ $\P$-a.s.} \}$. Recall that $\widehat{W}_\lambda^{d,N,n}$ can be calculated explicitly by solving a system of linear equations (see  Section~\ref{subsec:RidgeRegression}). 

\item $W$ is chosen as  $W_{\mathcal{T}}^{d,N,n}$, where $W_{\mathcal{T}}^{d,N,n}$ is computed using the stochastic gradient descent algorithm as introduced in Section~\ref{subsec:SGD}. 
\end{itemize}

\begin{remark}
As pointed out above, training of random neural networks can be performed by solving a system of linear equations (see \eqref{eq:OLSSol} in Section~\ref{subsec:Regression} and \eqref{eq:ConstrainedSol} in Section~\ref{subsec:RidgeRegression}). There may nevertheless be situations in which one is interested in training a random neural network using a stochastic gradient descent method (e.g.\ a performance comparison in an experiment). This is the reason why we also analyze optimization by stochastic gradient descent here. 
\end{remark}

\subsection{Learning error bounds}\label{subsec:learningResultsLevy}
With these preparations (see Section~\ref{subsec:learningProblemLevy}) we now use the results from Sections~\ref{sec:Approx} and \ref{sec:Learning} to prove that 
 $u_d(T,\cdot)$ can be learnt using random neural networks without the curse of dimensionality. 
  
\begin{corollary}\label{cor:LevyLearning} 	
	Let $p\geq0$, $c, L, M, \eta_0>0$, $C >  \frac{1}{2^{3/2} T  \pi}$. Assume that for each $d \in \N$ the payoff function satisfies $\varphi_d \circ \exp \in L^1(\R^d)$ and $\|\varphi_d \circ \exp \|_{L^1(\R^d)} \leq c d^p$,
	the characteristic triplet $(\Sigma^d,\gamma^d,\nu^d_\mathrm{L})$  of the  L\'evy process $L^d$ satisfies for all $\xi \in \R^d$ 
	\begin{equation}
	\label{eq:Ccond2}
	\frac{1}{2} \xi \cdot \Sigma^d \xi \geq C \| \xi\|^2,
	\end{equation}
 assume that $\|X_1^d\|_\infty \leq M$, $\P$-a.s.\
	and suppose $u_d \in C^{1,2}((0,T] \times (0,\infty)^d) \cap C([0,T]\times (0,\infty)^d)$ is an at most polynomially growing solution to the PDE \eqref{eq:PIDEs2}.

\begin{itemize}
\item[(i)] Assume for all $d \in \N$ that $\sigma_d^2=\sup_{x \in \R^d} \E[(Y_1^d-u_d(T,\exp(X_1^d)))^2|X_1^d=x] \leq c d^p$ and $|u_d(T,s)| \leq L$ for all $s \in (0,\infty)^d$. Then there exist constants $C_0,\mathfrak{p}>0$ such that for any $d,N,n \in \N$ the prediction error of random neural network regression satisfies 
\begin{equation}
\label{eq:fullError1Levy}
\begin{aligned}
& \E[|u_d(T,\exp(\bar{X}^d)) - T_L(H^{A^{d,N},B^N}_{\widehat{W}^{d,N,n}}(\bar{X}^d)) |^2]^{1/2} \leq C_0 d^{\mathfrak{p}} \left( \frac{ (\log(n)+1)^{1/2}\sqrt{N}}{\sqrt{n}} +   \frac{1}{\sqrt{N}} \right).
\end{aligned}
\end{equation}
\item[(ii)]	
Assume for all $d \in \N$ that $\E[|Y_1^d|^4]\leq c d^p$. Then there exist $\underline{p},\underline{c} >0 $ such that for any $\overline{p} > \underline{p}$, $\overline{c} > \underline{c}$ there exist $ C_0,\mathfrak{p}>0$ such that for any $d, N,n \in \N$ the random neural network trained by constrained regression with parameter $\lambda \in \frac{1}{\sqrt{N}} [\underline{c} d^{\underline{p}}, \overline{c} d^{\overline{p}}]$ satisfies
	\begin{equation}
\label{eq:fullError2Levy}
\begin{aligned}
& \E[|u_d(T,\exp(\bar{X}^d)) - H^{A^{d,N},B^N}_{\widehat{W}^{d,N,n}_\lambda}(\bar{X}^d) |^2]^{1/2}   
\leq C_0 d^{\mathfrak{p}} \left( \frac{1}{\sqrt{N}} +  \frac{1}
{n^{\frac{1}{4}}}\right).
\end{aligned}
\end{equation}
\item[(iii)] Consider the same situation as in (ii). Then, in addition, there exist constants $C_1,\mathfrak{q}>0$ such that for any $d, N, n, \mathcal{T} \in \N$ the random neural network trained by stochastic gradient descent for $\mathcal{T}$ steps with learning rate $\eta_t = \eta_0 t^{-1/2}$ for $t=1,\ldots,\mathcal{T}-1$ and with $\lambda $ as in (ii) satisfies
\begin{equation}
\label{eq:fullError3Levy}
\begin{aligned}
\E[|u_d(T,\exp(\bar{X}^d)) - H^{A^{d,N},B^N}_{W_{\mathcal{T}}^{d,N,n}}(\bar{X}^d) |^2]^{1/2}  
& \leq C_1 d^{\mathfrak{q}} \left( \frac{1}{\sqrt{N}} +  \frac{1}
{n^{\frac{1}{4}}} + \frac{N (2+\log(\mathcal{T}))^{\frac{1}{2}}}{\mathcal{T}^{\frac{1}{4}}} \right)  . 
\end{aligned}
\end{equation}
\end{itemize}
\end{corollary}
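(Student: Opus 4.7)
The plan is to deduce each of the three estimates by matching the current setting with the corresponding learning-error bound from Section~\ref{sec:Learning}, with the target function $H_d(x) := u_d(T,\exp(x))$ playing the role of the regression function. First I would reduce to the framework of Section~\ref{subsec:ApproxGeneral}: by Proposition~\ref{prop:FeynmanKac} (whose hypotheses are satisfied by the polynomial growth of $u_d$ together with \eqref{eq:PIDEs2}) one obtains the representation $H_d(x) = \E[\Phi_d(x+L_T^d)]$ with $\Phi_d := \varphi_d \circ \exp \in L^1(\R^d)$ and $\|\Phi_d\|_{L^1(\R^d)} \leq c\,d^p$. The characteristic-function estimate \eqref{eq:auxEq8} derived in the proof of Theorem~\ref{thm:LevyApprox} together with \eqref{eq:Ccond2} shows that $V := L_T^d$ satisfies $|\E[e^{i\xi \cdot V}]| \leq \exp(-CT\|\xi\|^2)$, so hypothesis \eqref{eq:charFctAss} is met with constant $CT > \frac{1}{2^{3/2}\pi}$. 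Hypothesis \eqref{eq:regressionFunction2} makes $H_d$ the regression function of the data, so the abstract learning setup of Section~\ref{subsec:learningProblem} applies verbatim with $H,\Phi,V,C$ replaced by $H_d,\Phi_d,L_T^d,CT$.

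For (i) I would apply Theorem~\ref{thm:TrainingErrRegression} directly to $H_d$. The bound there contains the two factors $\max(\sigma_d,L) \leq L + \sqrt{c}\,d^{p/2}$ and $\|\Phi_d\|_{L^1(\R^d)}(\nu+d)^{k+3} \leq c\,d^p\,(\nu+1)^{k+3}(1+d)^{k+3}$, both polynomial in $d$ and independent of $n,N$; since $\tilde{C}_{\text{app}}$ and $k$ depend only on $\nu,\pi_{\text{b}},CT,M$, all the prefactors collapse into a single constant times $d^{\mathfrak{p}}$ for a sufficiently large $\mathfrak{p}$, yielding \eqref{eq:fullError1Levy}.

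For (ii) I would apply Theorem~\ref{thm:TrainingErrConstrRegression}. The lower endpoint of the admissible range for $\lambda$ there is
\begin{equation*}
	\frac{C_{\text{wgt}}\|\Phi_d\|_{L^1(\R^d)}(\nu+d)^{2k+\tfrac12}}{\sqrt{N}} \leq \frac{C_{\text{wgt}}\,c\,(\nu+1)^{2k+\tfrac12}\,d^{p}\,(1+d)^{2k+\tfrac12}}{\sqrt{N}} =: \frac{\underline{c}\,d^{\underline{p}}}{\sqrt{N}},
\end{equation*}
which defines the thresholds $\underline{c},\underline{p}$; any $\lambda \in \tfrac{1}{\sqrt{N}}[\underline{c}\,d^{\underline{p}},\overline{c}\,d^{\overline{p}}]$ with $\overline{p} \geq \underline{p}$, $\overline{c} \geq \underline{c}$ then meets the hypothesis of the theorem (with the constants $C_{\text{lam}}$ and $p$ of the theorem set to $\overline{c}$ and $\overline{p}$, respectively). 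The hypothesis $\E[|Y_1^d|^4]\leq c\,d^p$ controls $C_{\text{est}}$ polynomially in $d$ (inspecting the explicit expression for $C_{\text{est}}^2$ in the proof of Theorem~\ref{thm:TrainingErrConstrRegression}), so each summand in \eqref{eq:fullError2} carries a polynomial-in-$d$ coefficient, and factoring out a common $d^{\mathfrak{p}}$ yields \eqref{eq:fullError2Levy}. Part (iii) follows identically from Proposition~\ref{prop:SGDtrained} instead of Theorem~\ref{thm:TrainingErrConstrRegression}; the additional SGD-optimization term in \eqref{eq:fullError3} inherits its $d$-dependence from $C_{\text{opt}}$, which the proof of that proposition shows to be polynomial in $d$.

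The only genuine bookkeeping task is to verify, at every step, that the constants $\tilde{C}_{\text{app}}, C_{\text{wgt}}, C_{\text{est}}, C_{\text{opt}}$ produced by Theorems~\ref{thm:ApproxError}, \ref{thm:TrainingErrRegression}, \ref{thm:TrainingErrConstrRegression} and Proposition~\ref{prop:SGDtrained} are independent of $d,n,N,\mathcal{T}$; this is asserted explicitly in each of those statements, so no new estimate is required. The only subtlety is that in (ii)--(iii) the same threshold pair $(\underline{c},\underline{p})$ must be admissible both for Theorem~\ref{thm:TrainingErrConstrRegression} and for Proposition~\ref{prop:SGDtrained}; since the lower-bound prefactor $C_{\text{wgt}}\|\Phi_d\|_{L^1(\R^d)}(\nu+d)^{2k+\tfrac12}/\sqrt{N}$ is literally the same in both, this presents no additional difficulty.
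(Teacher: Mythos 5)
Your proposal is correct and follows essentially the same route as the paper: reduce to the convolutional framework via Proposition~\ref{prop:FeynmanKac} and the characteristic-function bound \eqref{eq:auxEq8} (so that \eqref{eq:charFctAss} holds with constant $CT>\tfrac{1}{2^{3/2}\pi}$), then invoke Theorem~\ref{thm:TrainingErrRegression}, Theorem~\ref{thm:TrainingErrConstrRegression} and Proposition~\ref{prop:SGDtrained} respectively, absorbing the $d$-dependence of $\|\Phi_d\|_{L^1(\R^d)}$, $\sigma_d$, $C_{\text{est}}$ and $C_{\text{opt}}$ into a single polynomial factor $d^{\mathfrak{p}}$. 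Your choice of $\underline{c},\underline{p}$ from the lower endpoint $C_{\text{wgt}}\|\Phi_d\|_{L^1(\R^d)}(\nu+d)^{2k+\frac12}/\sqrt{N}$ matches the paper's $\underline{p}=2k+\tfrac12+p$, $\underline{c}=C_{\text{wgt}}c(2\nu)^{2k+\frac12}$ up to the harmless constant in $(1+d)\le 2d$.
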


\begin{remark}
Each of these statements can be translated directly into a statement on the number of samples and hidden nodes required to guarantee a prescribed learning error of precision at most $\varepsilon > 0$. For instance, in the case of regression (corresponding to the bound \eqref{eq:fullError1Levy}) we see that there exist  constants $\tilde{C}_0,\tilde{\mathfrak{p}}>0$ such that for all $d \in \N$, $\varepsilon >0$ at most $N\leq \tilde{C}_0 d^{\tilde{\mathfrak{p}}} \varepsilon^{-2}$ weights and $n\leq \tilde{C}_0 d^{\tilde{\mathfrak{p}}} \varepsilon^{-8}$ samples suffice to guarantee
\begin{equation}
\label{eq:fullError1LevyTranslated}
\begin{aligned}
& \E[|u_d(T,\exp(\bar{X}^d)) - T_L(H^{A^{d,N},B^N}_{\widehat{W}^{d,N,n}}(\bar{X}^d)) |^2]^{1/2} \leq \varepsilon. 
\end{aligned}
\end{equation}
This follows from \eqref{eq:fullError1Levy} by choosing $N=4C_0^2 d^{2\mathfrak{p}}\varepsilon^{-2}$,  
 $n = 16c^2 C_0^4 d^{4\mathfrak{p}}\varepsilon^{-4} N^2$ and $\tilde{C}_0=\max(4C_0^2,256c^2 C_0^8),\tilde{\mathfrak{p}} = 8\mathfrak{p}$ where $c$ is a constant such that $\log(m)+1 \leq c \sqrt{m}$ for all $m \in \N$.  
\end{remark}

\begin{proof}
For fixed $d\in \N$ let $\Phi(x) = \varphi_d(\exp(x))$ and $H(x)= u_d(T,\exp(x))$ for $x \in \R^d$.  Then Proposition~\ref{prop:FeynmanKac} shows that $H(x)= \E[\Phi(x+L_T^d)]$ and, as argued in the proof of Theorem~\ref{thm:LevyApprox}, the characteristic function of $L_T^d$ satisfies the bound \eqref{eq:auxEq8}. 

\textit{Proof of (i):}  Theorem~\ref{thm:TrainingErrRegression} hence implies that there exist $k \in \N$ and $\tilde{C}_{\text{app}}>0$ such that 
\begin{equation}
\label{eq:auxEq29}
\begin{aligned}
& \E[|u_d(T,\exp(\bar{X}^d)) - T_L(H^{A^{d,N},B^N}_{\widehat{W}^{d,N,n}}(\bar{X}^d)) |^2]^{1/2} \\ & \leq \tilde{C}_{\text{app}} \max(\sigma_d^2,L) \frac{(\log(n)+1)^{1/2}\sqrt{N}}{\sqrt{n}} +   \frac{\tilde{C}_{\text{app}} \|\Phi\|_{L^1(\R^d)} (\nu+d)^{k+3}}{\sqrt{N}}
\\ & \leq C_0 d^{\mathfrak{p}} \left( \frac{ (\log(n)+1)^{1/2}\sqrt{N}}{\sqrt{n}} +   \frac{1}{\sqrt{N}} \right)
\end{aligned}
\end{equation}
with $C_0 = \tilde{C}_{\text{app}} \max(\max(c,L),c(2 \nu)^{k+3})$ and $\mathfrak{p} = p+ k+3$. This proves (i), since $k$ and $\tilde{C}_{\text{app}}$ in Theorem~\ref{thm:TrainingErrRegression} do not depend on  $d$, $n$ or $N$.

\textit{Proof of (ii):}
Let $k \in \N$ and $C_{\text{app}},C_{\text{wgt}}>0$ be as in Theorem~\ref{thm:ApproxError}, choose $\underline{p} = 2 k + \frac{1}{2} + p$, $\underline{c} =C_{\text{wgt}}c(2\nu)^{2k+\frac{1}{2}} $ and let $\overline{p} > \underline{p}$, $\overline{c} > \underline{c}$. Then  $\lambda \in \frac{1}{\sqrt{N}} [\underline{c} d^{\underline{p}}, \overline{c} d^{\overline{p}}]$ satisfies $ \frac{1}{\sqrt{N}} C_{\text{wgt}}\|\Phi\|_{L^1(\R^d)}(\nu+d)^{2k+\frac{1}{2}}  \leq  \lambda $ and hence  
Theorem~\ref{thm:TrainingErrConstrRegression} shows that there exists $C_{\text{est}}>0$ such that 
\begin{equation}
\label{eq:auxEq32}
\begin{aligned}
& \E[|u_d(T,\exp(\bar{X}^d)) - H^{A^{d,N},B^N}_{\widehat{W}^{d,N,n}_\lambda}(\bar{X}^d) |^2]^{1/2}  
\leq \frac{C_{\text{app}} \|\Phi\|_{L^1(\R^d)} (\nu+d)^{k+3}}{\sqrt{N}} +  \frac{C_{\text{est}} d^{\overline{p}+1}
}{n^{\frac{1}{4}}}.
\end{aligned}
\end{equation}
From the proof of Theorem~\ref{thm:TrainingErrConstrRegression}  (with $C_{\text{lam}} = \overline{c}$ here) the constant $C_{\text{est}}$ is given by
\[
\begin{aligned}
C_{\text{est}}^2 & = 8\overline{c}^2 (\frac{\nu M^2} {\nu-2}+ \E[|B_1|^2]) +2^{3+\frac{1}{2}} \overline{c}(\frac{\nu M^2} {\nu-2}\E[(Y_1^d)^2] + \E[|B_1|^2] \E[(Y_1^d)^2])^{1/2} + 4 \E[(Y_1^d)^4]^{1/2}
\end{aligned}
\]
and hence $C_{\text{est}} \leq d^{\frac{p}{4}} \tilde{C}_{\text{est}}$ with $\tilde{C}_{\text{est}}^2 = 8\overline{c}^2 (\frac{\nu M^2} {\nu-2}+ \E[|B_1|^2]) +2^{3+\frac{1}{2}} \overline{c} c^{\frac{1}{4}}(\frac{\nu M^2} {\nu-2} + \E[|B_1|^2])^{1/2} + 4 c^{\frac{1}{2}}$. Thus, \eqref{eq:auxEq32} yields
\begin{equation}
\label{eq:auxEq33}
\begin{aligned}
& \E[|u_d(T,\exp(\bar{X}^d)) - H^{A^{d,N},B^N}_{\widehat{W}^{d,N,n}_\lambda}(\bar{X}^d) |^2]^{1/2}  
\leq C_0 d^{\mathfrak{p}} \left(  \frac{1}{\sqrt{N}} +  \frac{1
}{n^{\frac{1}{4}}}\right)
\end{aligned}
\end{equation}
with $C_0 = \max(C_{\text{app}} c (2\nu)^{k+3} ,\tilde{C}_{\text{est}})$ and $\mathfrak{p} = \max(p+k+3,\overline{p}+1+\frac{p}{4})$. As shown in the above results (and visible from the explicit expressions available for these constants) neither $k \in \N$ nor the constants $C_{\text{app}},C_{\text{wgt}}>0$ depend on  $d$, $n$ or $N$. Hence, the constants $C_0, \mathfrak{p}$ do not depend on $d,N,n$ or $\lambda$. This proves (ii). 

\textit{Proof of (iii):} 
Let $k \in \N$, $C_{\text{app}},C_{\text{wgt}}$, $\underline{p} $, $\underline{c},C_0,\mathfrak{p} >0$ be as in the proof of (ii) and let $\overline{p} > \underline{p}$, $\overline{c} > \underline{c}$. Applying 
Proposition~\ref{prop:SGDtrained} with $C_{\text{lam}} = \overline{c}$ and using the estimate provided in the proof of (ii) (see \eqref{eq:auxEq32} and \eqref{eq:auxEq33}) for the first two terms in \eqref{eq:fullError3} we obtain that there exists $C_{\text{opt}}>0$ such that   
\begin{equation}
\label{eq:auxEq34}
\begin{aligned}
\E[|u_d(T,\exp(\bar{X}^d)) - H^{A^{d,N},B^N}_{W_{\mathcal{T}}^{d,N,n}}(\bar{X}^d) |^2]^{1/2}    
& \leq C_0 d^{\mathfrak{p}} \left(  \frac{1}{N^{\frac{1}{2}}} +  \frac{1
}{n^{\frac{1}{4}}}\right) +  \frac{C_{\text{opt}} d^{\overline{p}+2}
	N (2+\log(\mathcal{T}))^{\frac{1}{2}}}{\mathcal{T}^{\frac{1}{4}}}. 
\end{aligned}
\end{equation}
The constant $C_{\text{opt}}$ was given explicitly in the proof and we deduce that $C_{\text{opt}} \leq d^{p/4} \tilde{C}_{\text{opt}} $ with 
$\tilde{C}_{\text{opt}}^2 = 4 \max(\frac{4}{\eta_0},96\eta_0)\max(2,3M^4 \nu^2/[(\nu-2)(\nu-4)] +\E[|B_1|^4]) \max(\overline{c}^2,c^{1/2}) $. Combining this with \eqref{eq:auxEq34} proves \eqref{eq:fullError3Levy} with $C_1 = \max(C_0,\tilde{C}_{\text{opt}})$, $\mathfrak{q} = \max(\mathfrak{p}, \overline{p}+2+ \frac{p}{4})$. By the same reasoning as above $C_1, \mathfrak{q}$ do not depend on $d,N,n,\mathcal{T}$ or $\lambda$. This proves (iii).
\end{proof}

\subsection{Numerical example}
\label{sec:numerics}

In this section we consider a numerical example in which the solution $u_d(T,\cdot)$ to \eqref{eq:PIDEs2} is learnt from noisy observations. We fix $d$, $T$ and generate $n$ training data points $(X_1^d,Y_1^d),\ldots,(X_n^d,Y_n^d)$ for our experiment. The goal is then to learn $u_d(T,\cdot)$ based only on these data points, i.e.\ without using any knowledge about the underlying PDE or its parameters. This is achieved by employing neural networks with randomly generated hidden weights, as explained in detail in Section~\ref{subsec:learningProblemLevy}. 

For the unknown PDE we choose the pricing PDE for a max-call option in a $d$-dimensional Black-Scholes model with equal correlations among the assets. Thus, we fix $d=50$, choose $\varphi_d(s) = \max(\max(s_1,\ldots,s_d)-K,0)$ as initial value for the PDE and let $\Sigma^d$ be given for $i,j=1,\ldots,d$ by $\Sigma^d_{i,j} = \sigma^2 \rho $ for $i \neq j$ and $\Sigma^d_{i,i} = \sigma^2 $. Furthermore, $\tilde{\gamma}^d = 0$, $ \nu^d_\mathrm{L} = 0$ and the parameter values are chosen as $\sigma=0.2$, $\rho = 0.2$, $T=1$. The strike $K$ is chosen as $K=1$ (which corresponds to expressing prices in units of the ``actual'' strike). From the solution $u_d(T,\cdot)$ with $K=1$  on  $\mathcal{D}^d$ one can also directly obtain the solution $\tilde{u}_d(T,\cdot)$ for other values of $K$ (e.g.\ $K=100$) on the set $\{K \exp(x) \,|\, x \in [-M,M]^d \}$ by using $\tilde{u}_d(T,s) = K u_d(T,s/K)$. 
For our experiment we now select $M=1$ and generate the $i$-th data point as follows: we randomly uniformly sample $X_i^d$ on $[-1,1]^d$ and then use a Monte Carlo simulation with $5\cdot 10^6$ sample paths to calculate an approximate value of $u_d(T,\exp(X_i^d))$. $Y_i^d$ is then defined as this approximate value and corresponds to a noisy observation of $u_d(T,\cdot)$ at $\exp(X_i^d)$. By using this procedure for $i=1,\ldots, n$ we generate $n= 5\cdot 10^6$ data points (the training data). 

The goal is now to learn the solution $u_d(T,\cdot)$ to \eqref{eq:PIDEs2} based only on these (noisy)  observations. To achieve this we use random neural networks as described in Section~\ref{subsec:learningProblemLevy}. We consider different choices for the number of hidden nodes $N$. For the weight distributions we choose $A_1^d \sim t_5(0,\mathbbm{1}_d)$ and let $B_1$ have a Student's $t$-distribution with $2$ degrees of freedom (i.e. $\nu=5$ and $\pi_{\text{b}}$ is the density of a $t$-distribution with $2$ degrees of freedom). Unconstrained regression is employed to fit the output weights (see \eqref{eq:ERMLevy}), resulting in an output weight vector $\widehat{W}^{d,N,n}$ and a random neural network approximation $H^{A^{d,N},B^N}_{\widehat{W}^{d,N,n}}$ (see \eqref{eq:RandomNNLevy}) to $u_d(T,\exp(\cdot))$. 

Then we generate $n_{\mathrm{test}} = 5 \cdot 10^5$ test samples $(\bar{X}^d_1,\bar{Y}^d_1),\ldots,(\bar{X}^d_{n_\mathrm{test}},\bar{Y}^d_{n_\mathrm{test}})$ according to the same procedure that we used for the training data above. Based on these training data points we calculate the squared error  $\hat{e}^2 = \frac{1}{n_{\mathrm{test}}} \sum_{i=1}^{n_{\mathrm{test}}} (\bar{Y}^d_i - H^{A^{d,N},B^N}_{\widehat{W}^{d,N,n}}(\bar{X}^d_i))^2$. The error $\hat{e}$ is an estimate of the prediction error (see \eqref{eq:testErrorLevy}, \eqref{eq:fullError1Levy} and recall that the Monte Carlo price $\bar{Y}^d_i$ is an unbiased estimate of $u_d(T,\exp(\bar{X}^d_i))$). Figure~\ref{plot}  displays $\hat{e} = \hat{e}(N)$ for different choices of the number of hidden nodes, namely, $N \in \{1\} \cup \{10,20,\ldots,190\}$. The figure also displays the function $x \mapsto \frac{e_0}{\sqrt{x}}$, where $e_0$ is chosen as $\hat{e}(1)$. 

The theoretical results from Corollary~\ref{cor:LevyLearning} show that, for $n$ large, the theoretical prediction error decays at least as $1/\sqrt{N}$ when $N$ increases. The numerical results here reproduce this behaviour for the estimated prediction error $\hat{e}(N)$. This can be seen from Figure~\ref{plot}, where the estimated error $\hat{e}(N)$ matches closely the function $x \mapsto \frac{e_0}{\sqrt{x}}$.

\begin{figure}[h] 
	\centering
	\includegraphics[width=0.95\textwidth]{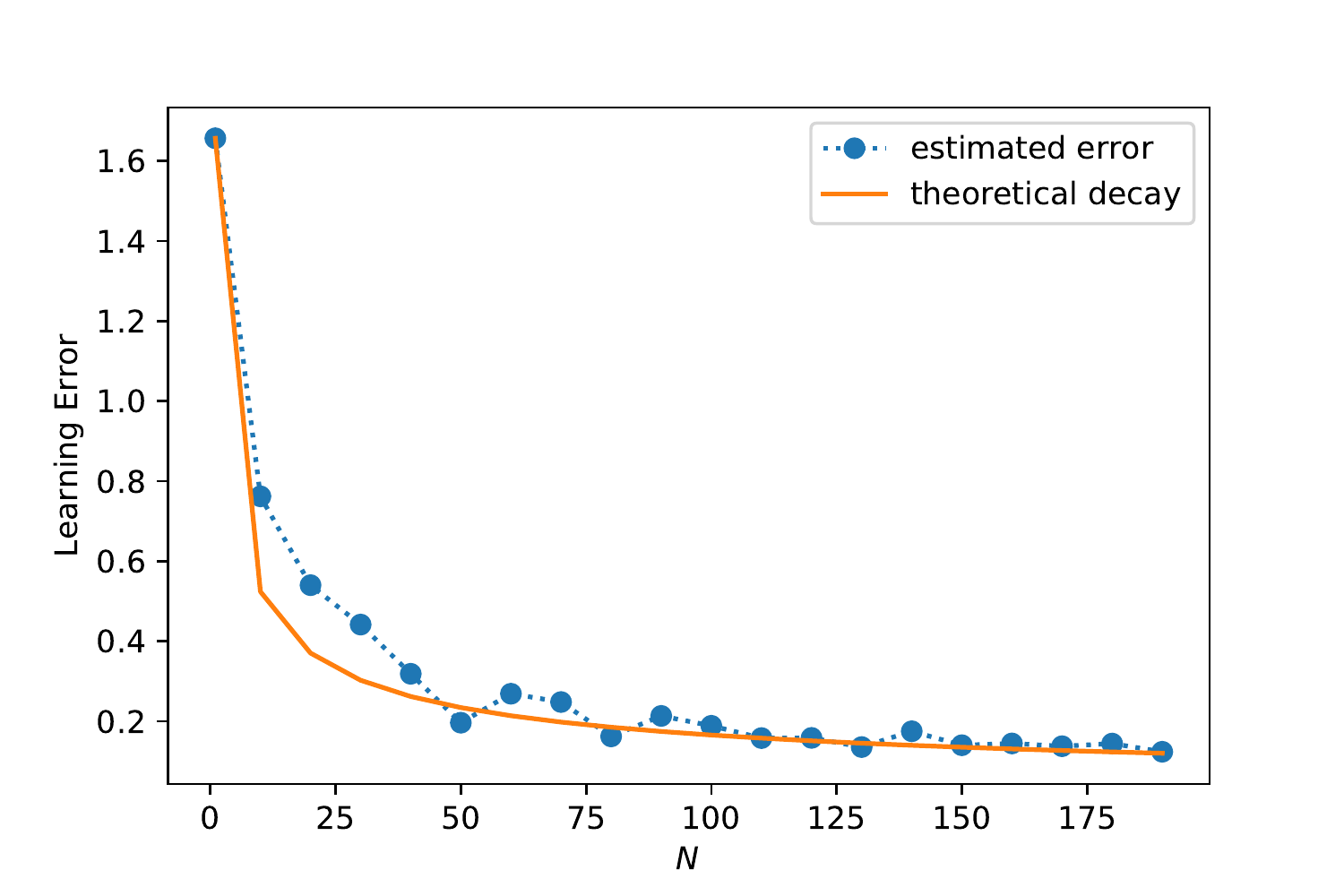}
	\caption{Plot of the estimated learning error committed when a random neural network with $N$ hidden nodes is used to learn a $50$-dimensional Black-Scholes PDE from observations. The dots show the estimated learning error $\hat{e}(N)$ for different values of $N$, the line shows the decay implied by the theoretical results $\frac{e_0}{\sqrt{N}}$ (with $e_0$ chosen as $\hat{e}(1)$).}
	\label{plot}
\end{figure}

This numerical experiment also indicates that the integrability and smoothness assumptions in Corollary~\ref{cor:LevyLearning} can potentially be relaxed. More specifically, the payoff  $\varphi_d$ considered in the example here does not satisfy the hypothesis $\varphi_d \circ \exp \in L^1(\R^d)$ and for the chosen parameters the matrix $\Sigma^d$ does not satisfy \eqref{eq:Ccond2}, since the smallest eigenvalue of $\frac{1}{2}\Sigma^d$ is smaller than $\frac{1}{2^{3/2} T \pi}$ and hence any eigenvector $\xi$ of $\frac{1}{2}\Sigma^d$ corresponding to this eigenvalue satisfies $\frac{1}{2} \xi \cdot \Sigma^d \xi < C \| \xi\|^2$ for any $C >  \frac{1}{2^{3/2} T \pi}$.  
Nevertheless, the numerical results suggest that Corollary~\ref{cor:LevyLearning}(i) is still valid in this situation. 
While Theorem~\ref{thm:RC12Linfty} may be used to establish the $N^{-1/2}$-decay in $N$ also without the hypotheses 
$\varphi_d \circ \exp \in L^1(\R^d)$ and \eqref{eq:Ccond2}, these hypotheses were needed in the proof of Theorem~\ref{thm:ApproxError} (and propagate to Corollary~\ref{cor:LevyLearning}) in order to guarantee that the constant in the error bound does not grow exponentially in $d$. 
The numerical experiment and the choice $d=50$ indicates non-exponential constants also here and hence it may be possible to relax these assumptions by taking a different approach than the one that was used in the proof of Theorem~\ref{thm:ApproxError}.

{\small 
\bibliographystyle{abbrvnat}
\bibliography{references}
}
\end{document}